\documentclass{article} 
\usepackage{iclr2026_conference,times}

\usepackage{amsmath,amsfonts,bm}

\def\eqref#1{equation~\ref{#1}}

\def\1{\bm{1}}

\DeclareMathAlphabet{\mathsfit}{\encodingdefault}{\sfdefault}{m}{sl}
\SetMathAlphabet{\mathsfit}{bold}{\encodingdefault}{\sfdefault}{bx}{n}

\usepackage{hyperref}
\usepackage{url}
\usepackage{booktabs}
\usepackage{xcolor}         
\usepackage[table]{xcolor}
\usepackage{amsmath}
\usepackage{algorithm}
\usepackage{algorithmic}
\usepackage{graphicx}
\usepackage{subcaption}
\usepackage{titletoc} 
\usepackage{multirow} 
\usepackage{threeparttable}
\usepackage{amssymb,amsthm,mathtools}
\usepackage{bbm}

\newtheorem{assumption}{Assumption}
\newtheorem{lemma}{Lemma}
\newtheorem{theorem}{Theorem}

\title{Robust Optimization for Mitigating Reward Hacking with Correlated Proxies}

\author{Zixuan Liu, Xiaolin Sun \& Zizhan Zheng \\
Department of Computer Science\\
Tulane University\\
New Orleans, LA 70118, USA \\
\texttt{\{zliu41,xsun12,zzheng3\}@tulane.edu} 
}

\iclrfinalcopy 
\begin{document}

\maketitle

\begin{abstract}
Designing robust reinforcement learning (RL) agents in the presence of imperfect reward signals remains a core challenge. In practice, agents are often trained with proxy rewards that only approximate the true objective, leaving them vulnerable to reward hacking, where high proxy returns arise from unintended or exploitative behaviors. Recent work formalizes this issue using $r$-correlation between proxy and true rewards, but existing methods like occupancy-regularized policy optimization (ORPO) optimize against a fixed proxy and do not provide strong guarantees against broader classes of correlated proxies. In this work, we formulate reward hacking as a robust policy optimization problem over the space of all $r$-correlated proxy rewards. We derive a tractable max-min formulation, where the agent maximizes performance under the worst-case proxy consistent with the correlation constraint. 
  We further show that when the reward is a linear function of known features, our approach can be adapted to incorporate this prior knowledge, yielding both improved policies and interpretable worst-case rewards. Experiments across several environments show that 
  our algorithms consistently outperform ORPO in worst-case returns, and offer improved robustness and stability across different levels of proxy-true reward correlation. These results show that our approach provides both robustness and transparency in settings where reward design is inherently uncertain. The code is available at \url{https://github.com/ZixuanLiu4869/reward_hacking}.
\end{abstract}

\section{Introduction} 
Real-world reinforcement learning (RL) systems often struggle with reward specification: it is notoriously difficult to craft a reward function that perfectly captures the intended goals in all scenarios~\citep{amodei2016concrete,ibarz2018reward,stray2024building}. In practice, designers rely on proxy rewards that approximate the true objective~\citep{tien2022causal}. However, agents optimizing these imperfect proxies can lead to unintended exploitative behaviors, achieving high proxy returns while yielding poor true outcomes, a phenomenon known as reward hacking~\citep{leike2017ai,everitt2017reinforcement,everitt2021reward,koch2021objective}. Such reward hacking behaviors are not merely hypothetical; they have led to undesirable or even catastrophic consequences in safety-critical settings (e.g., autonomous driving)~\citep{krakovna2018penalizing,knox2023reward} and are alarmingly common in real-world deployments~\citep{kleinberg2024challenge,franchi2023detecting,milli2021optimizing,obermeyer2019dissecting}. Beyond reward hacking, interpretability and transparency of RL policies are increasingly recognized as critical requirements for real-world acceptance~\citep{vouros2022explainable,puiutta2020explainable,iyer2018transparency}. Policymakers and practitioners in safety-critical domains require systems not only to be robust but also interpretable; they must understand which specific decision-making criteria lead to undesirable outcomes to effectively mitigate risks and ensure compliance with safety regulations~\citep{rudin2019stop,druce2021explainable,doshi2017towards}. These challenges highlight the need for RL algorithms to address two fundamental challenges: robustness to uncertain or poorly-specified rewards, and interpretability to facilitate oversight and compliance by human stakeholders, especially in high-stakes, real-world environments like traffic control~\citep{vinitsky2018benchmarks}, healthcare decision-making~\citep{fox2020deep,holzinger2017we}, and pandemic response strategies~\citep{kompella2020reinforcement}.

Recent work has begun to formalize reward hacking and develop principled mitigations. \citep{laidlaw2024correlated} define a proxy reward to be \emph{r-correlated} with the true reward if it maintains a correlation coefficient $r > 0$ on state-action pairs encountered by a certain reference policy. Notably, their definition permits the proxy and true reward to diverge arbitrarily in parts of the state-action space not visited by the reference policy, precisely the regions an RL agent might exploit under intensive optimization. Using this framework, reward hacking is formalized as the situation in which optimizing an $r$-correlated proxy yields a policy with lower true reward than that of the reference policy. Building on this definition, they propose Occupancy-Regularized Policy Optimization (ORPO) as a mitigation strategy. ORPO augments the standard RL objective with a regularization term that penalizes deviations between the learned policy’s occupancy measure and that of the reference policy. 


Despite significant progress, existing solutions to reward hacking show several limitations. First, their effectiveness relies heavily on the choice of the specific proxy reward. However, designing perfect proxies is challenging, and in real-world scenarios, reward proxies are often derived heuristically or empirically from noisy or limited data~\citep{jeon2020reward,sadigh2017active}, leading to uncertainty or variability in the exact correlation with true rewards. Therefore, robustness to variations in proxy rewards is crucial for dependable deployment. While the regularization method used by ORPO provides a lower bound on improvement in true reward, its guarantee on the worst-case performance against an adversarially chosen proxy is weak. Second, current methods like ORPO typically treat a reward function as a black box and learn a complex policy with no easily interpretable structure, making it hard to understand why the resulting policy avoids reward hacking or to trust its behavior in novel situations. Further, they cannot be easily adapted to incorporate prior knowledge of the true reward. These shortcomings underscore the need for a more robust and transparent approach to reward hacking in RL.

In this work, we formalize reward hacking as a robust RL problem under proxy reward uncertainty and develop new algorithms to address the above gaps. The key idea is to optimize against an adversarial proxy reward rather than trusting a single proxy. We assume the true reward could be any function that remains $r$-correlated with the proxy, and we train the agent to perform well against the worst-case such proxy. This approach explicitly accounts for uncertainty in proxy design and guards against unintended exploitative behaviors. Concretely, we propose a max-min formulation in which the policy chooses its strategy to maximize its guaranteed true return while an adversary minimizes the true return by selecting a reward function from the set of all $r$-correlated proxies. By solving this problem, the agent learns a policy that is robust to all plausible deviations of the proxy reward within the correlation bound. We derive a closed-form solution for the adversary’s worst-case reward assignment given any candidate policy, which allows efficient evaluation of the inner minimization and provides insight into how proxy reward flaws are most damaging. Building on this result, we introduce a practical algorithm for Max-Min Policy Optimization that iteratively updates the policy against this worst-case reward signal. Moreover, to improve the tractability and transparency of the inner optimization, we introduce a Linear Max-Min variant of our method. In this variant, we assume the true reward lies in a class of linear functions over known features, an assumption that has been extensively studied in prior work on successor representations and successor features~\citep{dayan1993improving,barreto2017successor,barreto2018transfer}, and which allows us to characterize the worst-case proxy reward as a sparse linear combination of those features. While the policy itself remains parameterized by general neural networks, the learned worst-case reward function becomes interpretable in terms of its feature weights. This provides insight into which aspects of the proxy reward space the policy is robust to or vulnerable against, making it valuable for applications where understanding the failure modes of the reward design is important.

Finally, we empirically evaluate the proposed approaches on several challenging environments. Across all domains, our Max-Min and Linear Max-Min policies outperform ORPO in terms of worst-case reward, 
indicating substantially improved robustness. Moreover, under a large range of proxy-true correlation scenarios, our methods exhibit higher average reward and lower variance compared to ORPO, meaning the performance of our policies remains more consistent and reliable. These findings demonstrate the practical significance of our robust formulation, paving the way for safer and more trustworthy RL deployment in real-world applications.

Our main contributions can be summarized as follows: 1) We propose a novel robust RL formulation that explicitly models reward hacking as a max-min optimization problem over proxy rewards constrained by correlation with the true rewards. 2) We develop a practical algorithm for the max-min problem, which is further extended to linear rewards with improved robustness and interpretability. 3) We provide a theoretical convergence guarantee for the max-min objective with a sample-complexity bound for the occupancy estimation. We also show that accurate occupancy estimation is pivotal for robustness. 4) Experiment results demonstrate improved robustness and worst-case rewards across five real-world inspired reward hacking environments.

\section{Preliminaries}
\label{sec:preliminaries}
\textbf{Reinforcement Learning.} A reinforcement learning (RL) problem can be formulated as 
an infinite-horizon Markov Decision Process (MDP) defined by the tuple $(\mathcal{S}, \mathcal{A}, p, \mu_0, R, \gamma)$, where $\mathcal{S}$ and $\mathcal{A}$ denote the state and action spaces, $p(s'\mid s, a)$ is the transition probability from state $s$ to $s'$ given action $a$, $\mu_0$ is the initial state distribution and $\gamma \in [0,1)$ is the discounted factor. The agent interacts with the environment over discrete time steps $t = 0, 1, 2, \dots$. At each time step, it selects an action $a_t \in \mathcal{A}$ based on the current state $s_t \in \mathcal{S}$ according to a policy $\pi(a \mid s)$, which defines a distribution over actions conditioned on the state. Upon taking action $a_t$, the agent receives a reward $R(s_t, a_t) \in \mathbb{R}$ and transitions to the next state $s_{t+1}$ according to $p(s_{t+1} \mid s_t, a_t)$. The goal of the agent is to maximize the expected cumulative discounted return:
\begin{equation}
\label{eq:return}
J(\pi, R) = (1 - \gamma) \, \mathbb{E}_\pi \left[ \sum_{t=0}^{\infty} \gamma^t R(s_t, a_t) \right],
\end{equation}

\vspace{-2ex}
where $\gamma \in [0,1)$ is the discount factor, and the expectation is taken over trajectories generated by following policy $\pi$. We define the \emph{state-action occupancy measure} $\mu_\pi$ of a policy $\pi$ as: $\mu_\pi(s, a) = (1 - \gamma) \, \mathbb{E}_\pi \left[ \sum_{t=0}^{\infty} \gamma^t \mathbb{I}\{s_t = s, a_t = a\} \right]$, which represents the discounted visitation frequency of each state-action pair under policy $\pi$. Using the occupancy measure, the return can be equivalently expressed as: $J(\pi, R) = \mathbb{E}_{(s,a) \sim \mu_\pi}[R(s,a)]$.

\textbf{Correlated Proxies and Reward Hacking.} Below we give an overview of the recently proposed $r$-correlated proxy framework proposed in~\citep{laidlaw2024correlated} for detecting and mitigating reward hacking, which our work is built upon. A detailed discussion of related work on reward hacking and robust RL is given in Appendix~\ref{sec:appendix}. 
In particular, they consider a setting where the agent is given a reference policy $\pi_{\text{ref}}$ and a proxy reward $R_{\text{proxy}}$, while the true reward is hidden.
They further assume that the proxy reward is \emph{$r$-correlated} with the true reward under the reference policy, that is:
\begin{equation}
\label{eq:r_correlated}
\mathbb{E}_{\mu_{\pi_{\text{ref}}}} \left[
\left( \frac{R_{\text{proxy}} - J(\pi_{\text{ref}}, R_{\text{proxy}})}{\sigma_{R_{\text{proxy}}}} \right)
\left( \frac{R_{\text{true}} - J(\pi_{\text{ref}}, R_{\text{true}})}{\sigma_{R_{\text{true}}}} \right)
\right] = r,
\end{equation}
where $\sigma_{R_{\text{proxy}}}^2 = \mathbb{E}_{\mu_{\pi_{\text{ref}}}} \left[ \left( R_{\text{proxy}} - J(\pi_{\text{ref}}, R_{\text{proxy}}) \right)^2 \right]$ and $\sigma_{R_{\text{true}}}^2 = \mathbb{E}_{\mu_{\pi_{\text{ref}}}} \left[ \left( R_{\text{true}} - J(\pi_{\text{ref}}, R_{\text{true}}) \right)^2 \right]$ are the variances of the proxy and true rewards, respectively, under the reference policy. Reward hacking is said to occur when a policy $\pi$ optimized for an $r$-correlated proxy reward achieves lower true reward than the reference policy: $J(\pi, R_{\text{true}}) < J(\pi_{\text{ref}}, R_{\text{true}})$. To mitigate reward hacking, they propose Occupancy-Regularized Policy Optimization (ORPO) to optimize a regularized policy objective given below, which is shown to provide a lower bound on improvement in true reward:
\begin{equation}
\label{eq:orpo_object}
\max_\pi \, J(\pi, R_{\text{proxy}}) - \lambda \sqrt{\chi^2(\mu_\pi \,\|\, \mu_{\pi_{\text{ref}}})},
\end{equation}

\vspace{-2ex}
where $\chi^2(\mu_\pi \,\|\, \mu_{\pi_{\text{ref}}})$ denotes the $\chi^2$-squared divergence between the occupancy measures of $\pi$ and $\pi_{\text{ref}}$, and the regularization strength $\lambda = \sigma_{R_{\text{proxy}}} \sqrt{1 - r^2}$. This encourages the learned policy to stay close to the reference distribution when the proxy reward is weakly correlated with the true reward.

\section{Method}

In this section, we discuss our robust policy optimization approach for mitigating reward hacking. In contrast to regularization-based methods such as ORPO, we consider a max-min formulation that identifies a robust policy with respect to the worst-case reward across all reward functions that are $r$-correlated with the proxy reward. We further extend our framework to settings where the reward function is a linear combination of known features with unknown weights. Our approach effectively leverages this structural information, when known a priori, to improve both robustness and interpretability, a task that is particularly challenging for regularization-based techniques.

\subsection{Max-Min Policy Optimization}
\label{sec:method_maxmin}
Similar to ORPO, we assume that the agent is given a proxy reward $R_{\text{proxy}}$ and a reference policy $\pi_{\text{ref}}$, while the true reward is hidden. Rather than regularizing the policy under a fixed proxy reward, we consider the \emph{entire space of rewards} $\mathcal{R}_{\text{corr}}$ that satisfy the correlation constraint with respect to a known proxy reward, as defined in Equation~\ref{eq:corr}:
\begin{equation}
\label{eq:corr}
\mathcal{R}_{\text{corr}} = \left\{ R : (s,a) \to \mathbb{R} \,\middle|\, \mathbb{E}_{\mu_{\pi_{\text{ref}}}}\left[\frac{R-M}{V} \cdot R_{\text{proxy}}\right] = r, \, J(\pi_{\text{ref}}, R) = M, \, \sigma_R^2 = V^2 \right\}.
\end{equation}
$M$ and $V$ denote the fixed mean and standard deviation of the reward function $R$ under the reference policy $\pi_{\text{ref}}$. For simplicity, we define $R_{\text{proxy}}$ to be the normalized proxy reward $R_{\text{proxy}}(s,a) := \frac{\tilde{R}_{\text{proxy}}(s,a) - J(\pi_{\text{ref}}, \tilde{R}_{\text{proxy}})}{\sigma_{\tilde{R}_{\text{proxy}}}}$,
where $\tilde{R}_{\text{proxy}}$ is the original (unnormalized) proxy reward. After normalization, we have $J(\pi_{\text{ref}}, R_{\text{proxy}}) = 0$ and $\text{Var}_{\mu_{\pi_{\text{ref}}}}(R_{\text{proxy}}) = 1$, which simplifies the correlation constraint in Equation~\ref{eq:corr}. The hyperparameter $r$ controls the degree of alignment between the proxy and true reward. It allows us to interpolate between strong robustness (small $r$) and high proxy fidelity (large $r$), enabling a principled robustness-accuracy trade-off. We remark that it is without loss of generality to consider fixed $M$ and $V$, which we will further elaborate on later.

We propose a \emph{worst-case optimization framework} where the policy is trained to maximize expected performance under the least favorable reward within $\mathcal{R}_{\text{corr}}$. Assuming that the true reward lies somewhere within this set, this approach improves robustness by ensuring that the policy does not overfit to any single optimistic interpretation of the proxy reward. Formally, the objective becomes:
\begin{equation}
\label{eq:maxmin}
\max_{\pi} \min_{R \in \mathcal{R}_{\text{corr}}} J(\pi, R) = \max_{\pi} \min_{R \in \mathcal{R}_{\text{corr}}} \mathbb{E}_{(s,a) \sim \mu_\pi}[R(s,a)].
\end{equation}

\vspace{-2ex}
However, a challenge arises: the objective $\mathbb{E}_{\mu_\pi}[R(s,a)]$ depends on the state-action occupancy $\mu_\pi$, whereas the constraints defining $\mathcal{R}_{\text{corr}}$ are expressed in terms of $\mu_{\pi_{\text{ref}}}$. This mismatch complicates direct optimization. 
 To resolve this, we apply a \emph{change-of-measure} technique~\citep{hu2013kullback,lam2016robust} to rewrite the expectation under $\mu_{\pi_{\text{ref}}}$. Specifically, let $L(s,a)$ denote the Radon-Nikodym derivative: $L(s,a) = \frac{\mu_\pi(s,a)}{\mu_{\pi_{\text{ref}}}(s,a)}$. By definition, $L(s,a) \geq 0$ and $\mathbb{E}_{\mu_{\pi_{\text{ref}}}}[L(s,a)] = 1$. Applying the change-of-measure formula, we can express the return as: $\mathbb{E}_{\mu_\pi}[R(s,a)] 
= \int_{\mathcal{S} \times \mathcal{A}} \mu_\pi(s,a) R(s,a) \, d(s,a)
= \int_{\mathcal{S} \times \mathcal{A}} \mu_{\pi_{\text{ref}}}(s,a) \frac{\mu_\pi(s,a)}{\mu_{\pi_{\text{ref}}}(s,a)} R(s,a) \, d(s,a)
= \mathbb{E}_{\mu_{\pi_{\text{ref}}}}[L(s,a) R(s,a)].$ Thus, both the objective and the constraints can be rewritten as expectations with respect to $\mu_{\pi_{\text{ref}}}$.

For notational simplicity, we will suppress variables $(s,a)$ and write for example, $L$ as $L(s,a)$. Under this reparameterization, the inner minimization in Equation~\ref{eq:maxmin} can be reformulated as:
\begin{equation}
\label{eq:inner}
\min_{R \in \mathcal{R}_{\text{corr}}} \mathbb{E}_{\mu_{\pi_{\text{ref}}}}[L \cdot R].
\end{equation}

\vspace{-2ex}
Although the feasible set in Problem~\ref{eq:inner} is not convex due to the equality constraint on the variance, we still derive an optimal solution using a Lagrangian formulation. Our approach leverages tools from duality theory, commonly used in robust optimization~\citep{delage2010distributionally,goh2010distributionally}. We further justify the validity of our solution in Appendix~\ref{sec:optimal}. Specifically, the Lagrangian functional associated with this problem is defined as:
$l_0(\lambda_1, \lambda_2, \lambda_3, R)= \mathbb{E}_{\mu_{\pi_{\text{ref}}}}[L \cdot R - \lambda_1 \frac{R-M}{V} \cdot R_{\text{proxy}} - \lambda_2 R - \lambda_3 R^2] + \lambda_1 r + \lambda_2 M +\lambda_3 (M^2+V^2)$, where $\lambda_1, \lambda_2, \lambda_3$ are the Lagrange multipliers corresponding to the correlation constraint, mean constraint, and variance constraint, respectively. Then the original problem in Equation~\ref{eq:inner} is equivalent to the following problem: 
\begin{equation}
\label{eq:lagrangian_dual}
\max_{\lambda_1, \lambda_2, \lambda_3} \min_{R \in \mathcal{R}_{\text{corr}}} l_0(\lambda_1, \lambda_2, \lambda_3, R).
\end{equation}

\vspace{-2ex}
We now solve the inner minimization problem in Equation~\ref{eq:lagrangian_dual} by finding the optimal $R$ for fixed dual variables $(\lambda_1, \lambda_2, \lambda_3)$. Taking the functional derivative of the Lagrangian $l_0$ with respect to $R(s,a)$ gives: $\frac{\partial l_0}{\partial R} = \mu_{\pi_{\text{ref}}}(s,a)[(L - \lambda_1 \frac{R_{\text{proxy}} }{V}- \lambda_2) - 2 \lambda_3 R]$. When $\mu_{\pi_{\text{ref}}}(s,a) > 0$, setting the derivative of the Lagrangian to zero yields the optimal adversarial reward function:
\begin{equation}
\label{eq:optimalr}
R^*(s,a) = \frac{L(s,a) - \lambda_1 \frac{R_{\text{proxy}} }{V} - \lambda_2}{2\lambda_3}.
\end{equation}
However, for state-action pairs where $\mu_{\pi_{\text{ref}}}(s,a) = 0$, i.e., those not visited under the reference policy, the correlation and moment constraints become vacuous. In these regions, the adversarial reward $R^*(s,a)$ can be driven arbitrarily poor, 
reflecting that no constraint prevents the adversary from assigning highly penalizing values to rarely visited or unobserved state-action pairs. Nevertheless, consider the case where $\mu_{\pi_{\text{ref}}}(s,a) > 0$, we can substitute the optimal $R^*$ from Equation~\ref{eq:optimalr} into the Lagrangian $l_0$ and get the dual objective. After some process detailed in Appendix~\ref{sec:dual_maxmin}, we get the optimal solution to problem~(\ref{eq:inner}), so the original max-min problem~(\ref{eq:maxmin}) reduces to:
\begin{equation}
\label{eq:maxmin_object}
\max_\pi \,\, r\cdot V\cdot\mathbb{E}_{\mu_\pi}[R_{\text{proxy}}] - V\cdot\sqrt{1-r^2} \sqrt{ \chi^2(\mu_\pi \,\|\, \mu_{\pi_{\text{ref}}}) - \mathbb{E}_{\mu_\pi}^2[R_{\text{proxy}}] }+M.
\end{equation}

\vspace{-2ex}
Thus, the final policy optimization objective becomes maximizing the proxy reward, regularized by a penalty that depends on the distributional shift between $\mu_\pi$ and $\mu_{\pi_{\text{ref}}}$ and the expectation of the current policy under proxy reward $\mathbb{E}_{\mu_\pi}[R_{\text{proxy}}]$, and the correlation strength $r$. We observe that the constants $M$ and $V$ do not affect the optimal policy: while they influence the absolute value of the worst-case reward for a given policy $\pi$, they only apply a linear transformation (scaling by $V$ and shifting by $M$) and do not change the relative ordering of policies. Therefore, for simplicity, we set $V = 1$ and $M = 0$ in our implementation. This also provides a fair way to compare the worst-case rewards of different policies. Notice that the optimization objective in Equation~\ref{eq:maxmin_object} closely resembles the ORPO objective proposed in Equation~\ref{eq:orpo_object}. However, there are two key differences: (1) our regularization strength is $\frac{\sqrt{1-r^2}}{r}$ instead of $\sigma_{R_{\text{proxy}}}\sqrt{1-r^2}$, and (2) our penalty term is $\chi^2(\mu_\pi \,\|\, \mu_{\pi_{\text{ref}}}) - \mathbb{E}_{\mu_\pi}^2[R_{\text{proxy}}]$ rather than simply $\chi^2(\mu_\pi \,\|\, \mu_{\pi_{\text{ref}}})$. The proof that $\chi^2(\mu_\pi \,\|\, \mu_{\pi_{\text{ref}}}) - \mathbb{E}_{\mu_\pi}^2[R_{\text{proxy}}] \geq 0$ holds can be found in Appendix~\ref{sec:kai_meansquare}. A detailed comparison between our policy gradient and that of ORPO is provided in Appendix~\ref{sec:policy_gradient}.

To further illustrate how our framework in Equation~\ref{eq:maxmin_object} helps prevent reward hacking, i.e., how optimizing a proxy reward can translate into an improvement in the true reward over the reference policy, as discussed in Section~\ref{sec:preliminaries}, we formalize the following theorem:

\begin{theorem}
\label{theorem:upper_bound}
    Suppose that the true reward function $R_{\text{true}}$ lies in the correlation-constrained uncertainty set $\mathcal{R}_{\text{corr}}$. Then, for any policy $\pi$ such that $\mu_\pi \ll \mu_{\pi_{\text{ref}}}$ (i.e., $\mu_{\pi_{\text{ref}}}(s,a) = 0 \Rightarrow \mu_\pi(s,a) = 0$), we have
    \[
    J(\pi, R_{\text{true}}) - J(\pi_{\text{ref}}, R_{\text{true}})
    \;\ge\;
    r \cdot \mathbb{E}_{\mu_\pi}[R_{\text{proxy}}]
    - \sqrt{1-r^2}\,\sqrt{\chi^2\!\big(\mu_\pi \,\|\, \mu_{\pi_{\text{ref}}}\big)
    - \mathbb{E}_{\mu_\pi}^2[R_{\text{proxy}}]}.
    \]
\end{theorem}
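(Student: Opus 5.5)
The plan is to avoid the Lagrangian machinery and prove the bound directly, by decomposing $R_{\text{true}}$ along $R_{\text{proxy}}$ and bounding the orthogonal remainder with Cauchy--Schwarz. As in the paper's implementation, I take the normalization $V=1$ (and $M$ arbitrary, since $M=J(\pi_{\text{ref}},R_{\text{true}})$ cancels on the left-hand side). For general $V$ the same argument gives the right-hand side multiplied by $V$.

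\textbf{Step 1: change of measure.} Because $\mu_\pi \ll \mu_{\pi_{\text{ref}}}$, the density $L=\mu_\pi/\mu_{\pi_{\text{ref}}}$ is well defined. It satisfies $L\ge 0$ and $\mathbb{E}_{\mu_{\pi_{\text{ref}}}}[L]=1$, so $J(\pi,R)=\mathbb{E}_{\mu_{\pi_{\text{ref}}}}[LR]$. Moreover $\chi^2(\mu_\pi\|\mu_{\pi_{\text{ref}}})=\mathbb{E}_{\mu_{\pi_{\text{ref}}}}[(L-1)^2]$. If this quantity is infinite the bound is vacuous, so I assume it is finite, which places $L$ in $L^2(\mu_{\pi_{\text{ref}}})$.

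\textbf{Step 2: decomposition.} Work in $L^2(\mu_{\pi_{\text{ref}}})$, where $1$ and $R_{\text{proxy}}$ are orthonormal because $R_{\text{proxy}}$ is normalized. Set
\[
Z := \frac{R_{\text{true}}-M - rR_{\text{proxy}}}{\sqrt{1-r^2}}.
\]
The constraints defining $\mathcal{R}_{\text{corr}}$ (mean $M$, variance $1$, correlation $r$) give three facts: $\mathbb{E}[Z]=0$, $\mathbb{E}[ZR_{\text{proxy}}]=0$, and $\mathbb{E}[Z^2]=(1-2r^2+r^2)/(1-r^2)=1$. If $r=1$, then $R_{\text{true}}=M+R_{\text{proxy}}$ almost surely and the claim is immediate. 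Taking $r<1$ in general, we get
\[
J(\pi,R_{\text{true}})-J(\pi_{\text{ref}},R_{\text{true}}) = r\,\mathbb{E}_{\mu_\pi}[R_{\text{proxy}}] + \sqrt{1-r^2}\,\mathbb{E}_{\mu_{\pi_{\text{ref}}}}[LZ].
\]

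\textbf{Step 3: the key bound.} Let $c:=\mathbb{E}_{\mu_{\pi_{\text{ref}}}}[LR_{\text{proxy}}]=\mathbb{E}_{\mu_\pi}[R_{\text{proxy}}]$. Since $Z$ is orthogonal to both $1$ and $R_{\text{proxy}}$,
\[
\mathbb{E}[LZ]=\mathbb{E}[(L-1-cR_{\text{proxy}})Z].
\]
Cauchy--Schwarz with $\mathbb{E}[Z^2]=1$ then gives $\mathbb{E}[LZ]\ge -\|L-1-cR_{\text{proxy}}\|_2$. Expanding the norm, using $\mathbb{E}[(L-1)R_{\text{proxy}}]=c$ and $\mathbb{E}[R_{\text{proxy}}^2]=1$:
\[
\|L-1-cR_{\text{proxy}}\|_2^2=\chi^2-2c^2+c^2=\chi^2-c^2.
\]
This gives the stated inequality. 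It also shows that $\chi^2-c^2\ge 0$, since it is a squared norm.

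The only real obstacle is spotting the projection step, namely subtracting the component of $L$ lying in $\mathrm{span}\{1,R_{\text{proxy}}\}$ before applying Cauchy--Schwarz. Applying Cauchy--Schwarz to $\mathbb{E}[LZ]$ directly would only yield the weaker ORPO-type penalty $\sqrt{\chi^2}$. The remaining points are routine: the measure-theoretic justification (absolute continuity, finiteness of $\chi^2$) and the degenerate cases $r=1$ and $\|L-1-cR_{\text{proxy}}\|_2=0$.
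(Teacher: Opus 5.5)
Your argument is correct, and it reaches the bound by a genuinely different route from the paper.

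The paper's proof of the theorem is just the feasibility observation: $R_{\text{true}}\in\mathcal{R}_{\text{corr}}$, so $\Delta J(\pi,R_{\text{true}})$ is at least the inner minimum. All the work sits in the closed-form value of that minimum. Appendix~\ref{sec:dual_maxmin} obtains it by forming a Lagrangian, solving the three stationarity equations for $(\lambda_1,\lambda_2,\lambda_3)$, and choosing the negative root for $\lambda_3$. Appendix~\ref{sec:optimal} then has to argue separately that this stationary point is a global minimizer despite the nonconvex (spherical) feasible set.

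You skip the dual entirely. Stripping off the components of $R_{\text{true}}$ along $1$ and $R_{\text{proxy}}$ leaves a unit vector $Z$ orthogonal to both, and Cauchy--Schwarz against the projected density $L-1-cR_{\text{proxy}}$ gives the penalty directly. This is the same ``linear functional on a sphere'' geometry the paper invokes only qualitatively to justify global optimality. You use it to compute the value itself, so no multipliers, sign selection, or nonconvex-optimality argument is needed.

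Your version has two further advantages:
\begin{itemize}
\item It covers the cases where the paper's closed form divides by zero: $r=1$, or $\chi^2=c^2$ so that $\lambda_3=0$ (e.g.\ $\pi=\pi_{\text{ref}}$).
\item It gives $\chi^2\ge c^2$ as a by-product, instead of via the separate argument of Appendix~\ref{sec:kai_meansquare}.
\end{itemize}

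What the paper's route supplies, and your inequality does not state, is the explicit adversarial reward and the fact that the bound is attained. That is, the right-hand side is exactly the inner minimum used as the training objective. This drops out of the equality case of your Cauchy--Schwarz step. Taking $Z=-(L-1-cR_{\text{proxy}})/\sqrt{\chi^2-c^2}$ yields the feasible reward $M+rR_{\text{proxy}}-\sqrt{1-r^2}\,(L-1-cR_{\text{proxy}})/\sqrt{\chi^2-c^2}$. This is precisely the paper's $R^*$ with its optimal multipliers substituted in.
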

Proof can be found in Appendix~\ref{sec:proof_theorem}. From Theorem~\ref{theorem:upper_bound}, we see that our objective optimizes a pessimistic lower bound on the true improvement over the reference policy. By Definition 4.2 in~\citep{laidlaw2024correlated}, reward hacking occurs when $J(\pi, R_{\text{true}}) < J(\pi_{\text{ref}}, R_{\text{true}})$. Although this is precisely the quantity we would like to maximize, we cannot do so directly because the true reward is unobserved, and therefore we must instead optimize the max–min objective in Equation~\ref{eq:maxmin_object}. Theorem~\ref{theorem:upper_bound} shows that our objective is always lower than (but anchored to) the true improvement, which explains why our framework can promote robustness against potential reward hacking: improving our surrogate objective necessarily improves a conservative bound on $J(\pi, R_{\text{true}}) - J(\pi_{\text{ref}}, R_{\text{true}})$.

\textbf{Remark:} Our optimization problem in Equation~\ref{eq:maxmin} is standard in distributionally robust optimization (DRO). However, it remains underexplored in the context of RL, with only one relevant work that considers uncertainty sets based on the first and second moments of the reward distribution~\citep{nguyen2022distributionally}.  While their formulation appears similar, their results are not directly applicable to our max-min framework, and we still need to explicitly solve our formulation. We also note that under certain assumptions, the ORPO objective (Equation~\ref{eq:orpo_object}) can be reinterpreted as a special case of the max-min formulation in~\citep{nguyen2022distributionally} (Theorem 1), providing a complementary view of the connection between these approaches. Nevertheless, our optimization objective remains structurally different. Moreover, in the pessimism offline RL setting, where distribution shift is the central challenge, the $\chi^2$ regularization together with maxmin formulation has also been explored~\citep{zhan2022offline,huang2024correcting} from a perspective different from ours. However, frameworks such as $\chi$PO~\citep{huang2024correcting} are not applicable in our setting because they require the regularizer to be $f$-divergence. The square-root term in Equation~\ref{eq:maxmin_object} does not satisfy this requirement.

\subsection{Structured Reward Spaces via Feature Linearization}
\label{sec:method_linearmaxmin}

A natural concern with worst-case optimization is \emph{over-conservatism}: if the reward uncertainty set $\mathcal{R}_{\text{corr}}$ is too broad, the resulting policy may become overly cautious or deviate from realistic task objectives. Additionally, the learned worst-case rewards may themselves be implausible or uninterpretable. To address these issues, we introduce \emph{structure} into the reward space by assuming that all rewards are \emph{linear combinations of known features}, an assumption that has been widely adopted in prior work~\citep{dayan1993improving,barreto2017successor,barreto2018transfer}. Specifically, we assume: $R(s,a) = \boldsymbol{\theta}^\top \boldsymbol{\phi}(s,a),$
where $\boldsymbol{\phi}(s,a) = [\phi_1(s,a), \phi_2(s,a), \dots, \phi_M(s,a)]^\top \in \mathbb{R}^M$ denotes a vector of $M$ known or engineered feature functions, and $\boldsymbol{\theta} = [\theta_1, \theta_2, \dots, \theta_M]^\top \in \mathbb{R}^M$ represents the uncertain feature weights. The linearization yields two key benefits: 1) \textbf{Realism and Interpretability:} In many real-world tasks, reward functions are naturally approximated as linear combinations over interpretable features. For example, in a traffic control environment, features might include total commute time, vehicle speed, acceleration, and inter-vehicle headway distances.  2) \textbf{Better-Constrained Robustness:} By restricting uncertainty to structured, feature-based rewards, the worst-case optimization problem becomes more grounded and avoids pathological, unrealistic reward functions.

In this section, we assume that the agent is aware of the set of features but not their true weights. We show that our robust optimization framework can be naturally extended to incorporate the structure in rewards to improve robustness. In our experiments, we further demonstrate that linear rewards help interpret a policy's performance even when it is trained without such prior knowledge. Under our assumption, the uncertainty set reduces to the set of feature weights $\boldsymbol{\theta} \in \mathbb{R}^M$ satisfying:
\begin{equation}
\label{eq:r_corr_lin_compact}
\mathcal{R}_{\text{corr}}^{\text{lin}} = \left\{ \boldsymbol{\theta} \in \mathbb{R}^M \,\middle|\,
\mathbb{E}_{\mu_{\pi_{\text{ref}}}}[\boldsymbol{\theta}^\top \boldsymbol{\phi} \cdot R_{\text{proxy}}] = r,\;
\mathbb{E}_{\mu_{\pi_{\text{ref}}}}[\boldsymbol{\theta}^\top \boldsymbol{\phi}] = 0,\;
\mathbb{E}_{\mu_{\pi_{\text{ref}}}}[(\boldsymbol{\theta}^\top \boldsymbol{\phi})^2] = 1
\right\}.
\end{equation}

\vspace{-0.5ex}
To simplify the analysis, we assume without loss of generality that the worst-case reward $R(s,a) = \boldsymbol{\theta}^\top \boldsymbol{\phi}(s,a)$ is normalized to have zero mean and unit variance under the reference policy $\pi_{\text{ref}}$. This corresponds to setting $M = 0$ and $V = 1$, which, as shown in our earlier derivation, does not affect the resulting optimal policy. As before, $R_{\text{proxy}}$ denotes the normalized proxy reward under $\pi_{\text{ref}}$, satisfying $\mathbb{E}_{\mu_{\pi_{\text{ref}}}}[R_{\text{proxy}}] = 0$ and $\text{Var}_{\mu_{\pi_{\text{ref}}}}[R_{\text{proxy}}] = 1$.

We now derive the corresponding max-min optimization under the structured reward assumption:
\begin{equation}
\label{eq:linear_maxmin}
\max_{\pi} \min_{\boldsymbol{\theta} \in \mathcal{R}_{\text{corr}}^{\text{lin}}, \, \boldsymbol{\theta} \geq 0} \mathbb{E}_{(s,a) \sim \mu_\pi}\left[\boldsymbol{\theta}^\top \boldsymbol{\phi}(s,a)\right].
\end{equation}

\vspace{-2ex}
Similar to previous steps, we introduce the Radon-Nikodym derivative $L(s,a) = \frac{\mu_\pi(s,a)}{\mu_{\pi_{\text{ref}}}(s,a)}$, use a change-of-measure, and define the Lagrangian functional for the inner minimization in Equation~\ref{eq:linear_maxmin} as: $l_1(\lambda_1, \lambda_2, \lambda_3, \boldsymbol{\theta}) = \boldsymbol{\theta}^\top \left( \sum_{(s,a)} u_{\lambda_1, \lambda_2}(s,a) \boldsymbol{\phi}(s,a) \right) - \lambda_3 \boldsymbol{\theta}^\top Q \boldsymbol{\theta} + \lambda_1 r + \lambda_3$,
where $u_{\lambda_1,\lambda_2}=\mu_\pi-\lambda_1\mu_{\pi_{\text{ref}}}R_{\text{proxy}}-\lambda_2\mu_{\pi_{\text{ref}}}$, $Q = \sum_{(s,a)} \mu_{\pi_{\text{ref}}}(s,a) \boldsymbol{\phi}(s,a) \boldsymbol{\phi}(s,a)^\top$. A detailed derivation can be found in Appendix~\ref{sec:lag_linearmaxmin}.
Then solving the inner minimization over $\boldsymbol{\theta}$ in Equation~\ref{eq:linear_maxmin} is equivalent to:
\begin{equation}
\label{eq:linearmaxmin_object_mat}
\max_{\lambda_1, \lambda_2, \lambda_3} \min_{\boldsymbol{\theta} \geq 0} \quad l_1(\lambda_1, \lambda_2, \lambda_3, \boldsymbol{\theta})= \boldsymbol{\theta}^\top \left( \sum u_{\lambda_1, \lambda_2} \boldsymbol{\phi} \right) - \lambda_3 \boldsymbol{\theta}^\top Q \boldsymbol{\theta} + \lambda_1 r + \lambda_3.
\end{equation}

\vspace{-1ex}
Notice that $l_1(\lambda_1, \lambda_2, \lambda_3, \boldsymbol{\theta})$ is a convex quadratic function of $\boldsymbol{\theta}$ (assuming $\lambda_3 \leq 0$) subject to linear inequality constraints $\boldsymbol{\theta} \geq 0$.  
Thus, it is a standard convex quadratic program (QP) with non-negativity constraints~\citep{boyd2004convex}. However, it is not possible to derive a universal closed-form solution for the optimal $\boldsymbol{\theta}^*$ under arbitrary $Q$.  
To further simplify the problem and obtain a closed-form solution, we transform the feature vector $\boldsymbol{\phi}$ into a whitened version $\tilde{\boldsymbol{\phi}}$ such that the matrix $Q$ becomes the identity matrix $I$ and we formally show this in Appendix~\ref{sec:whitening}. Specifically, we perform a whitening transformation using the Cholesky decomposition~\citep{boyd2004convex}.  
Let $W = Q^{-\frac{1}{2}}, \tilde{\boldsymbol{\phi}}(s,a) = W \boldsymbol{\phi}(s,a),$
where $Q^{-\frac{1}{2}}$ denotes a matrix square root of $Q^{-1}$ (which exists since $Q$ is positive semi-definite and non-singular, which is detailed in Appendix~\ref{sec:whitening}). Then the original problem in Equation~\ref{eq:linearmaxmin_object_mat} can be further simplified into:
\begin{equation}
\label{eq:linearmaxmin_object_mat_norm}
\max_{\lambda_1, \lambda_2, \lambda_3} \min_{\tilde{\boldsymbol{\theta}} \geq 0} \quad l_1(\lambda_1, \lambda_2, \lambda_3, \tilde{\boldsymbol{\theta}}) = \tilde{\boldsymbol{\theta}}^\top \left( \sum_{(s,a)} u_{\lambda_1, \lambda_2}(s,a) \tilde{\boldsymbol{\phi}}(s,a) \right) - \lambda_3 \tilde{\boldsymbol{\theta}}^\top \tilde{\boldsymbol{\theta}} + \lambda_1 r + \lambda_3.
\end{equation}

\vspace{-1ex}
where we now optimize over the parameter $\tilde{\boldsymbol{\theta}}$ using the transformed features $\tilde{\boldsymbol{\phi}}$. For notational simplicity, we will drop the tilde and henceforth use $\boldsymbol{\phi}$ to represent the whitened feature $\tilde{\boldsymbol{\phi}}$, and $\boldsymbol{\theta}$ to represent $\tilde{\boldsymbol{\theta}}$. Then we can get a closed-form solution (we detail the steps in Appendix~\ref{sec:theta}) for optimal $\boldsymbol{\theta}^*$ as: $
\boldsymbol{\theta}^* = \max\left(0, \, -\frac{\sum_{(s,a)} u_{\lambda_1, \lambda_2}(s,a) \boldsymbol{\phi}(s,a)}{2\lambda_3}\right)$, where the $\max(\cdot, 0)$ is applied elementwise. Details for solving the outer maximization in Equation~\ref{eq:linearmaxmin_object_mat_norm} can be found in Appendix~\ref{sec:dual_linearmaxmin}.  After obtaining the optimal dual variables $(\lambda_1^*, \lambda_2^*, \lambda_3^*)$, we can substitute them back into the optimal $\theta^*$ to construct the worst-case reward, which is the optimal solution of the inner problem of Equation~\ref{eq:linear_maxmin} given $\pi$. Then we can solve the outer maximization over the policy $\pi$ using standard RL algorithms. 

\textbf{ORPO with Linear Rewards.} 
While ORPO provides a general guarantee based on occupancy measure regularization, it does not exploit any structural assumptions about the reward function. In particular, even when the true reward is linear in a set of features, ORPO does not explicitly incorporate this structure into its policy optimization or theoretical analysis. While the lower bound (Theorem 5.1 in~\citep{laidlaw2024correlated}) continues to hold, it is unclear how to leverage this structure to obtain a tighter lower bound or to guide policy updates more effectively. This suggests a missed opportunity: by explicitly modeling the reward as a linear function, it becomes possible to derive stronger guarantees, interpret worst-case reward directions, and efficiently optimize against them. Our Linear Maxmin method fills this gap by parameterizing reward uncertainty directly in the space of reward weights, enabling both robustness and greater transparency.

\subsection{Occupancy Estimation and Convergence}

A core step in both our algorithms and ORPO is to estimate the Radon-Nikodym derivative $L(s,a)$. To this end, following prior works~\citep{laidlaw2024correlated,kang2018policy,ho2016generative}, we fit a discriminator network $d_{\phi}(s,a)$ with $L_{\phi}(s,a)=\exp d_{\phi}(s,a)$. We learn $\phi$ by minimizing:
\begin{equation}
\label{eq:disc_loss}
    \phi = \arg \min_{\phi} \,\, \mathbb{E}_{\mu_{\pi_{\text{ref}}}}[\log(1+e^{d_\phi(s,a)})] + \mathbb{E}_{\mu_{\pi}}[\log(1+e^{-d_\phi(s,a)})].
\end{equation}

\vspace{-2ex}
It is known that the optimal discriminator satisfies $d^*(s,a) = \log \frac{\mu_\pi(s,a)}{\mu_{\pi_{\text{ref}}}(s,a)}$ and we estimate $L_\phi(s,a)$ as $\tilde{L}_\phi(s,a)=\exp{\tilde{d}_\phi(s,a)}$ with $\tilde{d}_\phi(s,a) \approx d^*(s,a)$.
As discussed in Section~\ref{sec:method_maxmin}, if the policy $\pi$ visits state-action pairs that the reference policy $\pi_{\text{ref}}$ rarely or never visits, the adversarial reward can be arbitrarily poor. 
In theory, the estimated $\tilde{L}(s,a)$ is expected to grow arbitrarily large in this case, which should discourage the learned policy from exploiting such regions. However, we observe empirically (Section~\ref{sec:exp}) that the ORPO policy still visits some of these low-coverage regions under $\pi_{\text{ref}}$. This is because in the original ORPO implementation, the discriminator is not fully optimized during policy learning. Specifically, the discriminator receives only a small number of gradient updates per RL iteration, resulting in underfitting and inaccurate estimates of the Radon-Nikodym derivative $\tilde{L}(s,a)$. To address this, we substantially increase the number of gradient updates per iteration and carefully tune the learning rate. Our goal is to strike a practical balance between training time and discriminator quality, which we discuss in Appendix~\ref{sec:discriminator}. We further show that the following theorem, which establishes that the discriminator estimation achieves a sample complexity of $\mathcal{O}\!\big(n^{-1/4}\big)$, where $n$ denotes the sample size.

\begin{theorem}[Occupancy ratio $L_{\phi}$ error bound]
Under assumptions, let $\tilde{L}:=e^{\tilde d}$ be the empirical estimation and $L^{\star}=e^{d^{\star}}$ be the true ratio. Then, with probability at least $1-\delta$,
\[
\mathbb{E}_{\mu_{\pi_{\mathrm{ref}}}}\!\big[\,|\tilde L - L^{\star}|\,\big]
\;\le\;
C \Bigg( \gamma' + \Big(\frac{\log(M/\delta)}{n}\Big)^{\!1/2} \Bigg)^{1/2}.
\]
where $C$, $\gamma'$ and $M$ are some constants. 
\end{theorem}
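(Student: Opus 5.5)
The plan is to treat the discriminator objective in Equation~\ref{eq:disc_loss} as a logistic-loss M-estimation problem over a finite (or finitely covered, with covering number $M$) class $\{d_\phi\}$. We will assume that each $d_\phi$ and $d^\star$ is bounded by some $B$, so every ratio $L$ lies in $[e^{-B},e^{B}]$. We will also assume the class is $\gamma'$-misspecified, in the sense that the best-in-class population loss exceeds the loss of $d^\star$ by at most $\gamma'$. The proof then has three steps: (i) relate the excess population loss to a distance between $d$ and $d^\star$; (ii) bound the excess loss of the empirical minimizer $\tilde d$ by $\gamma'$ plus a uniform deviation term; (iii) convert the distance bound into an $L^1(\mu_{\pi_{\mathrm{ref}}})$ bound on $\tilde L-L^\star$.

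\textbf{Step (i).} Write $\mathcal{L}(d)$ for the population loss in Equation~\ref{eq:disc_loss}. It is the (scaled) logistic loss for classifying samples of $\mu_\pi$ against samples of $\mu_{\pi_{\mathrm{ref}}}$ with the mixture as base measure. A pointwise computation gives
\[
\mathcal{L}(d)-\mathcal{L}(d^\star)=\mathbb{E}_{\nu}\big[\mathrm{KL}\big(\mathrm{Bern}(\sigma(d^\star))\,\|\,\mathrm{Bern}(\sigma(d))\big)\big],
\]
where $\nu=\mu_\pi+\mu_{\pi_{\mathrm{ref}}}$ and $\sigma$ is the logistic function. Pinsker's inequality then yields
\[
\mathcal{L}(d)-\mathcal{L}(d^\star)\ge c\,\mathbb{E}_{\nu}\big[(\sigma(d)-\sigma(d^\star))^2\big]\ge c'\,\mathbb{E}_{\mu_{\pi_{\mathrm{ref}}}}\big[(d-d^\star)^2\big].
\]
The constant $c'$ comes from the facts that $\sigma'$ is bounded below on $[-B,B]$ and that $\nu\ge\mu_{\pi_{\mathrm{ref}}}$.

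\textbf{Step (ii).} The per-sample losses are bounded by $\log(1+e^B)$. Hoeffding's inequality and a union bound over the $M$ class elements therefore give, with probability $1-\delta$, $\sup_\phi|\hat{\mathcal{L}}(d_\phi)-\mathcal{L}(d_\phi)|\lesssim\sqrt{\log(M/\delta)/n}$. The standard ERM decomposition then gives
\[
\mathcal{L}(\tilde d)-\mathcal{L}(d^\star)\le\gamma'+2\sup_\phi|\hat{\mathcal{L}}(d_\phi)-\mathcal{L}(d_\phi)|\lesssim\gamma'+\big(\log(M/\delta)/n\big)^{1/2}.
\]
If $\tilde d$ is only an approximate minimizer, its optimization error can be absorbed into $\gamma'$.

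\textbf{Step (iii).} Since $|e^x-e^y|\le e^B|x-y|$ on $[-B,B]$, Cauchy--Schwarz gives
\[
\mathbb{E}_{\mu_{\pi_{\mathrm{ref}}}}|\tilde L-L^\star|\le e^B\big(\mathbb{E}_{\mu_{\pi_{\mathrm{ref}}}}(\tilde d-d^\star)^2\big)^{1/2}.
\]
Combining with steps (i) and (ii) yields the claimed bound with $C\propto e^B/\sqrt{c'}$. Taking the square root of the $n^{-1/2}$ term is exactly what produces the $n^{-1/4}$ rate.

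The main obstacle is step (i), namely the curvature lower bound converting excess logistic loss into squared error. It hinges on boundedness of $d$, since otherwise $\sigma'$ degenerates and $c'$ vanishes. This is also where the constant deteriorates exponentially in $B$, reflecting poor coverage by $\pi_{\mathrm{ref}}$. I would state boundedness and the finite class or covering as the ``assumptions'' referenced in the theorem.
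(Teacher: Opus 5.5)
Your argument is correct under the assumptions you adopt. It shares the paper's three-part skeleton: a curvature lower bound for the logistic risk, an excess-risk bound via Hoeffding plus a union bound over $M$ functions, and Lipschitz continuity of $\exp$ on the bounded range combined with Cauchy--Schwarz. It departs from the paper in two places.

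\emph{The curvature step.} The paper does not use the KL identity or Pinsker. It notes that the pointwise loss has second derivative $\sigma(d)(1-\sigma(d))$ and bounds this below by $\lambda>0$ on $[\log\alpha,\log\beta]$. It then applies strong convexity around the pointwise minimizer $d^\star$, where the first derivative vanishes, to get $\mathcal{R}(d)-\mathcal{R}(d^\star)\ge\frac{\lambda}{2}\mathbb{E}_{\mu_{\text{mix}}}[(d-d^\star)^2]$. Your identity is slightly more informative, since it controls $\sigma(d)-\sigma(d^\star)$ without any boundedness. However, Pinsker followed by the mean-value step gives a constant of order $(\min_{[-B,B]}\sigma')^2$, where the direct second-derivative argument naturally gives order $\min_{[-B,B]}\sigma'$. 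This only affects $C$, not the rate.

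\emph{The meaning of $\gamma'$.} This is the more substantive difference. The paper assumes realizability ($d^\star\in\mathcal{D}$) but lets $\mathcal{D}$ be infinite, covered by an optimistic $L_1(\mu_{\pi_{\mathrm{ref}}})$ cover of size $M$ and scale $\gamma$. Its union bound runs only over the cover. There $\gamma'$ (namely $3C_{\triangle}\gamma$) is the cost of transferring from cover elements back to $\tilde d$ and $d^\star$. That cost is paid through Lipschitz transfer bounds, which use $L^\star\le\beta$ to turn $\mu_\pi$-expectations into $\mu_{\pi_{\mathrm{ref}}}$-expectations and use $|\bar d-d|\le|\bar L-L|/\alpha$.

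You instead take a finite class and read $\gamma'$ as misspecification error. This frees you from realizability, but as written it does not cover an infinite discriminator class. Your parenthetical ``or finitely covered'' is not actually handled: a union bound over $M$ cover elements does not control the uniform deviation over the whole class without a transfer step. The simplest fix is a sup-norm cover of $\mathcal{D}$ at radius $\epsilon$. Both logistic losses are $1$-Lipschitz in $d$, so the empirical and population risks of any $d$ and of its cover element differ by $O(\epsilon)$, and this term simply joins $\gamma'$. This also avoids needing $\bar L$ and $L$ to be $L_1$-close on the samples. The paper's empirical transfer bound relies on that closeness, but its population-level $L_1$ cover assumption does not by itself provide it.
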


The full argument is presented in Appendix~\ref{sec:occ_complex}, where we adopt the optimistic cover notion (Definition 3) from~\citep{huang2023reinforcement} as a technical tool and establish the new concentration analysis as well as the resulting complexity bounds specific for estimating the loss in Equation~\ref{eq:disc_loss}.

To compute the final objective for our Max-Min policy in Equation~\ref{eq:maxmin_object}, we estimate the $\chi^2$ divergence, the normalized proxy reward $R_{\text{proxy}}$, and the first and second moments $\mathbb{E}_{\mu_\pi}[R_{\text{proxy}}]$ and $\mathbb{E}^2_{\mu_\pi}[R_{\text{proxy}}]$. These components together define the robust optimization objective used to update the policy. A simplified Max-Min policy optimization procedure is outlined in Algorithm~\ref{alg:maxmin_simple}. We provide detailed descriptions of each estimation step, as well as the complete algorithmic implementation for both Max-Min and Linear Max-Min in Appendices~\ref{sec:maxmin_algorithm} and~\ref{sec:linearmaxmin_algorithm}. We further obtain a convergence bound of $\mathcal{O}\!\big(1/T+1/N+n^{-1/4}\big)$ for our Max-Min algorithm, by viewing (\ref{eq:maxmin_object}) as maximizing a general utility considered in~\citep{zhang2022multi,barakat2024towards}. Here $T$ is the number of iterations and $N$ is the batch size for policy update. Detailed proofs and the convergence analysis for Linear Max-Min are in Appendix~\ref{sec:convergence_analy}.

\begin{algorithm}[ht]
\caption{Max-Min Policy Optimization (Simplified)}
\label{alg:maxmin_simple}
\begin{algorithmic}[1]
\STATE Initialize policy parameters $\theta$
\STATE Initialize reference policy $\pi_{\text{ref}}$ and collect trajectories
\STATE Estimate mean and variance of the proxy reward under $\pi_{\text{ref}}$
\FOR{each iteration}
    \STATE Collect trajectories from current policy $\pi_\theta$
    \STATE Normalize the proxy rewards for state-action pairs in the collected trajectories
    \STATE Estimate the expected proxy reward and its second moment under the current policy
    \STATE Estimate the discriminator using Equation~\ref{eq:disc_loss} and $\chi^2$ divergence between $\mu_\pi$ and $\mu_{\pi_{\text{ref}}}$
    \STATE Update the policy using PPO to maximize the Max-Min objective in Equation~\ref{eq:maxmin_object}
\ENDFOR
\end{algorithmic}
\end{algorithm}

\vspace{-2ex}
\section{Experiment}
\subsection{Experiment Setup}
We evaluate our method across five realistic benchmark environments: \textit{Traffic}, \textit{Pandemic}, \textit{Glucose Monitoring}, \textit{Tomato Watering GridWorld} and \textit{RLHF}. These environments were originally proposed in~\citep{pan2022effects,leike2017ai} and represent diverse forms of proxy reward hacking, including misweighting, ontological mismatch, and scope misalignment~\citep{pan2022effects}. A detailed description of the environments and their respective reward structures is provided in Appendix~\ref{sec:environment}. In each of the five environments, we train policies using both our \texttt{Max-Min} and \texttt{Linear Max-Min} algorithms for 5 random seeds. For baselines, we compare against the \texttt{ORPO} policy. To isolate the impact of discriminator training, we also include an ablation: \texttt{ORPO*}, where we train the \texttt{ORPO} policy using the same full discriminator training schedule as in our algorithms. This variant shares the same architecture and optimization settings as the original \texttt{ORPO}, differing only in the extent of discriminator training. Including this baseline allows us to evaluate the specific contribution of discriminator optimization to policy robustness. For the RLHF environment, we additionally include the \texttt{Ensemble} baseline~\citep{eisenstein2023helping}, a reward-centric approach designed to mitigate reward hacking in RLHF. We include more detailed experimental settings in Appendix~\ref{sec:ex_setup} and a 
discussion of training time and complexity 
of all algorithms in Appendix~\ref{sec:complexity}. 

Since the correlation $r$ may only be approximately estimated and there is currently no principled method for selecting its optimal value. We adopted a similar approach used by \texttt{ORPO}~\citep{laidlaw2024correlated}. For each environment, we first performed a grid search over several different values of $r$, and for each fixed $r$, we trained the policy using our algorithm. We then selected the $r$ value that leads to the policy with the best expected worst-case return (detailed in Appendix~\ref{sec:different_r_training}), which is 0.3 for Traffic, 0.7 for Pandemic, 0.9 for Glucose, 0.4 for Tomato, and 0.4 for RLHF. Results on all searched $r$ can be found in Appendix~\ref{sec:all_r}. Notice that \texttt{ORPO} selects the optimal $r$ that yields the best expected return under the true reward, which is infeasible in practice when the true reward is unknown during training. On the other hand, when the exact correlation $r$ is unknown, our approach also raises a concern about how to interpret which worst-case reward is actually meaningful. We include a detailed discussion about how to choose $r$ in practice in Appendix~\ref{sec:choose_r}.

As for evaluation metrics, we report both the expected proxy and true rewards, along with the expected worst-case reward as described in Section~\ref{sec:method_maxmin}. Note that some policies may visit state-action pairs that are not covered by the reference policy $\pi_{\text{ref}}$. In such cases, we exclude those trajectories and report the occupancy measure of the unseen state-action pairs. Additionally, we evaluate each policy using two variants of the expected linear worst-case reward introduced in Section~\ref{sec:method_linearmaxmin}. The first uses only the features present in the proxy reward, while the second variant, denoted \textit{Linear Worst*}, leverages features from the true reward, some of which remain unseen during training. This setup mimics a more realistic real-world scenario in which the true reward function may depend on features not explicitly modeled at training time. Comparing performance under this setting allows us to assess the robustness of each policy to unseen or misaligned reward structures. All rewards are normalized with respect to the reference policy $\pi_{\text{ref}}$ to ensure a consistent scale across metrics, enabling fair and meaningful comparisons. Note that all worst-case rewards are reported using the fixed correlation level $r$ specified during training.

\vspace{-1ex}
\subsection{Results}\label{sec:exp}
\vspace{-1ex}
\begin{table*}[!t]
\centering
\caption{\small Evaluation results on Traffic, Pandemic, Glucose, and RLHF environments. All policies are trained using \textbf{only the proxy reward}. In Traffic, the proxy reward is based on \emph{vel}, \emph{accel}, \emph{headway} (1, 1, 0.1), while the true reward uses \emph{commute}, \emph{accel}, \emph{headway} (1, 1, 0.1). In Pandemic, the proxy reward includes \emph{infection}, \emph{lower stage}, \emph{smooth changes} (10, 0.1, 0.01), while the true reward additionally includes \emph{political} with weight 10 after \emph{infection}. In Glucose, the proxy uses \emph{expected patient cost}, and the true reward uses \emph{magni\_bg}. In RLHF, the proxy uses a 70M LLM, and the true reward uses a 8B LLM. We report $\boldsymbol{\theta}$ in the same order as feature weights. \textbf{Occ} denotes total occupancy over state-action pairs unseen by $\pi_{\text{ref}}$, where discriminator outputs infinity.}

\label{tab:traffic_pandemic_results_normalized}
\resizebox{\textwidth}{!}{%
\begin{tabular}{|c|cccccc|}
\hline
\textbf{Env} & \multicolumn{6}{c|}{\textbf{Traffic}} \\
\hline
\textbf{Method} & \textbf{True} & \textbf{Proxy} & \textbf{Worst} & \textbf{Linear Worst ($\boldsymbol{\theta}$)} & \multicolumn{1}{c|}{\textbf{Linear Worst* ($\boldsymbol{\theta}$)}} & \textbf{Occ $\downarrow$} \\
\hline
\texttt{ORPO} & \cellcolor[rgb]{ .945,  .863,  .859}16.91$\pm$0.12 & 3.41$\pm$0.13 & -1.96e+04$\pm$0.02e+04 & -0.69$\pm$0.01 (0.71, 0.21, 0.69) & \multicolumn{1}{c|}{-0.83$\pm$0.02 (0.63, 0.12, 0.97)} & 3.82e-04 $\pm$0.13e-04\\
\texttt{ORPO*} & 10.26$\pm$0.09 & 1.35$\pm$0.09 & -1.35e+04$\pm$0.02e+04 & -0.44$\pm$0.02 (0.46, 0.18, 0.86) & \multicolumn{1}{c|}{-0.45$\pm$0.01 (0.58, 0.06, 0.81)} & 1.84e-04$\pm$0.07e-04 \\
\texttt{Max-Min} & 12.70$\pm$0.06 & \cellcolor[rgb]{ .945,  .863,  .859}3.63$\pm$0.09 & \cellcolor[rgb]{ .945,  .863,  .859}-268.31$\pm$4.14   & -0.06$\pm$0.01 (0.01, 0.02, 0.96) & \multicolumn{1}{c|}{\cellcolor[rgb]{ .945,  .863,  .859}-0.06$\pm$0.01 (0.001, 0.02, 0.99)} & \cellcolor[rgb]{ .945,  .863,  .859}0.00$\pm$0.00 \\
\texttt{Linear Max-Min} & 16.46$\pm$0.10 & 2.40$\pm$0.11 & -1.19e+04$\pm$0.01e+04 & \cellcolor[rgb]{ .945,  .863,  .859}0.20$\pm$0.01 (0.64, 0.07, 0.76) & \multicolumn{1}{c|}{-0.12$\pm$0.01 (0.91, 0.01, 0.67)} & \cellcolor[rgb]{ .945,  .863,  .859}0.00$\pm$0.00\\
\hline
\textbf{Env} & \multicolumn{6}{c|}{\textbf{Pandemic}} \\
\hline
\textbf{Method} & \textbf{True} & \textbf{Proxy} & \textbf{Worst} & \textbf{Linear Worst ($\boldsymbol{\theta}$)} & \multicolumn{2}{c|}{\textbf{Linear Worst* ($\boldsymbol{\theta}$)}} \\
\hline
\texttt{ORPO} & -1.04$\pm$0.21 & 1.75$\pm$0.19  & -5.31e+06$\pm$0.01e+06 & -2.41$\pm$0.02 (0.23, 0.95, 0.17) & \multicolumn{2}{c|}{-2.65$\pm$0.02 (0.02, 0.95, 0.92, 0.08)} \\
\texttt{ORPO*} &  1.18$\pm$0.19 & 1.18$\pm$0.19 & -4.46e+06$\pm$0.03e+06 & -1.36$\pm$0.01 (0.25, 0.97, 0.13) & \multicolumn{2}{c|}{-1.36$\pm$0.01 (0.25, 0, 0.97, 0.13)} \\
\texttt{Max-Min} & 1.25$\pm$0.18 &  1.25$\pm$0.18 & \cellcolor[rgb]{ .945,  .863,  .859}-63.29$\pm$3.35 & -1.11$\pm$0.01 (0.14, 0.99, 0.01) & \multicolumn{2}{c|}{-1.11$\pm$0.01 (0.14, 0, 0.99, 0.01)} \\
\texttt{Linear Max-Min} & \cellcolor[rgb]{ .945,  .863,  .859}3.65$\pm$0.11 & \cellcolor[rgb]{ .945,  .863,  .859}7.60$\pm$0.13 & -6.82e+05$\pm$0.01e+05 & \cellcolor[rgb]{ .945,  .863,  .859}0.65$\pm$0.01 (0.001, 0.23, 0.02) & \multicolumn{2}{c|}{\cellcolor[rgb]{ .945,  .863,  .859}-0.17$\pm$0.02 (0.01, 0.97, 0.22, 0.09)} \\
\hline
\textbf{Env} & \multicolumn{3}{c|}{\textbf{Glucose}} & \multicolumn{3}{c|}{\textbf{RLHF}} \\
\hline
\textbf{Method} & \textbf{True($\times 10^3$)} & \textbf{Proxy} & \multicolumn{1}{c|}{\textbf{Worst}} & \textbf{True} & \textbf{Proxy} & \textbf{Worst} \\
\hline
\texttt{ORPO} & 6.0$\pm$0.1 & 100.48$\pm$0.54 & \multicolumn{1}{c|}{-27.54$\pm$0.32} & \cellcolor[rgb]{ .945,  .863,  .859}8.30$\pm$1.07 & 0.63$\pm$0.21 &  -1.84$\pm$0.03   \\
\texttt{ORPO*} & \cellcolor[rgb]{ .945,  .863,  .859}6.3$\pm$0.2 & \cellcolor[rgb]{ .945,  .863,  .859}116.36$\pm$0.56 & \multicolumn{1}{c|}{-8.79$\pm$0.27} & N/A & N/A & N/A \\
\texttt{Max-Min} & \cellcolor[rgb]{ .945,  .863,  .859}6.3$\pm$0.1 & 102.66$\pm$0.58 & \multicolumn{1}{c|}{\cellcolor[rgb]{ .945,  .863,  .859}-1.71$\pm$0.25} & 5.38$\pm$0.92 & \cellcolor[rgb]{ .945,  .863,  .859}0.84$\pm$0.11 &  \cellcolor[rgb]{ .945,  .863,  .859}-0.10$\pm$0.01\\
\texttt{Ensemble} & N/A & N/A & \multicolumn{1}{c|}{N/A}  &2.31$\pm$1.23 & 1.26$\pm$0.11 & -1.70$\pm$0.04 \\
\hline
\end{tabular}
}
\vspace{-2ex}
\end{table*}

\begin{figure}[!t]
    \centering
    \includegraphics[width=1\linewidth,height=6cm]{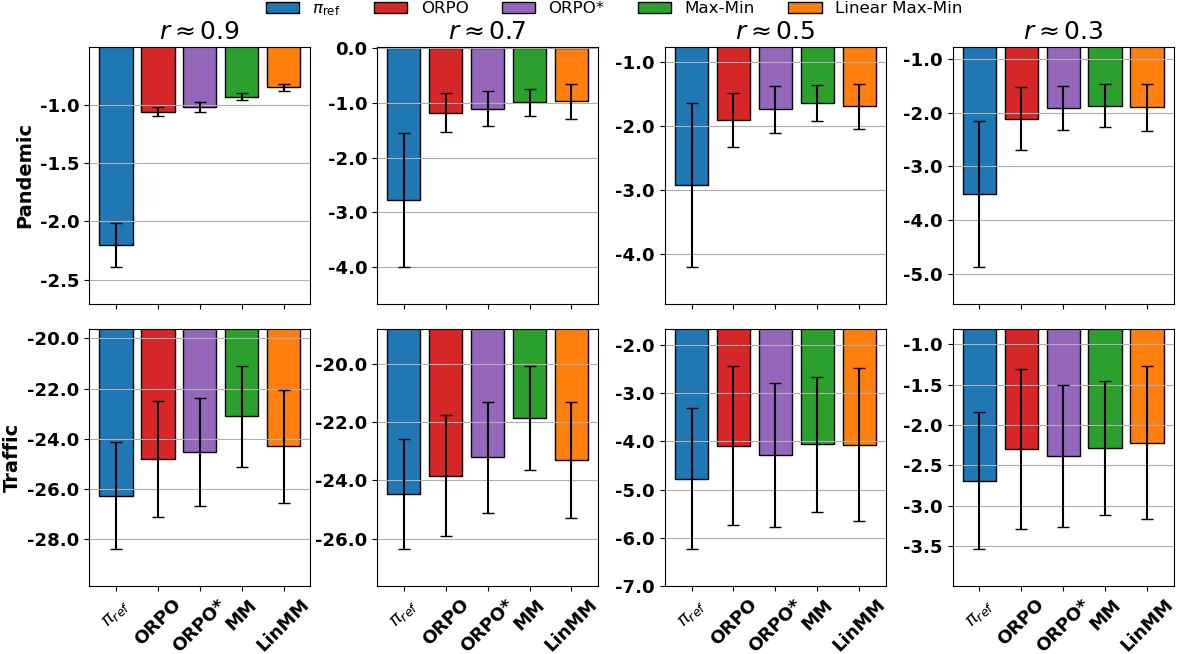}
    \caption{
    \small Mean reward and standard deviation under sampled $\theta$ and true reward features at different proxy–true reward correlation levels $r$ for the Traffic and Pandemic environments. Our methods (\texttt{Max-Min} and \texttt{Linear Max-Min}) yield more stable and higher average performance across all choices of $r$. 
    }
    \label{fig:average_reward_r}
    \vspace{-3ex}  
\end{figure}
\textbf{Worst-Case Performance.} Table~\ref{tab:traffic_pandemic_results_normalized} presents the evaluation results on the Traffic, Pandemic, Glucose, and RLHF environments. Additional results for Tomato are provided in Appendix~\ref{sec:additional_results}. Note that we omit the \texttt{Linear Max-Min} policy from the Glucose and RLHF environments for the following reasons. In the Glucose and RLHF environment, both the proxy and true rewards used in prior work~\citep{laidlaw2024correlated,baker2025monitoring} are based on a single feature, making the linear reward formulation trivial. Although the original Glucose simulator provides multiple candidate features related to patient health, selecting an appropriate feature combination without prior knowledge of clinical intent is nontrivial. Therefore, in both settings, we report only the results for the \texttt{Max-Min} policy alongside the baselines. Our \texttt{Max-Min} and \texttt{Linear Max-Min} policies achieve better expected worst-case performance under both general and linear adversarial rewards, while remaining competitive with baselines in terms of expected true and proxy rewards. Notably, the \texttt{Max-Min} policy attains the highest expected worst-case return, followed by \texttt{Linear Max-Min}. Conversely, \texttt{Linear Max-Min} yields the highest expected linear worst-case reward, followed by \texttt{Max-Min}, demonstrating the robustness of both approaches under worst-case scenarios. For the Linear Worst* evaluation, which uses reward features unseen during training, we observe minimal degradation in \texttt{Max-Min} policy's performance, indicating its strong robustness to feature variation. In contrast, the performance of \texttt{Linear Max-Min} declines in this case, suggesting its advantage diminishes when prior assumptions about feature structure are inaccurate. We also find that \texttt{ORPO*} exhibits better worst-case performance than the original \texttt{ORPO}. In particular, training the discriminator more thoroughly significantly reduces the occupancy of state-action pairs that are not visited by the reference policy, indicating that more accurate estimation of the Radon-Nikodym derivative leads to improved policy robustness. Notably, in the Pandemic and Glucose environment, we observe no such unvisited state-action pairs, and the discriminator outputs remain small across all policies. This could be due to either the discriminator network not being fully optimized or its inability to capture rare events that fall outside the support of $\pi_{\text{ref}}$. Developing more reliable techniques for handling such rare or unseen state-action pairs remains an open direction for future work. We also report the adversarial weight vectors $\boldsymbol{\theta}$ for each policy. These weights reveal which features are most vulnerable to proxy exploitation under the learned policy and can be used to diagnose and revise the proxy reward function, thereby improving robustness. This highlights the interpretability benefits of our framework. Moreover, several patterns emerge from the results, which is detailed in Appendix~\ref{sec:additional_worst_case}. We also notice that the \texttt{Ensemble} baseline in the RLHF setting achieves only limited improvement in expected true return over the reference policy and attains a lower expected worst-case return than our method. These results indicate that using reward ensembles alone is insufficient to effectively mitigate reward hacking compared to our approach. However, such reward-centric methods, including InfoRM~\citep{miao2024inform} and RRM~\citep{liu2024rrm}, can be easily integrated into our framework. In particular, these approaches can be used to construct a stronger proxy reward, which can then be plugged into our method to further improve performance.

\textbf{Robustness Across Correlation Levels.} To further assess the robustness of each policy across a broader range of proxy–true correlation scenarios, we also compute the Linear Worst* for each policy under varying $r$ values. Specifically, for each $r$, we sample 1000 vectors $\boldsymbol{\theta}$ such that $\boldsymbol{\theta} \in \mathcal{R}_{\text{corr}}^{\text{lin}}$, and report the average return and variance achieved by each policy over these sampled rewards. Importantly, the variation in $r$ is applied \textbf{only during evaluation}; all policies are fixed and trained using the specific $r$ values reported in Appendix~\ref{sec:ex_setup}. Unlike evaluations that only consider several reward functions, this approach evaluates policy performance across the entire reward set $\mathcal{R}_{\text{corr}}^{\text{lin}}$, providing a more comprehensive measure of robustness and better reflecting real-world scenarios where the true reward and correlation $r$ are unknown. Figure~\ref{fig:average_reward_r} shows the average reward and variance achieved by each method under different levels of proxy–true reward correlation $r$. As expected, the reference policy $\pi_{\text{ref}}$ (blue) performs the worst across all correlation levels in both environments. In Traffic, its variance is relatively small, suggesting consistently poor but stable behavior. In contrast, variance is highest in the Pandemic environment, indicating increased policy fragility. Notably, \texttt{ORPO*} (purple) consistently achieves lower variance than \texttt{ORPO} (red) across both environments and outperforms it in terms of average reward at $r \approx 0.9$ and $r \approx 0.7$ in Traffic, and across nearly all $r$ values in Pandemic. This underscores the importance of accurate discriminator training for improving both stability and robustness. \texttt{Max-Min} (green) demonstrates the highest average reward and lowest variance across a wide range of $r$ values in both environments, showing strong resilience to reward misspecification. While \texttt{Linear Max-Min} (orange) achieves the best performance at specific correlation levels, particularly $r \approx 0.3$ in Traffic and $r \approx 0.7$-$0.9$ in Pandemic. As $r$ decreases and the proxy becomes less informative, differences in average reward among methods shrink, while variance increases. These results highlight the significance of variance control in low-correlation regimes and demonstrate that \texttt{Max-Min} and \texttt{Linear Max-Min} offer robust and stable performance under high uncertainty.

\vspace{-2ex}
\section{Conclusion}
\vspace{-1ex}
In this work, we formalize reward hacking as a robust optimization problem and introduce both a Max-Min formulation with a closed-form adversarial reward and a Linear Max-Min variant that further improves interpretability and tractability. We develop efficient algorithms and empirically validate that both Max-Min and Linear Max-Min policies achieve stronger worst-case performance and improved stability compared to prior baselines such as ORPO across diverse environments. We further discuss limitations and broader impacts of our method in Appendices~\ref{sec:limitation} and~\ref{sec:broader_impacts}.

\section*{Acknowledgments}
This work was supported in part by NSF grant CNS-2146548 and a grant from the Louisiana Board of Regents. We thank the anonymous reviewers for their insightful and constructive feedback.

\bibliography{iclr2026_conference}
\bibliographystyle{iclr2026_conference}

\newpage
\appendix
\section*{Appendix}
\phantomsection
\addcontentsline{toc}{section}{Appendices}
\startcontents[appendices]
\printcontents[appendices]{}{1}{}
\newpage

\section{LLM Usage}
In this work, we used ChatGPT to improve grammar, wording, and paragraph flow throughout the paper after completing an initial draft. We also used ChatGPT's research capabilities to help surface potentially relevant prior work for the related work and introduction sections. All references were independently verified by the authors. No algorithms, proofs, or experimental results were generated by ChatGPT, and no proprietary or sensitive data were shared with the tool. All technical contributions and analyses are solely the authors' work.

\section{Limitations and Future Work}
\label{sec:limitation}
Despite the effectiveness of our framework, several limitations remain. First, although our Max-Min formulation can be extended naturally beyond linear reward structures, incorporating more expressive representations such as neural networks makes the inner optimization problem significantly harder. In such cases, the inner minimization may no longer admit a closed-form solution, necessitating iterative training between the policy and adversary. This increases computational complexity and undermines the efficiency advantages of our current formulation. Developing scalable solutions for general reward representations remains an open direction.

Second, in our Linear Max-Min algorithm, computing $Q$ results in $O(d^2)$ space and $O(d^3)$ time complexity. While low-rank approximations could potentially reduce computational cost, such methods often discard small eigenvalues. However, in our setting, these small eigenvalues become critical due to the inversion in the whitening step, and removing them may severely distort the worst-case reward direction. Therefore, naive low-rank approximations may not be applicable in our setting, and we emphasize the need for principled, scalable extensions when applying our method to settings with very high-dimensional feature spaces.

Third, for complex environments, constructing effective features for the reward function is often challenging without prior domain knowledge. For example, in the Glucose environment, a large number of health-related indices are provided. However, without medical expertise or knowledge of glucose monitoring, it is difficult to determine which combination of indices best captures patient health or blood glucose trends. Using arbitrarily selected features in such cases can lead to proxy rewards that exhibit little or no correlation with the true reward. While our max-min formulation can still offer robustness under such misspecification, the resulting policy is nevertheless expected to perform poorly due to the fundamental misalignment between the proxy and true objectives. Therefore, designing meaningful reward features remains a fundamental and unresolved challenge, and we will include this as a limitation of our method in the main text. Moreover, in some environments, such as Tomato, the reward function is not explicitly feature-based. Although our general max-min algorithm still applies in this setting, incorporating non feature-based reward structure into the uncertainty set remains an open problem. 

Fourth, like ORPO, our framework assumes access to a fixed proxy reward, a reference policy, and a pre-specified correlation parameter $r$, all provided offline. This setup limits the ability of the algorithm to adapt or refine its reward model based on new information. However, we observe that the adversarial rewards generated by our method, particularly the structured linear ones, can serve as diagnostic tools to identify vulnerable reward features. These insights could be leveraged to guide human-in-the-loop refinement or adaptive querying of stronger feedback models (e.g., large language models). Extending our framework to close the loop between diagnostic robustness and reward learning is an exciting direction for future work.

Fifth, while our experimental results demonstrate that the proposed method improves robustness across a range of proxy-true reward correlation levels, an alternative and perhaps more direct strategy would be to train the policy against multiple proxy rewards sampled at varying levels of correlation $r$. In principle, optimizing the average performance across a diverse set of proxies could yield a policy that is robust to a wider distribution of potential reward misspecifications. However, this approach presents several practical challenges. First, there is a trade-off between computational cost and coverage: sampling too few proxies may fail to represent the full space of plausible reward deviations, while sampling many proxies significantly increases training time. Second, efficiently generating reward functions that satisfy a fixed correlation constraint with the proxy reward becomes non-trivial in high-dimensional or continuous state-action spaces. Designing scalable and effective reward sampling mechanisms (such as leveraging diffusion models) under correlation constraints remains an open problem and a promising direction for future research.

\section{Broader Impacts}
\label{sec:broader_impacts}

Designing reward functions that faithfully reflect designer intent remains a fundamental challenge in deploying reinforcement learning (RL) systems in the real world. When reward misspecification occurs, agents can behave in undesirable or even dangerous ways. Our work addresses this issue by proposing a robust policy optimization framework that explicitly accounts for uncertainty in the reward function, improving worst-case performance across a range of plausible reward proxies. This approach has the potential to increase the safety and reliability of RL systems in safety-critical applications such as healthcare, autonomous driving, and digital infrastructure, where poorly specified incentives can lead to unintended consequences. In addition to robustness, our linear variant contributes to policy interpretability by yielding explicit weightings over features that can be inspected and audited. This can help practitioners identify vulnerable components in their reward specification and make better-informed decisions when refining proxies. However, while our method is primarily intended to prevent reward exploitation, one could conceivably use adversarial reward modeling to stress-test or attack policies. We believe the benefits of improved safety and robustness outweigh this risk, especially when combined with interpretability. Overall, this work contributes to the safe and trustworthy deployment of RL by equipping practitioners with more robust and explainable optimization tools.

\section{Related Work}\label{sec:appendix}

\subsection{Reward Hacking}

Early work in AI safety underscored the pitfalls of optimizing an imperfect proxy reward. Amodei et al.~\citep{amodei2016concrete} famously illustrate how an agent can “game” its reward function: for example, a cleaning robot rewarded for not seeing any messes might simply close its cameras or create messes to clean up, maximizing the proxy reward while betraying the designer’s intent. Other examples of such reward hacking include an agent in a racing game that spins in circles to collect points instead of completing the race~\citep{skalse2022defining}, social media recommendation systems that promote emotionally extreme content to increase engagement~\citep{harari2024nexus}, and Large Language Models (LLMs) that generate trivial or hard-coded solutions to pass unit tests rather than producing general, correct code~\citep{baker2025monitoring}. Krakovna et al.~\citep{krakovna2018specification} have catalogued many such failure cases across diverse domains. Several studies have analyzed the causes of reward hacking~\citep{amodei2016concrete, krakovna2019goodhart, skalse2022defining}, often interpreting it as a manifestation of Goodhart’s Law~\citep{goodhart1984problems}: when a proxy metric becomes a target for optimization, it ceases to be a good measure. In reinforcement learning, this risk is particularly acute because agents can exploit even small imperfections in the reward specification. Pan et al.~\citep{pan2022effects} further propose a taxonomy of proxy reward misspecification into three types: \textit{misweighting}, \textit{ontological}, and \textit{scope} errors.

To mitigate such risks, several reward-centric methods have been proposed~\citep{hadfield2017inverse, rame2024warm}. Inverse Reward Design~\citep{hadfield2017inverse} aims to infer the intended true objective from a given proxy and its training context, helping agents generalize without exploiting flawed signals. Recent work by Rame et al.~\citep{rame2024warm} averages the parameters of multiple reward models to smooth out idiosyncratic errors, reduce the impact of individual proxy biases, and demonstrate reduced reward hacking on held-out tests. Another line of defense focuses on regularizing policy behavior to reduce sensitivity to reward flaws. Common approaches include penalizing divergence from a reference policy using KL-regularization~\citep{liu2020learning}. Recent research by Laidlaw et al.~\citep{laidlaw2024correlated} proposes Occupancy-Regularized Policy Optimization (ORPO), which applies a $\chi^2$ penalty on the state-action distribution to constrain deviation from a baseline policy and reduce exploitative behaviors. Another complementary paradigm is assistance games~\citep{fern2014decision,shah2020benefits}, in which human users remain actively involved and the agent’s actions complement the user’s to achieve optimal joint performance. Assistance games can mitigate reward hacking by removing incentives for deception since the agent's performance depends on the human's latent (true) reward. Recent work has developed scalable assistance-game approaches in practice~\citep{laidlaw2024scalably}.

Overall, existing approaches either attempt to correct the reward specification, regularize against a fixed proxy, or explicitly involve human interaction. In contrast, our method trains policies against an entire set of plausible proxy rewards, those that remain sufficiently correlated with the true reward, offering robustness to a broader range of misspecifications. Moreover, we show that this robust training objective can be reformulated as an equivalent regularized optimization problem, providing both theoretical and practical benefits. 

\subsection{Reward Modeling in Reinforcement Learning}
In standard RL benchmarks, the reward is usually assumed to be given, but real-world applications rarely offer a well-defined reward signal upfront. Therefore, designing an effective reward function, often referred to as reward modeling, is a critical yet challenging aspect of RL~\citep{skalse2022defining,liu2024enhancing,booth2023perils,knox2024specify}. Moreover, evaluating whether a designed reward truly captures the designer’s intent is non-trivial. Recent work~\citep{muslimani2025towards} has proposed to measure reward alignment via a “trajectory alignment coefficient,” which quantifies how closely the rankings of trajectories induced by a given reward match a human stakeholder’s preferences. Such efforts underscore the importance of conceptual frameworks that go beyond treating the reward as a black box, instead focusing on principled reward design and evaluation.

To incorporate domain knowledge and improve interpretability, researchers have explored structured or rule-based reward modeling frameworks~\citep{icarte2022reward,brafman2018ltlf,camacho2017non}. One prominent example is the use of reward machines~\citep{icarte2022reward} that explicitly represent the reward function’s logic. A reward machine exposes the internal structure of the reward (e.g. different sub-goal phases or conditions) to the agent, enabling techniques like automated reward shaping and task decomposition for more sample-efficient learning. Even before the advent of reward machines, prior works had leveraged logical task specifications to design rewards. For instance, translating Linear Temporal Logic (LTL) formulas into automata and rewarding the agent upon reaching designated accepting states~\citep{camacho2017non,brafman2018ltlf}. By defining rewards through such rules or logical templates, the intended behavior is encoded transparently, making the reward function more interpretable. Along similar lines, many approaches assume a structured parametric form for the reward function itself to aid transparency~\citep{yu2025reward,mu2024rule}. In particular, it is common to model the reward as a linear combination of feature functions, a simplification used in inverse RL and preference-based reward learning to make reward inference tractable and explainable~\citep{yu2025reward}. Recent work in RL from human feedback also implements rule-based reward signals as linear models over interpretable features~\citep{mu2024rule}. Our approach follows this tradition: by assuming the reward is linear in a set of human-interpretable features, we improve the interpretability of the learned policies and reveal which feature components are robust or vulnerable.

\subsection{Robust Reinforcement Learning} 

Our work is also related to robust reinforcement learning, where the agent assumes the reward function (and/or transition dynamics) lies within a given uncertainty set, and it seeks to maximize performance against the worst-case realization from that set. This can be formulated as a zero-sum dynamic game between the agent and an adversary who selects the most adverse reward or dynamics; solving the robust MDP thus involves a challenging max-min optimization~\citep{iyengar2005robust,nilim2005robust}. To alleviate the computational complexity, early works in this vein rely on a rectangularity assumption that is crucial for traceability. Thus, classical robust RL formulation typically considers rectangular uncertainty sets on rewards or transition probabilities,  which lead to conservative solutions but permit efficient algorithms such as robust value iteration~\citep{bagnell2001solving,grand2021scalable} or modified policy iteration (MPI)~\citep{kaufman2013robust}. 

Recent theoretical work has revealed an intimate connection between adversarial robustness and policy regularization in the context of rectangular uncertainty sets. Several researchers have shown that solving a robust MDP is equivalent to solving a certain regularized RL problem~\citep{derman2021twice,eysenbachmaximum,mcmahan2024roping}. In particular, the worst-case effect of the adversary can often be captured via an additional penalty term in the agent’s objective. Derman et al.~\citep{derman2021twice} prove that any entropy- or $L^2$-regularized MDP can be interpreted as a robust MDP with uncertain rewards – in fact, a regularized MDP is a special case of a reward-robust MDP. Their analysis establishes a duality between a max-min reward-robust objective and a single-agent maximization of expected reward plus a regularization term. Eysenbach and Levine~\citep{eysenbachmaximum} show that the optimal policy from a maximum entropy RL formulation is provably robust to some adversarial reward perturbations. More recently, these insights have been extended and formalized for general MDPs and even multi-agent settings. McMahan et al.~\citep{mcmahan2024roping} study robust Markov games with $(s,a)$-rectangular uncertainty, and they prove that computing a robust equilibrium is polynomial-time equivalent to computing an equilibrium in a corresponding regularized game. In their framework, the added regularization term is exactly the support function of the uncertainty set, effectively the dual representation of the adversary’s worst-case reward selection. This means that for common uncertainty sets (e.g., those inducing entropy or $\ell_p$-norm regularizers), one can replace the inner minimization over rewards with an explicit regularization term in the objective. 

The setting in our work departs from the above literature by considering \textbf{non-rectangular} reward uncertainty. In particular, we assume a correlation-constrained uncertainty set for the reward function, meaning that the adversary’s permissible deviations in reward are coupled across states. This structure can mitigate the conservativeness of the worst-case solution (the adversary cannot simultaneously push all state rewards to their extreme worst values)~\citep{goyal2018robust}, but it also means that the neat robustness-regularization duality from the rectangular-case no longer applies and the robust optimization must be solved (or approximated) directly. In summary, our work tackles a form of reward uncertainty which lies beyond the scope of existing robustness-as-regularization analysis.

\subsection{Successor Representations in Reinforcement Learning}

The linear reward assumption and the use of discounted feature expectations are also closely related to the literature on successor representations/successor features. Successor representations and successor features represent values as inner products between reward weights and discounted occupancies or features of future states and actions. It was introduced as a generalization of the value function~\citep{dayan1993improving}. This idea was later generalized by~\citep{gehring2015approximate,barreto2017successor,barreto2018transfer} to handle high-dimensional, continious state spaces and to use the method for transfer learning. Specifically,~\citep{barreto2017successor} formalize this idea into the successor features (SF) framework for transfer learning, assuming that tasks share dynamics but differ only in their reward functions parameterized as linear combinations of features. This yields a value function representation that effectively decouples the environment’s transition dynamics from the reward parameters.~\citep{barreto2018transfer} further extend successor features to deep reinforcement learning and introduce generalized policy improvement over multiple tasks, demonstrating effective transfer by reusing learned successor features across a family of related tasks.

\section{Proofs and Additional Theoretical Results}

\subsection{Solve the Max-Min Objective} 
\label{sec:dual_maxmin}
In this section, we show the complete proof for solving the following max-min problem:
\begin{equation}
\label{eq:maxmin_appendix}
\max_{\pi} \min_{R \in \mathcal{R}_{\text{corr}}} J(\pi, R) = \max_{\pi} \min_{R \in \mathcal{R}_{\text{corr}}} \mathbb{E}_{(s,a) \sim \mu_\pi}[R(s,a)].
\end{equation}
where $\mathcal{R}_{\text{corr}}$ is the entire space of rewards that satisfy the correlation constraint with respect to a known proxy reward, as defined below:
\begin{align}
\label{eq:corr_appendix}
\mathcal{R}_{\text{corr}} &= \left\{ R : (s,a) \to \mathbb{R} \,\middle|\, \mathbb{E}_{\mu_{\pi_{\text{ref}}}}\left[\frac{R-M}{V} \cdot R_{\text{proxy}}\right] = r, \, J(\pi_{\text{ref}}, R) = M, \, \sigma_R^2 = V^2 \right\}  \\
&=\Bigg\{ R : (s,a) \to \mathbb{R} \,\Bigg|\, 
\mathbb{E}_{\mu_{\pi_{\text{ref}}}}[\frac{R-M}{V} \cdot R_{\text{proxy}}] = r , \nonumber\mathbb{E}_{\mu_{\pi_{\text{ref}}}}[R] = M, \mathbb{E}_{\mu_{\pi_{\text{ref}}}}[R^2] = V^2+M^2 \Bigg\}.
\end{align}
$M$ and $V$ denote the fixed mean and standard deviation of the reward function $R$ under the reference policy $\pi_{\text{ref}}$. $R_{\text{proxy}}$ is the normalized proxy reward 
\[R_{\text{proxy}}(s,a) := \frac{\tilde{R}_{\text{proxy}}(s,a) - J(\pi_{\text{ref}}, \tilde{R}_{\text{proxy}})}
{\sigma_{\tilde{R}_{\text{proxy}}}}\]
where $\tilde{R}_{\text{proxy}}$ is the original (unnormalized) proxy reward. After normalization, we have $J(\pi_{\text{ref}}, R_{\text{proxy}}) = 0$ and $\text{Var}_{\mu_{\pi_{\text{ref}}}}(R_{\text{proxy}}) = 1$.

To solve the challenge that the objective $\mathbb{E}_{\mu_\pi}[R(s,a)]$ depends on the state-action occupancy $\mu_\pi$, whereas the constraints defining $\mathcal{R}_{\text{corr}}$ are expressed in terms of $\mu_{\pi_{\text{ref}}}$. We apply a \emph{change-of-measure} technique~\citep{hu2013kullback,lam2016robust} to rewrite the expectation under $\mu_{\pi_{\text{ref}}}$. Specifically, let $L(s,a)$ denote the Radon-Nikodym derivative: 
\[L(s,a) = \frac{\mu_\pi(s,a)}{\mu_{\pi_{\text{ref}}}(s,a)}\] 
By definition, $L(s,a) \geq 0$ and $\mathbb{E}_{\mu_{\pi_{\text{ref}}}}[L(s,a)] = 1$. Applying the change-of-measure formula, we can express the return as: 

\begin{align}
\mathbb{E}_{\mu_\pi}[R(s,a)] 
&= \int_{\mathcal{S} \times \mathcal{A}} \mu_\pi(s,a) R(s,a) \, d(s,a) \nonumber \\
&= \int_{\mathcal{S} \times \mathcal{A}} \mu_{\pi_{\text{ref}}}(s,a) \frac{\mu_\pi(s,a)}{\mu_{\pi_{\text{ref}}}(s,a)} R(s,a) \, d(s,a) \nonumber \\
&= \mathbb{E}_{\mu_{\pi_{\text{ref}}}}[L(s,a) R(s,a)] \nonumber 
\end{align}

Thus, both the objective and the constraints can be rewritten as expectations with respect to the reference distribution $\mu_{\pi_{\text{ref}}}$. For notational simplicity, we will suppress the variables $(s,a)$ when necessary. Under this reparameterization, the max-min objective in Equation~\ref{eq:maxmin_appendix} can be reformulated as:
\begin{equation}
\label{eq:inner_appendix}
\max_\pi\min_{R \in \mathcal{R}_{\text{corr}}} \mathbb{E}_{\mu_{\pi_{\text{ref}}}}[L \cdot R].
\end{equation}

\paragraph{Solve Inner Minimization Problem.}
The Lagrangian functional associated with the inner minimization problem of~\ref{eq:inner_appendix} is defined as:

\[l_0(\lambda_1, \lambda_2, \lambda_3, R)= \mathbb{E}_{\mu_{\pi_{\text{ref}}}}[L \cdot R - \lambda_1 \frac{R-M}{V} \cdot R_{\text{proxy}} - \lambda_2 R - \lambda_3 R^2] + \lambda_1 r + \lambda_2 M +\lambda_3 (M^2+V^2)\]

where $\lambda_1, \lambda_2, \lambda_3$ are the Lagrange multipliers corresponding to the correlation constraint, mean constraint, and variance constraint, respectively. Then the inner minimization problem in Equation~\ref{eq:inner_appendix} is equivalent to the following problem: 
\begin{equation}
\label{eq:lagrangian_dual_appendix}
\max_{\lambda_1, \lambda_2, \lambda_3} \min_{R \in \mathcal{R}_{\text{corr}}} l_0(\lambda_1, \lambda_2, \lambda_3, R)
\end{equation}

We now solve the inner minimization problem in Equation~\ref{eq:lagrangian_dual_appendix} by finding the optimal $R$ for fixed dual variables $(\lambda_1, \lambda_2, \lambda_3)$. Taking the functional derivative of the Lagrangian $l_0$ with respect to $R(s,a)$ gives: 
\[\frac{\partial l_0}{\partial R} = \mu_{\pi_{\text{ref}}}(s,a)[(L - \lambda_1 \frac{R_{\text{proxy}}(s,a) }{V}- \lambda_2) - 2 \lambda_3 R]\]

When $\mu_{\pi_{\text{ref}}}(s,a) > 0$, setting the derivative of the Lagrangian to zero yields the optimal adversarial reward function:
\begin{equation}
\label{eq:optimalr_appendix}
\boxed{R^*(s,a) = \frac{L(s,a) - \lambda_1 \frac{R_{\text{proxy}}(s,a) }{V} - \lambda_2}{2\lambda_3}}
\end{equation}
However, for state-action pairs where $\mu_{\pi_{\text{ref}}}(s,a) = 0$, i.e., those not visited under the reference policy, the correlation and moment constraints become vacuous. In these regions, the adversarial reward $R^*(s,a)$ can be driven arbitrarily poor, 
reflecting that no constraint prevents the adversary from assigning highly penalizing values to rarely visited or unobserved state-action pairs. Nevertheless, consider the case where $\mu_{\pi_{\text{ref}}}(s,a) > 0$, after substituting the optimal $R^*$ from Equation~\ref{eq:optimalr_appendix} into the Lagrangian $l_0$ in Equation~\ref{eq:lagrangian_dual_appendix} and simplifying, we obtain the following dual objective:
\begin{equation}
\label{eq:objective_lambda_appendix}
\max_{\lambda_1, \lambda_2, \lambda_3} l_0(\lambda_1, \lambda_2, \lambda_3, R^*) = \mathbb{E}_{\mu_{\pi_{\text{ref}}}}\left[ \frac{(L(s,a) - \lambda_1 \frac{R_{\text{proxy}}(s,a) }{V} - \lambda_2)^2}{4\lambda_3} \right] + \lambda_1 r + \lambda_2 M +\lambda_3 (M^2+V^2)
\end{equation}

We now compute the gradients of the dual objective with respect to the dual variables:
\begin{align}
\frac{\partial l_0}{\partial \lambda_1} &= -\mathbb{E}_{\mu_{\pi_{\text{ref}}}}\left[\frac{(L - \lambda_1 \frac{R_{\text{proxy}}(s,a) }{V} - \lambda_2) \frac{R_{\text{proxy}}(s,a) }{V}}{2\lambda_3}\right] + r \nonumber\\
\frac{\partial l_0}{\partial \lambda_2} &= -\mathbb{E}_{\mu_{\pi_{\text{ref}}}}\left[\frac{L - \lambda_1 \frac{R_{\text{proxy}}(s,a) }{V} - \lambda_2}{2\lambda_3}\right]+M \nonumber\\
\frac{\partial l_0}{\partial \lambda_3} &= -\mathbb{E}_{\mu_{\pi_{\text{ref}}}}\left[\frac{(L - \lambda_1 \frac{R_{\text{proxy}}(s,a) }{V} - \lambda_2)^2}{4\lambda_3^2}\right] + M^2 + V^2 \label{eq:lambda3-grad}
\end{align}

Setting these gradients to zero yields the system of equations:
\begin{align}
\label{eq:stationary1}
\mathbb{E}_{\mu_{\pi_{\text{ref}}}}\left[\frac{(L - \lambda_1 \frac{R_{\text{proxy}}(s,a) }{V} - \lambda_2) \frac{R_{\text{proxy}}(s,a) }{V}}{2\lambda_3}\right] &= r, \\
\label{eq:stationary2}
\mathbb{E}_{\mu_{\pi_{\text{ref}}}}\left[\frac{L - \lambda_1 \frac{R_{\text{proxy}}(s,a) }{V} - \lambda_2}{2\lambda_3}\right] &= M, \\
\label{eq:stationary3}
\mathbb{E}_{\mu_{\pi_{\text{ref}}}}\left[\frac{(L - \lambda_1 \frac{R_{\text{proxy}}(s,a) }{V}- \lambda_2)^2}{4\lambda_3^2}\right] &= M^2+V^2.
\end{align}

Expanding and simplifying each condition:

\paragraph{Solving for \(\lambda_2\):}
Starting with Equation~\ref{eq:stationary2},
\begin{equation}
\mathbb{E}_{\mu_{\pi_{\text{ref}}}}[L] - \lambda_1 \mathbb{E}_{\mu_{\pi_{\text{ref}}}}[\frac{R_{\text{proxy}}(s,a) }{V}] - \lambda_2 = 2\lambda_3M \nonumber
\end{equation}
Recall from our normalization that $\mathbb{E}_{\mu_{\pi_{\text{ref}}}}[R_{\text{proxy}}] = 0$. Thus,
\begin{equation}
\lambda_2 = \mathbb{E}_{\mu_{\pi_{\text{ref}}}}[L]-2\lambda_3 M \nonumber
\end{equation}
Since $L(s,a) = \frac{\mu_\pi(s,a)}{\mu_{\pi_{\text{ref}}}(s,a)}$, and using properties of Radon-Nikodym derivatives, we have:
\begin{equation}
\mathbb{E}_{\mu_{\pi_{\text{ref}}}}[L] = 1  \nonumber
\end{equation}
Thus, we find:
\begin{equation}
\boxed{\lambda_2 = 1-2\lambda_3 M} \nonumber
\end{equation}

\paragraph{Solving for \(\lambda_1\):}
Substituting $\lambda_2 = 1-2\lambda_3 M$ into Equation~\ref{eq:stationary1},
\begin{align}
2r\lambda_3 &= \mathbb{E}_{\mu_{\pi_{\text{ref}}}}[(L - \lambda_1 \frac{R_{\text{proxy}}}{V} - 1+2\lambda_3 M) \frac{R_{\text{proxy}}}{V}] \nonumber \\
&= \mathbb{E}_{\mu_{\pi_{\text{ref}}}}[L \cdot \frac{R_{\text{proxy}}}{V}] - \lambda_1 \mathbb{E}_{\mu_{\pi_{\text{ref}}}}[\frac{R^2_{\text{proxy}}}{V^2}] - (1-2\lambda_3M)\mathbb{E}_{\mu_{\pi_{\text{ref}}}}[\frac{R_{\text{proxy}}}{V}] \nonumber
\end{align}
Again, using normalization, $\mathbb{E}_{\mu_{\pi_{\text{ref}}}}[R_{\text{proxy}}] = 0$ and $\mathbb{E}_{\mu_{\pi_{\text{ref}}}}[R_{\text{proxy}}^2] = 1$, so we get:
\begin{equation}
2r\lambda_3 = \mathbb{E}_{\mu_{\pi_{\text{ref}}}}[L \cdot \frac{R_{\text{proxy}}}{V}] - \frac{\lambda_1}{V^2} \nonumber
\end{equation}
which rearranges to:
\begin{equation}
\boxed{\lambda_1 = V\mathbb{E}_{\mu_{\pi_{\text{ref}}}}[L \cdot R_{\text{proxy}}] - 2r\lambda_3V^2} \nonumber
\end{equation}

\paragraph{Solving for \(\lambda_3\):}
Substituting $\lambda_2 = 1-2\lambda_3 M$ into Equation~\ref{eq:stationary3},
\begin{align}
4\lambda_3^2 (M^2+V^2) &= \mathbb{E}_{\mu_{\pi_{\text{ref}}}}\left[ (L - \lambda_1 \frac{R_{\text{proxy}}}{V} - 1+2\lambda_3M)^2 \right] \nonumber\\
&= \mathbb{E}_{\mu_{\pi_{\text{ref}}}}[L^2] + \lambda_1^2 \mathbb{E}_{\mu_{\pi_{\text{ref}}}}[\frac{R^2_{\text{proxy}}}{V^2}] + (1-2\lambda_3M)^2 
 - 2\lambda_1 \mathbb{E}_{\mu_{\pi_{\text{ref}}}}[L \cdot \frac{R_{\text{proxy}}}{V}] \nonumber\\ &\quad- 2(1-2\lambda_3M)\mathbb{E}_{\mu_{\pi_{\text{ref}}}}[L] + 2\lambda_1(1-2\lambda_3M) \mathbb{E}_{\mu_{\pi_{\text{ref}}}}[\frac{R_{\text{proxy}}}{V}] \nonumber
\end{align}
Again, using normalization ($\mathbb{E}[R_{\text{proxy}}] = 0$, $\mathbb{E}[R_{\text{proxy}}^2] = 1$, $\mathbb{E}[L] = 1$), this simplifies to:
\begin{align}
4\lambda_3^2 (M^2+V^2) &= \mathbb{E}_{\mu_{\pi_{\text{ref}}}}[L^2] + \frac{\lambda_1^2}{V^2} - 2\lambda_1 \mathbb{E}_{\mu_{\pi_{\text{ref}}}}[L \cdot \frac{R_{\text{proxy}}}{V}] +4\lambda_3^2M^2- 1 \nonumber\\
4\lambda_3^2V^2 &=\mathbb{E}_{\mu_{\pi_{\text{ref}}}}[L^2] + \frac{\lambda_1^2}{V^2} - 2\lambda_1 \mathbb{E}_{\mu_{\pi_{\text{ref}}}}[L \cdot \frac{R_{\text{proxy}}}{V}] - 1  \nonumber
\end{align}

Now substitute $\lambda_1 = V\mathbb{E}_{\mu_{\pi_{\text{ref}}}}[L \cdot R_{\text{proxy}}] - 2r\lambda_3V^2$ into this expression. After rearrangement and simplification, we obtain:
\begin{equation}
4\lambda_3^2 (1 - r^2) V^2= \mathbb{E}_{\mu_{\pi_{\text{ref}}}}[L^2] - \mathbb{E}^2_{\mu_{\pi_{\text{ref}}}}[L \cdot R_{\text{proxy}}]-1 \nonumber
\end{equation}
Thus,
\begin{equation}
\lambda_3 = \pm \frac{1}{2} \frac{\sqrt{\mathbb{E}_{\mu_{\pi_{\text{ref}}}}[L^2] - \mathbb{E}^2_{\mu_{\pi_{\text{ref}}}}[L \cdot R_{\text{proxy}}]-1}}{V\sqrt{1-r^2}} \nonumber
\end{equation}

We argue that $\lambda_3 < 0$ yields the optimal dual variable. To determine which root maximizes the above dual objective in Equation~\ref{eq:objective_lambda_appendix}, we compute the second derivative from Equation~\ref{eq:lambda3-grad}: 
\begin{equation}
    \frac{\partial^2 l_0}{\partial \lambda_3^2} = \mathbb{E}_{\mu_{\pi_{\text{ref}}}}\left[\frac{(L - \lambda_1 \frac{R_{\text{proxy}}(s,a) }{V} - \lambda_2)^2}{2\lambda_3^3}\right] \nonumber
\end{equation}

Since the numerator is always non-negative and when $\lambda_3 < 0$, we have $\frac{\partial^2 l_0}{\partial \lambda_3^2}<0$, which implies the dual objective is concave in $\lambda_3$ around this root. Thus, selecting the negative root yields a local maximum of the dual objective. 

Recognizing that $\mathbb{E}_{\mu_{\pi_{\text{ref}}}}[L^2]-1$ corresponds to the $\chi^2$ divergence between the occupancy measures:
\begin{equation}
\chi^2(\mu_\pi \,\|\, \mu_{\pi_{\text{ref}}}) = \mathbb{E}_{\mu_{\pi_{\text{ref}}}}[L^2] - 1 \nonumber
\end{equation}
and noting that:
\begin{equation}
\mathbb{E}_{\mu_{\pi_{\text{ref}}}}[L \cdot R_{\text{proxy}}] = \mathbb{E}_{\mu_{\pi}}[R_{\text{proxy}}] \nonumber
\end{equation}
we can express the solution for $\lambda_3$ as:
\begin{equation}
\label{eq:lambda3}
\boxed{
\lambda_3 = -\frac{\sqrt{\chi^2(\mu_\pi \,\|\, \mu_{\pi_{\text{ref}}}) - \mathbb{E}^2_{\mu_{\pi}}[R_{\text{proxy}}]}}{2V\sqrt{1-r^2}}.
}
\end{equation}

\paragraph{Solve Outer Maximization Problem.}
Now that we have solved for the optimal primal variable $R$ and dual variables $\lambda_1$, $\lambda_2$, and $\lambda_3$, we plug them back into the original max-min objective in Equation~\ref{eq:inner_appendix}:
\begin{equation}
\label{eq:objective_lambda_repeat}
 \max_\pi \mathbb{E}_{\mu_{\pi_{\text{ref}}}}\left[ L \cdot R \right]=\max_\pi\mathbb{E}_{\mu_{\pi_{\text{ref}}}}\left[ L(s,a) \cdot \frac{L(s,a) - \lambda_1 \frac{R_{\text{proxy}}(s,a)}{V} - \lambda_2}{2\lambda_3} \right] 
\end{equation}

Using the earlier substitutions:
\begin{align*}
\lambda_1 &= V \cdot \mathbb{E}_{\mu_\pi}[R_{\text{proxy}}] - 2r \lambda_3 V^2, \\
\lambda_2 &= 1 - 2\lambda_3 M, \\
\lambda_3 &= -\frac{1}{2} \cdot \frac{\sqrt{\chi^2(\mu_\pi \,\|\, \mu_{\pi_{\text{ref}}}) - \mathbb{E}^2_{\mu_\pi}[R_{\text{proxy}}]}}{V\sqrt{1 - r^2}},
\end{align*}
We simplify the expression:
\begin{equation}
\max_\pi\mathbb{E}_{\mu_{\pi_{\text{ref}}}}[L R] = \max_\pi \frac{1}{2\lambda_3} \left( \mathbb{E}_{\mu_{\pi_{\text{ref}}}}[L^2] - \lambda_1 \mathbb{E}_{\mu_{\pi_{\text{ref}}}}\left[ L \cdot \frac{R_{\text{proxy}}}{V} \right] - \lambda_2 \mathbb{E}_{\mu_{\pi_{\text{ref}}}}[L] \right) \nonumber
\end{equation}

Recall the identities:
\begin{align*}
\mathbb{E}_{\mu_{\pi_{\text{ref}}}}[L^2] &= \chi^2(\mu_\pi \,\|\, \mu_{\pi_{\text{ref}}}) + 1, \\
\mathbb{E}_{\mu_{\pi_{\text{ref}}}}[L \cdot R_{\text{proxy}}] &= \mathbb{E}_{\mu_\pi}[R_{\text{proxy}}], \\
\mathbb{E}_{\mu_{\pi_{\text{ref}}}}[L] &= 1,
\end{align*}
We substitute these and get:
\begin{equation}
\max_\pi \mathbb{E}_{\mu_{\pi_{\text{ref}}}}[L R] = \frac{1}{2\lambda_3} \left( \chi^2(\mu_\pi \,\|\, \mu_{\pi_{\text{ref}}}) + 1 - \lambda_1 \cdot \frac{\mathbb{E}_{\mu_\pi}[R_{\text{proxy}}]}{V} - \lambda_2 \right) \nonumber
\end{equation}

Now substitute the expressions for $\lambda_1$ and $\lambda_2$:
\begin{align}
\max_\pi\mathbb{E}_{\mu_{\pi_{\text{ref}}}}[L R] &= \max_\pi\frac{1}{2\lambda_3} \left( \chi^2 + 1 - \left( \mathbb{E}_{\mu_\pi}[R_{\text{proxy}}] - 2r\lambda_3 V \right) \cdot \frac{\mathbb{E}_{\mu_\pi}[R_{\text{proxy}}]}{V} - (1 - 2\lambda_3 M) \right) \nonumber \\
&= \max_\pi\frac{1}{2\lambda_3} \left( \chi^2 - \frac{\mathbb{E}_{\mu_\pi}^2[R_{\text{proxy}}]}{V} + 2r \lambda_3 \mathbb{E}_{\mu_\pi}[R_{\text{proxy}}] + 2\lambda_3 M \right) \nonumber
\end{align}

Now cancel out $2\lambda_3$ in numerator and denominator:
\begin{equation}
\max_\pi\mathbb{E}_{\mu_{\pi_{\text{ref}}}}[L R] = \max_\pi \mathbb{E}_{\mu_\pi}[R_{\text{proxy}}] \cdot r - \frac{1}{2\lambda_3} \cdot \left( \frac{\mathbb{E}_{\mu_\pi}^2[R_{\text{proxy}}]}{V} - \chi^2 \right) + M \nonumber
\end{equation}

Now plug in the expression for $\lambda_3$:
\begin{equation}
\lambda_3 = -\frac{1}{2} \cdot \frac{\sqrt{\chi^2 - \mathbb{E}_{\mu_\pi}^2[R_{\text{proxy}}]}}{V\sqrt{1 - r^2}} \nonumber
\end{equation}

This gives the final outer problem for the original max-min objective in Equation~\ref{eq:maxmin_appendix}:
\begin{equation}
\label{eq:maxmin_objective_appendix}
\boxed{
\max_\pi \quad r \cdot V \cdot \mathbb{E}_{\mu_\pi}[R_{\text{proxy}}] - V \cdot \sqrt{1 - r^2} \cdot \sqrt{ \chi^2(\mu_\pi \,\|\, \mu_{\pi_{\text{ref}}}) - \mathbb{E}_{\mu_\pi}^2[R_{\text{proxy}}] } + M
}
\end{equation}

\subsection{Proof of Optimality}
\label{sec:optimal} 

Recall the inner minimization problem of our max-min objective in Equation~\ref{eq:inner_appendix}:
\begin{equation}
\min_{R \in \mathcal{R}_{\text{corr}}} \mathbb{E}_{\mu_{\pi_{\text{ref}}}}[L \cdot R] \nonumber
\end{equation}
where $L = \mu_\pi(s,a)/\mu_{\pi_{\text{ref}}}(s,a)$ is treated as fixed, and the feasible set is:
\begin{align}
\mathcal{R}_{\text{corr}} = \Bigg\{ R : (s,a) \to \mathbb{R} \,\Bigg|\, 
&\mathbb{E}_{\mu_{\pi_{\text{ref}}}}[(R - M) \cdot R_{\text{proxy}}] = r \cdot V, \nonumber \\
&\mathbb{E}_{\mu_{\pi_{\text{ref}}}}[R] = M, \quad \mathbb{E}_{\mu_{\pi_{\text{ref}}}}[R^2] = V^2+M^2 \Bigg\} \nonumber
\end{align}

The feasible region is not convex due to the \emph{quadratic equality} constraint $\mathbb{E}_{\mu_{\pi_{\text{ref}}}}[R^2] = V^2+M^2$. This defines the boundary of an $L^2$ ball (a hypersphere) in function space, which is not convex. Therefore, traditional convex programming tools and strong duality do not directly apply.

However, we still claim that the resulting $R^*$ derived in Appendix~\ref{sec:dual_maxmin} is globally optimal. This is supported by the following facts:

\paragraph{Stationarity.}

When considering $R^*$ for any fixed dual variables $\lambda_1, \lambda_2, \lambda_3$, we are looking at the inner minimization problem in Equation~\ref{eq:lagrangian_dual_appendix} as follows:
\[\min_{R \in \mathcal{R}_{\text{corr}}} l_0(\lambda_1, \lambda_2, \lambda_3, R)\]
The term with $R(s,a)$ in $l_0$ is:
\[\mathbb{E}_{\mu_{\pi_{\text{ref}}}}[L \cdot R - \lambda_1 \frac{R-M}{V} \cdot R_{\text{proxy}} - \lambda_2 R - \lambda_3 R^2]\]
For this quadratic in $R$ to have a minimum (since it is a minimization problem for $R$), the coefficient of $R^2$ must be positive. In our case, the coefficient is $-\lambda_3$. Therefore, for the minimization problem to be well-posed and have a finite minimum, we must have $\lambda_3<0$. This condition ensures that the quadratic term in $R$ is a concave upward parabola, which means that a minimum exists. Moreover, in Appendix~\ref{sec:dual_maxmin}, we explicitly state that $R^*(s,a)$ is derived by setting the derivative of the Lagrangian function $l_0$ in Equation~\ref{eq:lagrangian_dual_appendix} to zero with respect to $R(s,a)$. Thus,  $R^*(s,a)$ is indeed the optimal value for the minimization problem for fixed $\lambda_1, \lambda_2, \lambda_3$ where $\lambda_3<0$. The Stationarity in this context implies that $R^*$ lies within the domain where the Lagrangian is well-defined and differentiable, which it does.

\paragraph{Feasibility.}
We also argue that the closed-form primal solution $R^*(\lambda^*)$, where $\lambda^*$ denotes the optimal dual solution, is feasible in the original sense, that is, it satisfies the three equality constraints in the feasible set $\mathcal{R}_{\text{corr}}$. Specifically, as detailed in Appendix~\ref{sec:dual_maxmin}, we substitute $R^*$ back into the dual objective $l_0$ and compute the gradient with respect to each dual variable. We then solve:
\[
\frac{\partial l_0(\lambda_1, \lambda_2, \lambda_3, R^*(\lambda_1, \lambda_2, \lambda_3))}{\partial \lambda_i} = 0, \quad \text{for } i=1,2,3,
\]
to find the optimal values $\lambda_1^*, \lambda_2^*, \lambda_3^*$. 

By the chain rule, we have:
\begin{align}
\frac{\partial l_0(\lambda_1, \lambda_2, \lambda_3, R^*(\lambda_1, \lambda_2, \lambda_3))}{\partial \lambda_i} 
= \left\langle \frac{\partial l_0}{\partial R}, \frac{\partial R^*}{\partial \lambda_i} \right\rangle + \frac{\partial l_0}{\partial \lambda_i}, \nonumber
\end{align}
where the first term vanishes because $R^*$ is chosen to minimize $l_0$ for fixed $\lambda$ (i.e., $\partial l_0/\partial R = 0$ at $R^*$). Therefore, the derivative simplifies to:
\[
\frac{\partial l_0(\lambda_1, \lambda_2, \lambda_3, R^*)}{\partial \lambda_i} = \frac{\partial l_0}{\partial \lambda_i}.
\]

Setting these derivatives to zero yields:
\begin{align*}
\frac{\partial l_0}{\partial \lambda_1} &= -\mathbb{E}_{\mu_{\pi_{\text{ref}}}}[(R^* - M) R_{\text{proxy}}] + rV = 0, \\
\frac{\partial l_0}{\partial \lambda_2} &= -\mathbb{E}_{\mu_{\pi_{\text{ref}}}}[R^*] + M = 0, \\
\frac{\partial l_0}{\partial \lambda_3} &= -\mathbb{E}_{\mu_{\pi_{\text{ref}}}}[(R^*)^2] + V^2 + M^2 = 0,
\end{align*}
which exactly recover the original feasibility constraints. Hence, the solution $R^*(\lambda^*)$ is feasible by construction.

Therefore, $R^*$ satisfies both stationarity and feasibility. In general, stationarity and feasibility are not sufficient for global optimality when the feasible set is nonconvex. In our case, however, global optimality does hold, relying on the specific structure of the inner problem.

Recall the inner minimization problem discussed above, and we work in the Hilbert space $\mathcal{H}=L^2(\mu_{\pi_{\text{ref}}})$. Using the normalization assumptions: $\mathbb{E}_{\mu_{\pi_{\text{ref}}}}[R_{\text{proxy}}]=0$ and $\mathbb{E}_{\mu_{\pi_{\text{ref}}}}[R^2_{\text{proxy}}]=1$, the constraints can be rewritten as inner products in $\mathcal{H}$:
\begin{itemize}
    \item $\langle R, \mathbf{1}\rangle = M$ (mean constraint)
    \item $\langle R,R_{\text{proxy}}\rangle=rV$ (correlation constraint)
    \item $\|R\|_2^2=V^2+M^2$ (norm constraint). 
\end{itemize}

Let $\{e_0, e_1, e_2, …\}$ be an orthonormal basis of $\mathcal{H}$, where 
\begin{itemize}
    \item $e_0$ is proportional to the constant function of $\mathbf{1}$
    \item $e_1=R_{\text{proxy}}$,
    \item and $\{e_k\}_{k\ge2}$, spans the orthogonal complement of span $\{\mathbf{1}, R_{\text{proxy}}\}$.
\end{itemize}

$e_0$ and $e_1$ is orthonormal because $E_{\mu_{\pi_{\text{ref}}}}[R_{\text{proxy}}]=0$. Expanding 

\[R=\alpha_0 e_0 + \alpha_1 e_1 +\sum_{k\ge2} \alpha_k e_k\]
Notice that the mean constraint and correlation constraints uniquely fix $\alpha_0$ and $\alpha_1$. The norm constraint then forces: 
\[\sum_k \alpha_k = \rho^2\]
for some constant radius $\rho >0$. Hence the remaining degrees of freedom lie on a sphere in the subspace orthogonal to $\mathbf{1}$ and $R_{\text{proxy}}$. This is to say, although $\mathcal{R}_{\text{corr}}$ is not convex in the ambient space, it is a spherical manifold (the boundary of an $L^2$-ball intersected with an affine subspace), which is compact and smooth. 
Moreover, the objective is linear in $R$:

\[\mathbb{E}_{\mu_{\pi_{\text{ref}}}} [L \cdot R]=\langle L, R\rangle=\text{const}+\langle L’, R’\rangle\]

where $L’$ is the projection of $L=\frac{\mu_\pi}{\mu_{\pi_{\text{ref}}}}$ onto the subspace spanned by $\{e_k\}_{k\ge2}$ and $R’=\sum_{k\ge2} \alpha_k e_k$. Therefore the optimization reduces to 

\[\min_{\|R’\|_2=\rho} \langle L’,R’\rangle\] 
This is simply minimizing a linear function over a Euclidean sphere. In this setting, it is well-known that the only stationary points of a linear functional on a sphere are its global maximum and global minimum. There are no other local minima or saddle points. Thus, on this particular nonconvex feasible set, \textbf{any feasible stationary point is automatically a global optimizer.}

In summary, our previous analysis shows that:
\begin{enumerate}
    \item For fixed ($\lambda_1,\lambda_2,\lambda_3$) with $\lambda_3 < 0$, the Lagrangian is a strictly convex quadratic in $R$, so its stationary point $R^\star(\lambda)$ is the unique global minimizer of the inner problem with those multipliers.
    \item Solving the dual and enforcing feasibility recovers the specific choice of multipliers $\lambda^\star$ for which $R^\star(\lambda^\star)$ lies on the sphere defined by the norm constraint.
    \item Because the reduced problem is linear over a sphere, this feasible stationary point $R^\star(\lambda^\star)$ must be the global minimizer of the original inner problem.
\end{enumerate}

\subsection{Proof that $\chi^2(\mu_\pi \,\|\, \mu_{\pi_{\text{ref}}}) \geq \mathbb{E}_{\mu_\pi}^2[R_{\text{proxy}}]$}
\label{sec:kai_meansquare}

To ensure that the inner term of the square root in Equation~\ref{eq:lambda3} remains non-negative, we need to show that
\begin{equation}
\chi^2(\mu_\pi \,\|\, \mu_{\pi_{\text{ref}}}) \geq \mathbb{E}_{\mu_\pi}^2[R_{\text{proxy}}] \nonumber
\end{equation}

\paragraph{Proof.}
Recall that
\[
\mathbb{E}_{\mu_\pi}[R_{\text{proxy}}] = \mathbb{E}_{\mu_{\pi_{\text{ref}}}}[L \cdot R_{\text{proxy}}],
\]
where \( L(s,a) = \frac{\mu_\pi(s,a)}{\mu_{\pi_{\text{ref}}}(s,a)} \) is the Radon-Nikodym derivative.  
Since \( R_{\text{proxy}} \) is normalized to have zero mean under \(\mu_{\pi_{\text{ref}}}\), we have:
\[
\mathbb{E}_{\mu_{\pi_{\text{ref}}}}[R_{\text{proxy}}] = 0.
\]
Thus,
\begin{align}
\mathbb{E}_{\mu_\pi}[R_{\text{proxy}}]
&= \mathbb{E}_{\mu_{\pi_{\text{ref}}}}[R_{\text{proxy}}(s,a) (L(s,a) - 1)] \nonumber \\
&= \sum_{(s,a)} R_{\text{proxy}}(s,a) \mu_{\pi_{\text{ref}}}(s,a) (L(s,a) - 1) \nonumber
\end{align}

Applying the Cauchy-Schwarz inequality:
\[
\left( \sum_{(s,a)} R_{\text{proxy}}(s,a) \mu_{\pi_{\text{ref}}}(s,a) (L(s,a) - 1) \right)^2
\leq
\left( \sum_{(s,a)} \mu_{\pi_{\text{ref}}}(s,a) R_{\text{proxy}}^2(s,a) \right)
\left( \sum_{(s,a)} \mu_{\pi_{\text{ref}}}(s,a) (L(s,a) - 1)^2 \right)
\]

By the assumptions:
\(\mathbb{E}_{\mu_{\pi_{\text{ref}}}}[R_{\text{proxy}}^2] = 1\),
 \(\sum_{(s,a)} \mu_{\pi_{\text{ref}}}(s,a) (L(s,a) - 1)^2 = \chi^2(\mu_\pi \,\|\, \mu_{\pi_{\text{ref}}})\).

We obtain:
\[
\mathbb{E}_{\mu_\pi}^2[R_{\text{proxy}}] \leq \chi^2(\mu_\pi \,\|\, \mu_{\pi_{\text{ref}}})
\]
as desired.

\subsection{Derive Lagrangian Functional for Linear Max-Min Objective}
\label{sec:lag_linearmaxmin}
Recall that our max-min optimization under the structured reward assumption is as follows:
\begin{equation}
\label{eq:linear_maxmin_appendix}
\max_{\pi} \min_{\boldsymbol{\theta} \in \mathcal{R}_{\text{corr}}^{\text{lin}}, \, \boldsymbol{\theta} \geq 0} \mathbb{E}_{(s,a) \sim \mu_\pi}\left[\boldsymbol{\theta}^\top \boldsymbol{\phi}(s,a)\right].
\end{equation}
 where $\mathcal{R}_{\text{corr}}^{\text{lin}}$ is the uncertainty set defined as follow:

\begin{equation}
\label{eq:r_corr_lin_compact_appendix}
\mathcal{R}_{\text{corr}}^{\text{lin}} = \left\{ \boldsymbol{\theta} \in \mathbb{R}^M \,\middle|\,
\mathbb{E}_{\mu_{\pi_{\text{ref}}}}[\boldsymbol{\theta}^\top \boldsymbol{\phi} \cdot R_{\text{proxy}}] = r,\;
\mathbb{E}_{\mu_{\pi_{\text{ref}}}}[\boldsymbol{\theta}^\top \boldsymbol{\phi}] = 0,\;
\mathbb{E}_{\mu_{\pi_{\text{ref}}}}[(\boldsymbol{\theta}^\top \boldsymbol{\phi})^2] = 1
\right\}.
\end{equation}

We assume without loss of generality that the worst-case reward $R(s,a) = \boldsymbol{\theta}^\top \boldsymbol{\phi}(s,a)$ is normalized to have zero mean and unit variance under the reference policy $\pi_{\text{ref}}$. This corresponds to setting $M = 0$ and $V = 1$, which, as shown in our earlier derivation, does not affect the resulting optimal policy. As before, $R_{\text{proxy}}$ denotes the normalized proxy reward under $\pi_{\text{ref}}$, satisfying $\mathbb{E}_{\mu_{\pi_{\text{ref}}}}[R_{\text{proxy}}] = 0$ and $\text{Var}_{\mu_{\pi_{\text{ref}}}}[R_{\text{proxy}}] = 1$.

Similar to previous steps, we introduce the Radon-Nikodym derivative 
\[L(s,a) = \frac{\mu_\pi(s,a)}{\mu_{\pi_{\text{ref}}}(s,a)}\]

We use a change-of-measure, and define the Lagrangian functional for the inner minimization in Equation~\ref{eq:linear_maxmin_appendix} as: 
\begin{align}
l_1(\lambda_1, \lambda_2, \lambda_3, \boldsymbol{\theta})
&= \mathbb{E}_{\mu_{\pi_{\text{ref}}}}\left[L \cdot \boldsymbol{\theta}^\top \boldsymbol{\phi}\right]
- \lambda_1\left(\mathbb{E}_{\mu_{\pi_{\text{ref}}}}\left[R_{\text{proxy}} \cdot \boldsymbol{\theta}^\top \boldsymbol{\phi}\right] - r\right) \nonumber \\
&\quad - \lambda_2 \mathbb{E}_{\mu_{\pi_{\text{ref}}}}\left[\boldsymbol{\theta}^\top \boldsymbol{\phi}\right]
- \lambda_3\left(\mathbb{E}_{\mu_{\pi_{\text{ref}}}}\left[(\boldsymbol{\theta}^\top \boldsymbol{\phi})^2\right] - 1\right) \nonumber\\
&= \mathbb{E}_{\mu_{\pi_{\text{ref}}}}\left[(L - \lambda_1 R_{\text{proxy}} - \lambda_2) \boldsymbol{\theta}^\top \boldsymbol{\phi}\right]
- \lambda_3 \mathbb{E}_{\mu_{\pi_{\text{ref}}}}\left[(\boldsymbol{\theta}^\top \boldsymbol{\phi})^2\right]
+ \lambda_1 r + \lambda_3 \nonumber\\
&= \sum_{(s,a)}\mu_{\pi_{\text{ref}}}(s,a)\left[ (L - \lambda_1 R_{\text{proxy}} - \lambda_2) \boldsymbol{\theta}^\top \boldsymbol{\phi} - (\boldsymbol{\theta}^\top \boldsymbol{\phi})^2 \right] + \lambda_1 r + \lambda_3 \nonumber
\end{align}

Define the following terms for simplicity:
\begin{align}
v(s,a) &= \mu_{\pi}(s,a)  \nonumber \\
D(s,a) &= \mu_{\pi_{\text{ref}}}(s,a) \cdot R_{\text{proxy}}(s,a)  \nonumber \\
C(s,a) &= \mu_{\pi_{\text{ref}}}(s,a) \nonumber\\
u_{\lambda_1, \lambda_2}(s,a) &= v(s,a) - \lambda_1 D(s,a) - \lambda_2 C(s,a) \nonumber
\end{align}

Then the Lagrangian function simplifies to:
\begin{align}
l_1(\lambda_1, \lambda_2, \lambda_3, \boldsymbol{\theta})
&= \sum_{(s,a)} \left[ u_{\lambda_1, \lambda_2}(s,a) \boldsymbol{\theta}^\top \boldsymbol{\phi}(s,a) - \lambda_3 C(s,a) (\boldsymbol{\theta}^\top \boldsymbol{\phi}(s,a))^2 \right] + \lambda_1 r + \lambda_3  \nonumber\\
&= \boldsymbol{\theta}^\top \left( \sum_{(s,a)} u_{\lambda_1, \lambda_2}(s,a) \boldsymbol{\phi}(s,a) \right)
- \lambda_3 \boldsymbol{\theta}^\top \left( \sum_{(s,a)} C(s,a) \boldsymbol{\phi}(s,a) \boldsymbol{\phi}(s,a)^\top \right) \boldsymbol{\theta}
+ \lambda_1 r + \lambda_3 \nonumber
\end{align}

where we expand the quadratic term:
\begin{align}
\sum_{(s,a)} C(s,a) (\boldsymbol{\theta}^\top \boldsymbol{\phi}(s,a))^2 
&= \sum_{(s,a)} C(s,a) (\boldsymbol{\phi}(s,a)^\top \boldsymbol{\theta})^2 \nonumber\\
&= \sum_{(s,a)} C(s,a) \boldsymbol{\theta}^\top \boldsymbol{\phi}(s,a) \boldsymbol{\phi}(s,a)^\top \boldsymbol{\theta} \nonumber\\
&= \boldsymbol{\theta}^\top \left( \sum_{(s,a)} C(s,a) \boldsymbol{\phi}(s,a) \boldsymbol{\phi}(s,a)^\top \right) \boldsymbol{\theta}  \nonumber
\end{align}

Let
\begin{equation}
\label{eq:q_appendix}
Q = \sum_{(s,a)} C(s,a) \boldsymbol{\phi}(s,a) \boldsymbol{\phi}(s,a)^\top  
\end{equation}
then we can write the Lagrangian function as:
\begin{equation}
l_1(\lambda_1, \lambda_2, \lambda_3, \boldsymbol{\theta}) = \boldsymbol{\theta}^\top \left( \sum_{(s,a)} u_{\lambda_1, \lambda_2}(s,a) \boldsymbol{\phi}(s,a) \right) - \lambda_3 \boldsymbol{\theta}^\top Q \boldsymbol{\theta} + \lambda_1 r + \lambda_3 \nonumber
\end{equation}

And the inner minimization problem in Equation~\ref{eq:linear_maxmin_appendix} becomes:
\begin{equation}
\label{eq:inner_linear_appendix}
    \max_{\lambda_1, \lambda_2, \lambda_3} \min_{\boldsymbol{\theta} \geq 0} l_1(\lambda_1, \lambda_2, \lambda_3, R)
\end{equation}

\subsection{Proof for Whitening Transformation}
\label{sec:whitening}
To simplify the problem associated with the Lagrangian function above, we transform the feature vector $\boldsymbol{\phi}$ into a whitened version $\tilde{\boldsymbol{\phi}}$ such that the matrix $Q$ as defined in Equation~\ref{eq:q_appendix} becomes the identity matrix $I$. Specifically, we perform a whitening transformation using the Cholesky decomposition~\citep{boyd2004convex}. Let 
\[
W = Q^{-\frac{1}{2}}, \quad \tilde{\boldsymbol{\phi}}(s,a) = W \boldsymbol{\phi}(s,a)\]
where $Q^{-\frac{1}{2}}$ denotes a matrix square root of $Q^{-1}$. Then we have:
\begin{align}
\sum_{(s,a)} C(s,a) \tilde{\boldsymbol{\phi}}(s,a) \tilde{\boldsymbol{\phi}}(s,a)^\top 
&= \sum_{(s,a)} C(s,a) (W\boldsymbol{\phi}(s,a))(W\boldsymbol{\phi}(s,a))^\top \nonumber \\
&= \sum_{(s,a)} C(s,a) W \boldsymbol{\phi}(s,a) \boldsymbol{\phi}(s,a)^\top W^\top \nonumber\\
&= W \left( \sum_{(s,a)} C(s,a) \boldsymbol{\phi}(s,a) \boldsymbol{\phi}(s,a)^\top \right) W^\top \nonumber\\
&= W Q W^\top  \nonumber\\
&= Q^{-\frac{1}{2}} Q Q^{-\frac{1}{2}} \nonumber \\
&= I \nonumber
\end{align}
as desired.

Note that the whitening step requires $Q$ to be invertible so that $Q^{-1/2}$ (and hence $Q^{-1}$) exists. It holds when $Q$ is positive semi-definite and non-singular. $Q$ is positive semi-definite since it is a sum of outer products $\phi(s,a)\phi(s,a)^\top$  weighted by non-negative coefficients (occupancy measure of $\pi_{\text{ref}} \ge 0$). For $Q$ to be non-singular, it is necessary that the span of $\{\phi (s,a):\mu_{\pi_{\text{ref}}}(s,a) >0\}$ covers $\mathbb{R}^n$, i.e., the features associated with state-action pairs visited by $\pi_{\text{ref}}$ must span the full feature space. To achieve these conditions, the reference policy should visit a diverse and representative subset of the state-action space with non-trivial occupancy. This is more likely when $\pi_{\text{ref}}$ is derived from either expert demonstrations that exhibit rich behavior or from stochastic or exploratory policies (e.g., entropy-regularized policies or policies trained with exploration bonuses). Moreover, the feature mapping $\phi(s,a)$ must exhibit sufficient variation across the visited state-action pairs. This typically holds when $\phi$ encodes task-relevant dynamics (e.g., learned embeddings or expressive hand-crafted features) and when $\pi_{\text{ref}}$ does not collapse to trivial or deterministic behavior. In our experiments (Appendix~\ref{sec:environment}), the reference policies for the Traffic and Pandemic environments are trained via behavioral cloning on large, diverse trajectories generated by human experts or hand-crafted controllers. The feature representations used in these environments, such as velocity, acceleration, and headway in Traffic, and infection level, disease stage, and smooth transitions in Pandemic, encode meaningful task-relevant dynamics. These demonstrations cover a wide range of task-relevant behaviors, and the induced occupancy over state-action pairs spans a high-dimensional subspace of the feature space. We empirically verified that the resulting $Q$ matrices in our experiments are full-rank and numerically well-conditioned. Though ensuring sufficient coverage of the feature space by the reference policy is generally challenging in practice.

\subsection{Derive Optimal Primal Variable for Linear Max-Min Objective}
\label{sec:theta}
After whitening transformation as discussed in Appendix~\ref{sec:whitening}, the problem in Equation~\ref{eq:inner_linear_appendix} becomes:
\begin{equation}
\label{eq:linearmaxmin_object_mat_norm_appendix}
\max_{\lambda_1, \lambda_2, \lambda_3} \min_{\tilde{\boldsymbol{\theta}} \geq 0} \quad l_1(\lambda_1, \lambda_2, \lambda_3, \tilde{\boldsymbol{\theta}}) = \tilde{\boldsymbol{\theta}}^\top \left( \sum_{(s,a)} u_{\lambda_1, \lambda_2}(s,a) \tilde{\boldsymbol{\phi}}(s,a) \right) - \lambda_3 \tilde{\boldsymbol{\theta}}^\top \tilde{\boldsymbol{\theta}} + \lambda_1 r + \lambda_3.
\end{equation}
where we now optimize over the parameter $\tilde{\boldsymbol{\theta}}$ using the transformed features $\tilde{\boldsymbol{\phi}}$. For notational simplicity, we will drop the tilde and henceforth use $\boldsymbol{\phi}$ to represent the whitened feature $\tilde{\boldsymbol{\phi}}$, and $\boldsymbol{\theta}$ to represent the whitened weights $\tilde{\boldsymbol{\theta}}$.

\paragraph{Separable Structure.}
In the whitened feature space, the objective becomes separable across coordinates of $\boldsymbol{\theta}$.  
Thus, the inner minimization problem in Equation~\ref{eq:linearmaxmin_object_mat_norm_appendix} decouples into $M$ independent one-dimensional convex minimization problems, one for each feature coordinate $i \in \{1, 2, \dots, M\}$:
\begin{equation}
\min_{\theta_i \geq 0} \quad \left( \sum_{(s,a)} u_{\lambda_1, \lambda_2}(s,a) \phi_i(s,a) \right) \theta_i - \lambda_3 \theta_i^2 \nonumber
\end{equation}

Let us solve the $i$-th subproblem.  
Assuming $\lambda_3 < 0$, the objective is a convex quadratic function in $\theta_i$ (an upward-opening parabola).  
The unconstrained minimum occurs at:
\begin{equation}
\theta_i^* = -\frac{\sum_{(s,a)} u_{\lambda_1, \lambda_2}(s,a) \phi_i(s,a)}{2\lambda_3}  \nonumber
\end{equation}

Considering the constraint $\theta_i \geq 0$, we have two cases:
\begin{itemize}
    \item If the unconstrained minimum $\theta_i^* \geq 0$, then it is also the solution to the constrained problem.
    \item If $\theta_i^* < 0$, then the constrained minimum occurs at the boundary $\theta_i = 0$.
\end{itemize}

Thus, the final optimal $\theta_i^*$ is:
\begin{equation}
\theta_i^* = \max\left(0, -\frac{\sum_{(s,a)} u_{\lambda_1, \lambda_2}(s,a) \phi_i(s,a)}{2\lambda_3}\right) \nonumber
\end{equation}

Collecting across all $i$, we express the final optimal solution $\boldsymbol{\theta}^*$ as:
\begin{equation}
\label{eq:optimal_theta_appendix}
\boldsymbol{\theta}^* = \max\left(0, \, -\frac{\sum_{(s,a)} u_{\lambda_1, \lambda_2}(s,a) \boldsymbol{\phi}(s,a)}{2\lambda_3}\right)
\end{equation}
where the $\max(\cdot, 0)$ is applied elementwise.

\subsection{Solve the Dual Objective for Linear Max-Min Objective}
\label{sec:dual_linearmaxmin}
Let the outer objective in Equation~\ref{eq:linearmaxmin_object_mat_norm_appendix} be:
\[
g(\lambda_1, \lambda_2, \lambda_3) = l_1(\lambda_1, \lambda_2, \lambda_3, \boldsymbol{\theta}^*)
\]
Then we want to solve the following dual objective:
\begin{equation}
    \label{eq:linearmaxmin_dual_appendix}
    \max_{\lambda_1, \lambda_2, \lambda_3} g(\lambda_1, \lambda_2, \lambda_3)
\end{equation}
Let
\[
q_j(\lambda_1, \lambda_2) = \sum_{(s,a)} \left( v(s,a) - \lambda_1 D(s,a) - \lambda_2 C(s,a) \right) \phi_j(s,a)
\]
denote the linear coefficient for each feature $j \in \{1, \dots, M\}$.

The optimal $\theta_j^*$ is:
\[
\theta_j^*(\lambda) = \max\left( 0, \, \frac{q_j(\lambda_1, \lambda_2)}{2\lambda_3} \right)
\]

Now, we compute the gradients:

\paragraph{Gradient with respect to $\lambda_1$:}
\begin{align}
\frac{\partial g}{\partial \lambda_1}(\lambda)
&= \frac{\partial l_1}{\partial \lambda_1}(\lambda, \boldsymbol{\theta}^*(\lambda)) \nonumber\\
&= \sum_{(s,a)} \left( -D(s,a) (\boldsymbol{\theta}^{*T} \boldsymbol{\phi}(s,a)) \right) + r \nonumber\\
&= r - \sum_{j=1}^M D_{\phi,j} \cdot \theta_j^*(\lambda) \nonumber
\end{align}
where
\[
D_{\phi,j} = \sum_{(s,a)} D(s,a) \phi_j(s,a)
\]

\paragraph{Gradient with respect to $\lambda_2$:}
\begin{align}
\frac{\partial g}{\partial \lambda_2}(\lambda)
&= \frac{\partial l_1}{\partial \lambda_2}(\lambda, \boldsymbol{\theta}^*(\lambda)) \nonumber\\
&= \sum_{(s,a)} \left( -C(s,a) (\boldsymbol{\theta}^{*T} \boldsymbol{\phi}(s,a)) \right) \nonumber\\
&= - \sum_{j=1}^M C_{\phi,j} \cdot \theta_j^*(\lambda)  \nonumber
\end{align}
where
\[
C_{\phi,j} = \sum_{(s,a)} C(s,a) \phi_j(s,a)
\]

\paragraph{Gradient with respect to $\lambda_3$:}
\begin{align}
\frac{\partial g}{\partial \lambda_3}(\lambda)
&= \frac{\partial l_1}{\partial \lambda_3}(\lambda, \boldsymbol{\theta}^*(\lambda)) \nonumber\\
&= \sum_{(s,a)} \left( -C(s,a) (\boldsymbol{\theta}^{*T} \boldsymbol{\phi}(s,a))^2 \right) + 1 \nonumber\\
&= 1 - \sum_{j=1}^M (\theta_j^*(\lambda))^2  \nonumber
\end{align}
where we use the whitening assumption $\sum_{(s,a)} C(s,a) \boldsymbol{\phi}(s,a)\boldsymbol{\phi}(s,a)^\top = I$.

Thus, the full gradients are:

\[
\boxed{
\begin{aligned}
\frac{\partial g}{\partial \lambda_1}(\lambda) &= r - \sum_{j=1}^M D_{\phi,j} \cdot \max\left(0, \frac{q_j(\lambda_1, \lambda_2)}{2\lambda_3} \right) \\
\frac{\partial g}{\partial \lambda_2}(\lambda) &= -\sum_{j=1}^M C_{\phi,j} \cdot \max\left(0, \frac{q_j(\lambda_1, \lambda_2)}{2\lambda_3} \right) \\
\frac{\partial g}{\partial \lambda_3}(\lambda) &= 1 - \sum_{j=1}^M \left( \max\left(0, \frac{q_j(\lambda_1, \lambda_2)}{2\lambda_3} \right) \right)^2
\end{aligned}
}
\]

We can solve for the optimal dual variables $(\lambda_1, \lambda_2, \lambda_3)$ using standard first-order optimization methods.  
Since $g(\lambda)$ is concave (under the condition $\lambda_3 < 0$), optimization is well-behaved and converges reliably. After obtaining the optimal primal variables $\boldsymbol{\theta}^*$ and dual variables $(\lambda_1^*, \lambda_2^*, \lambda_3^*)$, we can substitute them back into Equation~\ref{eq:linear_maxmin_appendix} and solve the outer maximization over the policy $\pi$ using standard reinforcement learning algorithms, such as PPO~\citep{schulman2017proximal}.

\subsection{Policy Gradient Derivation}
\label{sec:policy_gradient}

We now derive the gradient of the robust objective~\eqref{eq:maxmin_objective_appendix} with respect to the policy parameters $\theta$. Recall that the robust objective is:
\begin{align}
\mathcal{J}(\mu_{\pi_\theta}) &= r \cdot V \cdot \mathbb{E}_{\mu_{\pi_\theta}}[R_{\text{proxy}}] - V \cdot \sqrt{1 - r^2} \cdot \sqrt{ \chi^2(\mu_\pi \,\|\, \mu_{\pi_{\text{ref}}}) - \left( \mathbb{E}_{\mu_{\pi_\theta}}[R_{\text{proxy}}] \right)^2 } + M \nonumber \\
&=\mathbb{E}_{\mu_{\pi_\theta}}[R_{\text{proxy}}] - \frac{\sqrt{1-r^2}}{r} \sqrt{ \chi^2(\mu_{\pi_\theta} \,\|\, \mu_{\pi_{\text{ref}}}) - \left( \mathbb{E}_{\mu_{\pi_\theta}}[R_{\text{proxy}}] \right)^2 }  \nonumber
\end{align}
where we set $M = 0$ and $V = 1$ without loss of generality. We also divide the entire objective by $r$, which is assumed to be positive ($r > 0$), so this rescaling preserves the optimization direction and does not affect the final policy solution. The $\chi^2$ divergence is defined as:
\begin{equation}
\chi^2(\mu_{\pi_\theta} \,\|\, \mu_{\pi_{\text{ref}}}) = \sum_{(s,a)} \frac{\mu_{\pi_\theta}(s,a) ^2}{\mu_{\pi_{\text{ref}}}(s,a)}-1 \nonumber
\end{equation}

Applying the chain rule, we compute:
\begin{equation}
\label{eq:gradient}
\nabla_\theta \mathcal{J} = \nabla_\theta \mathbb{E}_{\mu_{\pi_\theta}}[R_{\text{proxy}}] - \frac{\sqrt{1-r^2}}{r} \nabla_\theta \left( \sqrt{h(\mu_{\pi_\theta})} \right)
\end{equation}
where we define:
\begin{equation}
h(\mu_{\pi_\theta}) = \chi^2(\mu_{\pi_\theta} \,\|\, \mu_{\pi_{\text{ref}}}) - \left( \mathbb{E}_{\mu_{\pi_\theta}}[R_{\text{proxy}}] \right)^2 \nonumber
\end{equation}

Using the chain rule again:
\begin{equation}
\nabla_\theta \sqrt{h(\mu_{\pi_\theta})} = \frac{1}{2\sqrt{h(\mu_{\pi_\theta})}} \nabla_\theta h(\mu_{\pi_\theta}) \nonumber
\end{equation}

Now compute $\nabla_\theta h(\mu_{\pi_\theta})$:
\begin{align}
\nabla_\theta h(\mu_{\pi_\theta})
&= \nabla_\theta \chi^2(\mu_{\pi_\theta} \,\|\, \mu_{\pi_{\text{ref}}}) - \nabla_\theta \left( \mathbb{E}_{\mu_{\pi_\theta}}[R_{\text{proxy}}]^2 \right)  \nonumber \\
&= \nabla_\theta \left( \sum_{(s,a)} \frac{\mu_{\pi_\theta}(s,a)^2}{\mu_{\pi_{\text{ref}}}(s,a)} -1\right) - 2 \mathbb{E}_{\mu_{\pi_\theta}}[R_{\text{proxy}}] \nabla_\theta \mathbb{E}_{\mu_{\pi_\theta}}[R_{\text{proxy}}] \nonumber
\end{align}

The individual terms are:
\begin{align}
\nabla_\theta \left( \sum_{(s,a)} \frac{\mu_{\pi_\theta}(s,a)^2}{\mu_{\pi_{\text{ref}}}(s,a)} -1\right) &= 2 \sum_{(s,a)} \frac{\mu_{\pi_\theta}(s,a)}{\mu_{\pi_{\text{ref}}}(s,a)} \nabla_\theta \mu_{\pi_\theta}(s,a) \nonumber \\
\nabla_\theta \mathbb{E}_{\mu_{\pi_\theta}}[R_{\text{proxy}}] &= \sum_{(s,a)} \nabla_\theta \mu_{\pi_\theta}(s,a) R_{\text{proxy}}(s,a) \nonumber
\end{align}

Thus:
\begin{equation}
\nabla_\theta h(\mu_{\pi_\theta}) = \sum_{(s,a)} \nabla_\theta \mu_{\pi_\theta}(s,a) \left( 2 \frac{ \mu_{\pi_\theta}(s,a) }{ \mu_{\pi_{\text{ref}}}(s,a) } - 2 \mathbb{E}_{\mu_{\pi_\theta}}[R_{\text{proxy}}] R_{\text{proxy}}(s,a) \right) \nonumber
\end{equation}

Then we compute $\nabla_\theta \mathbb{E}_{\mu_{\pi_\theta}}[R_{\text{proxy}}]$:
\begin{align}
    \nabla_\theta \mathbb{E}_{\mu_{\pi_\theta}}[R_{\text{proxy}}] &= \nabla_\theta \sum_{(s,a)}\mu_{\pi_\theta}(s,a) R_{\text{proxy}}(s,a) \nonumber \\
    & = \sum_{(s,a)} \nabla_\theta \mu_{\pi_\theta}(s,a) R_{\text{proxy}}(s,a) \nonumber
\end{align}

Put them together, we get the  final gradient in Equation~\ref{eq:gradient} as:
\begin{align}
\label{eq:maxmin_gradient}
\nabla_\theta \mathcal{J} &= \nabla_\theta \mathbb{E}_{\mu_{\pi_\theta}}[R_{\text{proxy}}] - \frac{\sqrt{1-r^2}}{r} \nabla_\theta \left( \sqrt{h(\mu_{\pi_\theta})} \right) \nonumber\\
&= \sum_{(s,a)} \nabla_\theta \mu_{\pi_\theta}(s,a) R_{\text{proxy}}(s,a) -\frac{\sqrt{1-r^2}}{r}\frac{1}{2\sqrt{h(\mu_{\pi_\theta})}} \nabla_\theta h(\mu_{\pi_\theta}) \nonumber\\
&= \sum_{(s,a)} \nabla_\theta \mu_{\pi_\theta}(s,a) R_{\text{proxy}}(s,a) \nonumber\\ & \quad - \frac{\sqrt{1-r^2}}{r}\frac{1}{2\sqrt{h(\mu_{\pi_\theta})}} \sum_{(s,a)} \nabla_\theta \mu_{\pi_\theta}(s,a) \left( 2 \frac{ \mu_{\pi_\theta}(s,a) }{ \mu_{\pi_{\text{ref}}}(s,a) } - 2 \mathbb{E}_{\mu_{\pi_\theta}}[R_{\text{proxy}}] R_{\text{proxy}}(s,a) \right) \nonumber\\
&= \sum_{(s,a)} \nabla_\theta \mu_{\pi_\theta}(s,a) \left[ R_{\text{proxy}} -\frac{\sqrt{1-r^2}}{r} \frac{1}{\sqrt{h(\mu_{\pi_\theta})} }\left(  \frac{ \mu_{\pi_\theta}(s,a) }{ \mu_{\pi_{\text{ref}}}(s,a) } -  \mathbb{E}_{\mu_{\pi_\theta}}[R_{\text{proxy}}] R_{\text{proxy}}(s,a) \right)  \right]
\end{align}

The full policy gradient for the ORPO algorithm, as presented in Appendix B of~\citep{laidlaw2024correlated}, is given by:
\begin{equation}
    \sum_{(s,a)} \nabla_\theta \mu_{\pi_\theta}(s,a) \left[
    R_{\text{proxy}}(s,a) - \frac{\lambda}{\sqrt{\chi^2(\mu_{\pi_\theta} \,\|\, \mu_{\pi_{\text{ref}}})}}  \cdot \frac{ \mu_{\pi_\theta}(s,a) }{ \mu_{\pi_{\text{ref}}}(s,a) }
    \right].
\end{equation}

\paragraph{Interpretation.}
The policy gradient consists of two terms: 
\begin{itemize}
    \item A standard term encouraging the policy to increase $R_{\text{proxy}}(s,a)$.
    \item A correction term that penalizes deviations from the reference occupancy $\mu_{\pi_{\text{ref}}}$, while also adjusting for alignment with the proxy reward.
\end{itemize}
This correction enforces robustness to potential reward hacking by optimizing against adversarially misaligned interpretations of the proxy reward.

Notice that our derived policy gradient in Equation~\ref{eq:maxmin_gradient} shares structural similarities with ORPO but is rooted in a formal robust optimization framework. Unlike ORPO, our formulation introduces an additional correction term involving both the occupancy ratio and the expected proxy reward, capturing how the proxy is aligned with the current policy's behavior. This structure more explicitly penalizes the combination of distributional shift and proxy overoptimization, discouraging policies from exploiting proxy-specific artifacts. Both methods share the goal of improving robustness, but our approach is derived from first principles by directly optimizing for worst-case performance over a correlation-constrained uncertainty set.

\subsection{Proof of Theorem 1}
\label{sec:proof_theorem}
\begin{proof}
    For any reward function $R$, define the performance difference
    \[
    \Delta J(\pi, R) \;:=\; J(\pi, R) - J(\pi_{\text{ref}}, R).
    \]
    By definition of the correlated uncertainty set, our distributionally robust objective considers
    \[
    F(\pi)
    \;:=\;
    \min_{R \in \mathcal{R}_{\text{corr}}} \Delta J(\pi, R).
    \]
    Under the assumptions on the correlation, mean, and variance of rewards in $\mathcal{R}_{\text{corr}}$, Equation~\ref{eq:maxmin_objective_appendix} provides a closed-form expression for this inner minimum. In particular, for any policy $\pi$ with $\mu_\pi \ll \mu_{\pi_{\text{ref}}}$, Equation~\ref{eq:maxmin_objective_appendix} gives
    \[
    F(\pi)
    \;=\;
    r \cdot \mathbb{E}_{\mu_\pi}[R_{\text{proxy}}]
    - \sqrt{1-r^2}\,\sqrt{\chi^2\!\big(\mu_\pi \,\|\, \mu_{\pi_{\text{ref}}}\big)
    - \mathbb{E}_{\mu_\pi}^2[R_{\text{proxy}}]}.
    \]

    Now assume that the true reward $R_{\text{true}}$ lies in $\mathcal{R}_{\text{corr}}$. Since $R_{\text{true}}$ is one feasible element of the uncertainty set, we must have
    \[
    F(\pi)
    \;=\;
    \min_{R \in \mathcal{R}_{\text{corr}}} \Delta J(\pi, R)
    \;\le\;
    \Delta J(\pi, R_{\text{true}})
    \;=\;
    J(\pi, R_{\text{true}}) - J(\pi_{\text{ref}}, R_{\text{true}}).
    \]
    Rearranging yields
    \[
    J(\pi, R_{\text{true}}) - J(\pi_{\text{ref}}, R_{\text{true}})
    \;\ge\;
    F(\pi),
    \]
    and substituting the explicit form of $F(\pi)$ from Equation~\ref{eq:maxmin_objective_appendix} gives the claimed inequality.
\end{proof}

\section{Additional Implementation Details}

\subsection{Training Discriminator Network}
\label{sec:discriminator}
A core step in our Max-Min optimization algorithm and ORPO is to estimate the Radon-Nikodym derivative $L(s,a)$, which is critical for computing the $\chi^2$ divergence, as detailed in Appendix~\ref{sec:maxmin_algorithm}. To this end, we follow prior works~\citep{laidlaw2024correlated,kang2018policy,ho2016generative} and train a discriminator network. 
Specifically, we sample a batch of trajectories $D_{\pi_{\text{ref}}}$ from the reference policy $\pi_{\text{ref}}$ and another batch $D_\pi$ from the current policy $\pi$. The batch sizes used for each are specified in Table~\ref{tab:hyperparameter_disc}. And then we use a discriminator architecture identical to that in~\citep{laidlaw2024correlated}, denoted by $d_\phi(s,a)$, which is optimized according to:

\begin{align}
\label{eq:disc_loss_estimate}
    \phi &= \arg \min_{\phi} \,\, \mathbb{E}_{\mu_{\pi_{\text{ref}}}}[\log(1+e^{d_\phi(s,a)})] + \mathbb{E}_{\mu_{\pi}}[\log(1+e^{-d_\phi(s,a)})] \nonumber \\
    &\approx\arg \min_{\phi} \,\, \mathbb{E}_{D_{\pi_{\text{ref}}}}[\log(1+e^{d_\phi(s,a)})] + \mathbb{E}_{D_\pi}[\log(1+e^{-d_\phi(s,a)})]
\end{align}
 
It is known that the optimal discriminator satisfies $d^*(s,a) = \log \frac{\mu_\pi(s,a)}{\mu_{\pi_{\text{ref}}}(s,a)}$ and we estimate $L(s,a)$ as $\tilde{L}(s,a)=e^{d_\phi(s,a)}$ with $d_\phi(s,a) \approx d^*(s,a)$. However,
in the original ORPO implementation\footnote{\url{https://github.com/cassidylaidlaw/orpo/tree/main}}, the discriminator is not fully optimized during policy learning. Specifically, the discriminator receives only a small number of gradient updates per reinforcement learning iteration, resulting in underfitting and inaccurate estimates of the Radon-Nikodym derivative $L(s,a)$.

This undertraining is evident in Figure~\ref{fig:orpo_disc_loss}, which shows the discriminator loss across RL iterations. The loss remains nearly constant (e.g., around 1.4 in the Traffic environment, which is the initial loss value as shown in Figure~\ref{fig:disc_loss_traffic} ), indicating that the discriminator is not learning effectively. This limits its ability to distinguish between $\pi$ and $\pi_{\text{ref}}$, especially for state-action pairs where their occupancy distributions diverge.

To address this, we substantially increase the number of gradient updates per iteration and carefully tune the learning rate. Our goal is to strike a practical balance between training time and discriminator quality: while fully training the discriminator to convergence each iteration is computationally expensive, insufficient training leads to inaccurate divergence estimates and unstable optimization. 

Figure~\ref{fig:ours_disc_loss} shows that in our implementation, the discriminator loss consistently decreases within each iteration, e.g., from an initial value around 1.4 to below 0.2 in the Traffic environment, indicating effective optimization and more accurate occupancy-ratio estimation. In the Glucose and Pandemic environments, however, we observe that training the discriminator for too long leads to slower convergence and little improvement in loss. In these cases, we apply early stopping to limit training time. The specific training schedules are provided in Table~\ref{tab:hyperparameter_disc}.

\begin{figure}[ht]
    \centering
    \begin{subfigure}[b]{0.32\textwidth}
        \includegraphics[width=\textwidth]{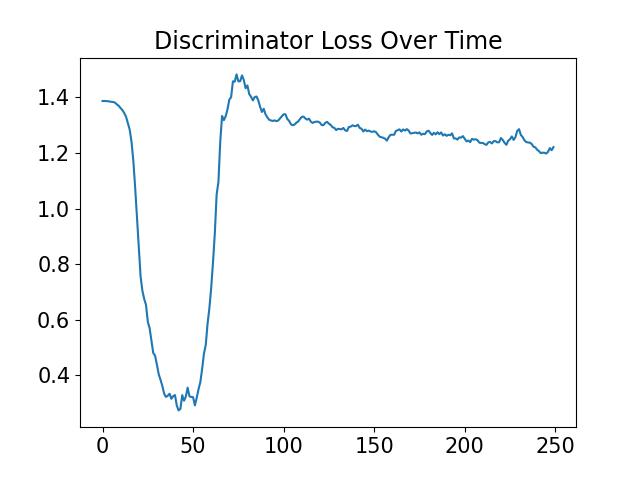}
        \caption{Traffic}
        \label{fig:orpo_disc_traffic}
    \end{subfigure}
    \hfill
    \begin{subfigure}[b]{0.32\textwidth}
        \includegraphics[width=\textwidth]{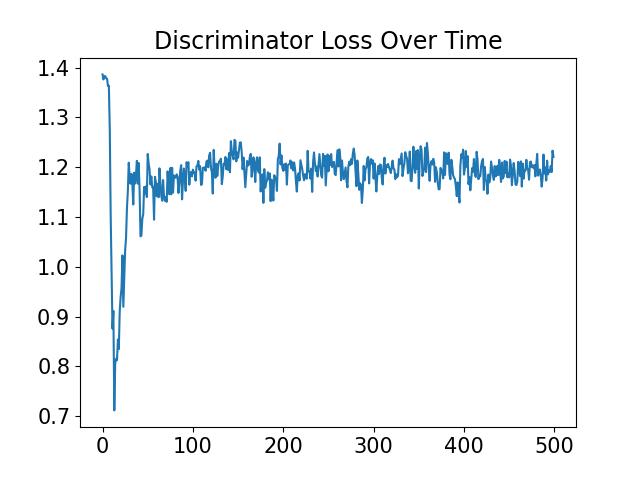}
        \caption{Glucose}
        \label{fig:orpo_disc_glucose}
    \end{subfigure}
    \hfill
    \begin{subfigure}[b]{0.32\textwidth}
        \includegraphics[width=\textwidth]{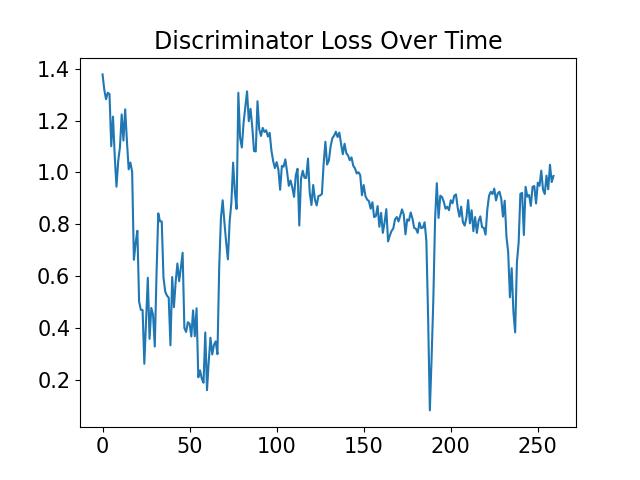}
        \caption{Pandemic}
        \label{fig:orpo_disc_pandemic}
    \end{subfigure}
    \caption{Discriminator loss across RL iterations in the \textbf{original ORPO implementation}. The loss stays flat and high ($\sim$1.4 for the traffic environment), indicating the discriminator is not adequately trained. This undermines the accuracy of the estimated occupancy ratios. } 
    \label{fig:orpo_disc_loss}
\end{figure}

\begin{figure}[ht]
    \centering
    \begin{subfigure}[b]{0.32\textwidth}
        \includegraphics[width=\textwidth]{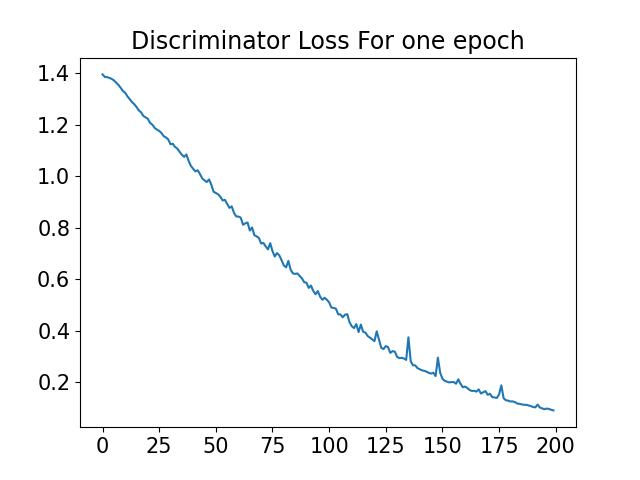}
        \caption{Traffic}
        \label{fig:disc_loss_traffic}
    \end{subfigure}
    \hfill
    \begin{subfigure}[b]{0.32\textwidth}
        \includegraphics[width=\textwidth]{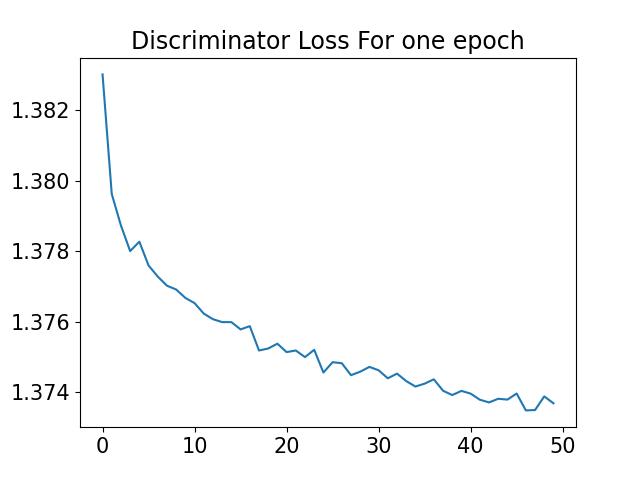}
        \caption{Glucose}
        \label{fig:disc_loss_glucose}
    \end{subfigure}
    \hfill
    \begin{subfigure}[b]{0.32\textwidth}
        \includegraphics[width=\textwidth]{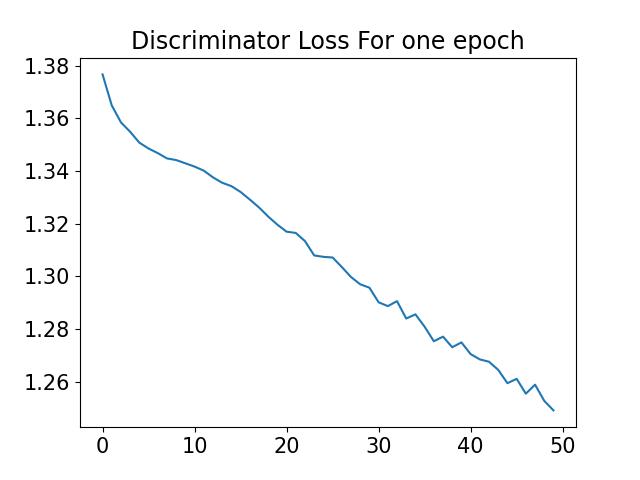}
        \caption{Pandemic}
        \label{fig:disc_loss_pandemic}
    \end{subfigure}
    \caption{Discriminator loss over training steps \emph{within each RL iteration} in our implementation. The loss decreases rapidly from its initial value (e.g., 1.4 to values near 0.2 in the Traffic environment), indicating successful training and improved accuracy of occupancy-ratio estimates.} 
    \label{fig:ours_disc_loss}
\end{figure}

As for the discriminator network architecture, we follow the same structure described in~\citep{laidlaw2024correlated}. 
For each environment, we employ a fully connected neural network with two hidden layers, each consisting of 256 units and ReLU activations. 
Table~\ref{tab:hyperparameter_disc} summarizes the hyperparameters used for discriminator training across different environments.

\begin{table}[ht]
    \centering
    \caption{Hyperparameters used for discriminator network training across different environments.}
    \label{tab:hyperparameter_disc}
    \small
    \setlength{\tabcolsep}{6pt}
    \renewcommand{\arraystretch}{1.2} 
    \begin{tabular}{lccc}
        \toprule
        \textbf{Hyperparameter} & \textbf{Traffic} & \textbf{Glucose} & \textbf{Pandemic}  \\
        \midrule
        Learning rate & $5\times10^{-3}$ & $1\times10^{-2}$ & $5\times10^{-4}$  \\
        SGD epochs per iteration & 200 & 20 & 15  \\
        Batch size  & 40000 & 100000 & 3860  \\
        SGD minibatch size & 16384 & 1024 & 64  \\
        \bottomrule
    \end{tabular}
    
\end{table}

\subsection{Derivation of Max-Min Policy Optimization}
\label{sec:maxmin_algorithm}
Using the estimated $d_\phi(s,a)$ from trained disciminator as discussed in Appendix~\ref{sec:discriminator}, we can compute the $\chi^2$ divergence via:
\begin{equation}
\label{eq:chi2}
    \chi^2(\mu_\pi \,\|\, \mu_{\pi_{\text{ref}}}) = \mathbb{E}_{\mu_{\pi}}\left[\frac{\mu_\pi(s,a)}{\mu_{\pi_{\text{ref}}}(s,a)}-1\right] \approx \mathbb{E}_{D_\pi}\left[ e^{d_\phi(s,a)} - 1 \right].
\end{equation}

For environments where both state and action spaces are discrete, we directly estimate the occupancy measure via empirical sampling. 
Specifically, given the same batch of trajectories $D$ collected from policy $\pi$ (as used for training the discriminator), we approximate the discounted occupancy measure as follows~\citep{schlaginhaufen2023identifiability,abbeel2004apprenticeship}:
\begin{equation}
\label{eq:occ_estimate}
    \tilde{\mu}_\pi^D(s,a) := (1-\gamma) \frac{1}{N} \sum_{i=1}^N \sum_{t=0}^T \gamma^t \mathbb{I}\{s_t^i = s, a_t^i = a\},
\end{equation}
where $\mathbb{I}\{\cdot\}$ is the indicator function. Using this empirical estimate, we can compute the Radon-Nikodym derivative and $\chi^2$ divergence without training a discriminator.

In our formulation, we assume that the proxy reward is normalized with respect to the reference policy $\pi_{\text{ref}}$. To achieve this, we reuse the same batch of trajectories $D_{\pi_{\text{ref}}}$ sampled from $\pi_{\text{ref}}$ to estimate the expected return $\tilde{J}(\pi_{\text{ref}}, R_{\text{proxy}})$ using:
\begin{equation}
\label{eq:proxy_mean}
\tilde{J}(\pi_{\text{ref}}, R_{\text{proxy}}) = (1 - \gamma) \, \frac{1}{N} \sum_{i=1}^N R_{\text{proxy}}(\tau^{(i)})
\end{equation} 
where each $\tau^{(i)}=(s_0, a_0, s_1, a_1, ..., s_T) \sim D_{\pi_{\text{ref}}}$ is a trajectory sampled from $\pi_{\text{ref}}$, $N$ is the number of sampled trajectories, and $R_{\text{proxy}}(\tau^{(i)})=\sum_{t=0}^{T} \gamma^t R_{\text{proxy}}(s_t^{(i)}, a_t^{(i)})$. This estimation is unbiased when using trajectories generated by the policy $\pi$. To estimate the empirical variance of the proxy reward, we use:
\begin{equation}
\label{eq:proxy_variance}
    \tilde{\sigma}^2_{R_{\text{proxy}}} = \tilde{\mathbb{E}}^2_{\mu_{\pi_{\text{ref}}}}[R_{\text{proxy}}] - \tilde{\mathbb{E}}_{\mu_{\pi_{\text{ref}}}}[R^2_{\text{proxy}}]
\end{equation}

We estimate $\tilde{\mathbb{E}}_{\mu_{\pi_{\text{ref}}}}[R^2_{\text{proxy}}]$ using:
\begin{equation}
    \tilde{\mathbb{E}}_{\mu_{\pi_{\text{ref}}}}[R^2_{\text{proxy}}] = (1 - \gamma) \, \frac{1}{N} \sum_{i=1}^N R^2_{\text{proxy}}(\tau^{(i)}) \nonumber
\end{equation}
where $R^2_{\text{proxy}}(\tau^{(i)})=\sum_{t=0}^{T} \gamma^t R^2_{\text{proxy}}(s_t^{(i)}, a_t^{(i)})$. However, estimating $\tilde{\mathbb{E}}^2_{\mu_{\pi_{\text{ref}}}}[R_{\text{proxy}}]$ directly from a single batch introduces bias, because the square of an empirical mean is not an unbiased estimator of the square of the true mean. 
To obtain an unbiased estimate of $\tilde{\mathbb{E}}^2_{\mu_{\pi_{\text{ref}}}}[R_{\text{proxy}}]$, we apply the double-sampling technique~\citep{di2012policy,xie2018block}. Specifically, we independently sample another batch of trajectories, denoted $D^*_{\pi_{\text{ref}}}$, from the reference policy $\pi_{\text{ref}}$, and compute:
\begin{equation}
    \tilde{\mathbb{E}}^2_{\mu_{\pi_{\text{ref}}}}[R_{\text{proxy}}] = \tilde{\mathbb{E}}^{D_{\pi_{\text{ref}}}}_{\mu_{\pi_{\text{ref}}}}[R_{\text{proxy}}] \times \tilde{\mathbb{E}}^{D^*_{\pi_{\text{ref}}}}_{\mu_{\pi_{\text{ref}}}}[R_{\text{proxy}}]    \nonumber
\end{equation}
where $\tilde{\mathbb{E}}^{D_{\pi_{\text{ref}}}}_{\mu_{\pi_{\text{ref}}}}[R_{\text{proxy}}]$ and $\tilde{\mathbb{E}}^{D^*_{\pi_{\text{ref}}}}_{\mu_{\pi_{\text{ref}}}}[R_{\text{proxy}}]$ denote the empirical returns computed from the two independent batches using Equation~\ref{eq:proxy_mean}. This ensures an unbiased estimation of $\mathbb{E}^2_{\mu_\pi}[R_{\text{proxy}}]$, which is critical for correctly computing the regularization term in the objective~\ref{eq:maxmin_objective_appendix}.

We then normalize the proxy reward for each state-action pair in $D_\pi$ as:
\begin{equation}
\label{eq:normalize_proxy}
    R^{\text{norm}}_{\text{proxy}} (s,a) = \frac{R_{\text{proxy}}(s,a)-\tilde{J}(\pi_{\text{ref}}, R_{\text{proxy}})}{\tilde{\sigma}_{R_{\text{proxy}}}}
\end{equation}
For notational simplicity, we will continue to use $R_{\text{proxy}}$ to denote the normalized proxy reward throughout the remainder of this section.

We use the same batch of sampled trajectories $D_\pi$ from current policy $\pi$ to estimate $\mathbb{E}_{\mu_\pi}[R_{\text{proxy}}]$ using:
\begin{equation}
\label{eq:proxy_mean_pi}
    \tilde{\mathbb{E}}_{\mu_\pi}[R_{\text{proxy}}] = \tilde{J}(\pi, R_{\text{proxy}}) = (1 - \gamma) \, \frac{1}{N} \sum_{i=1}^N R_{\text{proxy}}(\bar{\tau}^{(i)})
\end{equation}
where each $\bar{\tau}^{(i)}=(s_0, a_0, s_1, a_1, ..., s_T) \sim D_{\pi}$ is a trajectory sampled from $\pi$. 
To estimate $\tilde{\mathbb{E}}^2_{\mu_\pi}[R_{\text{proxy}}]$, we apply the same double-sampling technique. Specifically, we independently sample another batches of trajectories, denoted $D^*_\pi$, from the current policy $\pi$, and compute:
\begin{equation}
\label{eq:double_sampling}
    \tilde{\mathbb{E}}^2_{\mu_\pi}[R_{\text{proxy}}] = \tilde{\mathbb{E}}^{D_\pi}_{\mu_\pi}[R_{\text{proxy}}] \times \tilde{\mathbb{E}}^{D^*_\pi}_{\mu_\pi}[R_{\text{proxy}}],
\end{equation}
where $\tilde{\mathbb{E}}^{D_\pi}_{\mu_\pi}[R_{\text{proxy}}]$ and $\tilde{\mathbb{E}}^{D^*_\pi}_{\mu_\pi}[R_{\text{proxy}}]$ denote the empirical returns computed from the two independent batches using Equation~\ref{eq:proxy_mean_pi}. 

Putting all the steps together, the maxmin algorithm is in Algorithm~\ref{alg:maxmin}:

\begin{algorithm}[h]
\caption{Max-Min Policy Optimization}
\label{alg:maxmin}
\begin{algorithmic}[1]
\STATE Initialize policy parameters $\theta$
\STATE Initialize reference policy $\pi_{\text{ref}}$ and collect trajectories $D_{\pi_{\text{ref}}}$
\STATE Estimate $J(\pi_{\text{ref}}, R_{\text{proxy}})$ using Equation~\ref{eq:proxy_mean} and $\sigma^2_{R_{\text{proxy}}}$ using Equation~\ref{eq:proxy_variance}
\FOR{each iteration}
    \STATE Sample trajectories $D_{\pi}$ from current policy $\pi_\theta$
    \IF{discrete environment}
        \STATE Estimate occupancy measure using Equation~\ref{eq:occ_estimate}
    \ELSE
        \STATE Train discriminator $d_\phi$ by minimizing Equation~\ref{eq:disc_loss_estimate}
    \ENDIF
    \STATE Estimate $\chi^2$ divergence using Equation~\ref{eq:chi2}
    \STATE Normalize proxy reward for each state-action pair in $D_\pi$ using Equation~\ref{eq:normalize_proxy}
    \STATE Estimate proxy reward expectation $\mathbb{E}_{\mu_\pi}[R_{\text{proxy}}]$ using Equation~\ref{eq:proxy_mean_pi}
    \STATE Estimate $\mathbb{E}^2_{\mu_\pi}[R_{\text{proxy}}]$ via double-sampling using Equation~\ref{eq:double_sampling}
    \STATE Update policy $\pi_\theta$ using PPO to maximize robust objective in Equation~\ref{eq:maxmin_objective_appendix}
\ENDFOR
\end{algorithmic}
\end{algorithm}

\subsection{Derivation of Linear Max-Min Policy Optimization}
\label{sec:linearmaxmin_algorithm}
As for the Linear Max-Min optimization problem, following the discussion in ~\ref{sec:lag_linearmaxmin}–\ref{sec:dual_linearmaxmin}, we first need to estimate $Q$:
\begin{equation}
Q = \sum_{(s,a)} C(s,a) \boldsymbol{\phi}(s,a) \boldsymbol{\phi}(s,a)^\top \nonumber
\end{equation}
where $C(s,a) = \mu_{\pi_{\text{ref}}}(s,a)$, $\boldsymbol{\phi}(s,a)$ is a vector of known feature functions sampled under the current policy $\pi$. To recover the reference occupancy $\mu_{\text{ref}}$, define 
\[\bar{d}_\phi(s,a)=\log \frac{\mu_{\pi_{\text{ref}}}(s,a)}{\mu_\pi(s,a)}\]

we can rewrite $Q$ via importance sampling~\citep{sutton2018reinforcement}:
\begin{equation}
Q = \mathbb{E}_{\mu_\pi}\left[ e^{\bar{d}_\phi(s,a)} \boldsymbol{\phi}(s,a) \boldsymbol{\phi}(s,a)^\top \right]  \nonumber
\end{equation}

Note that $\bar{d}_\phi(s,a)$ differs slightly from $d_\phi(s,a) = \log \frac{\mu_\pi(s,a)}{\mu_{\pi_{\text{ref}}}(s,a)}$ used in the Max-Min optimization algorithm discussed in Appendix~\ref{sec:maxmin_algorithm}, where the numerator and denominator are reversed. To estimate $\bar{d}_\phi(s,a)$, we again train a similar discriminator network as described in Appendix~\ref{sec:discriminator} by minimizing:
\begin{align}
\label{eq:disc_loss_linear}
    \phi &= \arg \min_{\phi} \,\, \mathbb{E}_{\mu_{\pi_{\text{ref}}}}[\log(1+e^{-d_\phi(s,a)})] + \mathbb{E}_{\mu_\pi}[\log(1+e^{d_\phi(s,a)})] \nonumber \\
    & \approx \arg \min_{\phi} \,\, \mathbb{E}_{D_{\pi_{\text{ref}}}}[\log(1+e^{-d_\phi(s,a)})] + \mathbb{E}_{D_\pi}[\log(1+e^{d_\phi(s,a)})]
\end{align}
At optimality, the discriminator satisfies:
\begin{equation}
    \bar{d}^*(s,a) = \log \frac{\mu_{\pi_{\text{ref}}}(s,a)}{\mu_\pi(s,a)} \nonumber
\end{equation}
And we use $\bar{d}_\phi(s,a)\approx \bar{d}^*(s,a)$. We then estimate $Q$:
\begin{equation}
\label{eq:estimate_Q}
\tilde{Q} = (1-\gamma) \mathbb{E}_{D_{\pi}} \left[\sum_{t=0}^\infty \gamma^t e^{\bar{d}_\phi(s_t,a_t)} \boldsymbol{\phi}(s_t,a_t) \boldsymbol{\phi}(s_t,a_t)^\top\right] 
\end{equation}

We then perform feature whitening by applying a linear transformation:
\begin{equation}
\tilde{\boldsymbol{\phi}}(s,a) = \tilde{W} \boldsymbol{\phi}(s,a) \nonumber
\end{equation}
where $\tilde{W} = \tilde{Q}^{-1/2}$ is the matrix square root inverse of $\tilde{Q}$.

All subsequent quantities are computed using the transformed features $\tilde{\boldsymbol{\phi}}$. As before, we also normalize the proxy reward for each state-action pair in $D_\pi$ using Equation~\ref{eq:normalize_proxy}. 

After whitening, we need to estimate the gradient of each dual variables ($\lambda_1, \lambda_2, \lambda_3$) as derived in Appendix~\ref{sec:dual_linearmaxmin}.

\paragraph{Estimating $C_{\phi,j}$ and $D_{\phi,j}$.}
Recall that \[C_{\phi,j} = \sum_{(s,a)} C(s,a) \phi_j(s,a)\]
which can be rewritten via importance sampling as:
\begin{equation}
C_{\phi,j} = \mathbb{E}_{\mu_\pi}\left[ e^{\bar{d}_\phi(s,a)} \tilde{\phi}_j(s,a) \right] \nonumber
\end{equation}
and then can be approximated via:
\begin{equation}
\tilde{C}_{\phi,j} = (1-\gamma) \mathbb{E}_{D_\pi}\left[\sum_{t=0}^\infty \gamma^t e^{\bar{d}_\phi(s_t,a_t)} \tilde{\phi}_j(s_t,a_t)\right] \nonumber
\end{equation}

Similarly, recall that \[D_{\phi,j} = \sum_{(s,a)} D(s,a) \phi_j(s,a)\]
where $D(s,a)=\mu_{\pi_{\text{ref}}}(s,a) \cdot R_{\text{proxy}}(s,a)$. Using importance sampling, we can write:
\begin{equation}
D_{\phi,j} = \mathbb{E}_{\mu_\pi}\left[ e^{\bar{d}_\phi(s,a)} R_{\text{proxy}}(s,a) \tilde{\phi}_j(s,a) \right] \nonumber
\end{equation}
and can be approximated in practice by:
\begin{equation}
\tilde{D}_{\phi,j} = (1-\gamma) \mathbb{E}_{D_\pi}\left[\sum_{t=0}^\infty \gamma^t e^{\bar{d}_\phi(s_t,a_t)} R_{\text{proxy}}(s_t,a_t) \tilde{\phi}_j(s_t,a_t)\right] \nonumber
\end{equation}

\paragraph{Estimating $q_j(\lambda_1, \lambda_2)$.}
Recall that 
\begin{align}
q_j(\lambda_1, \lambda_2) &=\sum_{(s,a)} \left( v(s,a) - \lambda_1 D(s,a) - \lambda_2 C(s,a) \right) \phi_j(s,a) \nonumber\\
&=\sum_{(s,a)}v(s,a)\phi_j(s,a) -\lambda_1 \sum_{(s,a)}D(s,a)\phi_j(s,a) -\lambda_2\sum_{(s,a)}C(s,a)\phi_j(s,a)  \nonumber \\
&= \mathbb{E}_{\mu_\pi}[\phi_j(s,a)]  -\lambda_1 D_{\phi,j} -\lambda_2 C_{\phi,j}  \nonumber
\end{align}
where $v(s,a)=\mu_\pi(s,a)$ and $\mathbb{E}_{\mu_\pi}[\phi_j(s,a)]$ is the discounted feature expectation under the policy $\pi$. We can estimate the first term using:
\begin{equation}
    \tilde{\mathbb{E}}_{\mu_\pi}[\phi_j(s,a)] = (1-\gamma)\frac{1}{N} \sum_{i=1}^{N} \tilde{\phi}_j(\bar{\tau}^{i})  \nonumber
\end{equation}
where each $\bar{\tau}^{(i)}=(s_0, a_0, s_1, a_1, ..., s_T) \sim D_{\pi}$ is a trajectory sampled from $\pi$, and $\tilde{\phi}_j(\bar{\tau}^{(i)})=\sum_{t=0}^{T} \gamma^t \tilde{\phi}_j(s_t^{(i)}, a_t^{(i)})$ . Given the above estimates, we can finally compute:
\begin{equation}
\tilde{q}_j(\lambda_1, \lambda_2) = \tilde{\mathbb{E}}_{\mu_\pi}[\phi_j(s,a)] - \lambda_1 \tilde{D}_{\phi,j} - \lambda_2 \tilde{C}_{\phi,j}  \nonumber
\end{equation}

With the above estimation, we can compute the gradient and solve for the optimal dual variables $\lambda = (\lambda_1, \lambda_2, \lambda_3)$ using the Levenberg-Marquardt algorithm~\citep{more1978levenberg}, a damped least-squares method designed for solving nonlinear systems of equations. Specifically, we use the root solver in SciPy~\citep{virtanen2020scipy} to find the stationary point of the gradient $\nabla_\lambda g(\lambda) = 0$. We initialize the optimization with $\lambda_1 = 0$, $\lambda_2 = 0$, and $\lambda_3 = -1$, and enforce $\lambda_3 < 0$ throughout training to ensure concavity of the dual objective $g(\lambda)$. To enforce the non-negativity constraint $\boldsymbol{\theta} \geq 0$ as required by the analytical form in Equation~\ref{eq:optimal_theta_appendix}, we manually clip each $\theta_i$ to ensure it remains non-negative. Future work may explore alternative solvers better suited to constrained optimization.

Recall that the optimal primal variable $\theta_j^*$ is:
\[
\theta_j^*(\lambda) = \max\left( 0, \, \frac{q_j(\lambda_1, \lambda_2)}{2\lambda_3} \right)
\]
After optimizing for the dual variables, we can substitute the optimal $(\lambda_1^*, \lambda_2^*, \lambda_3^*)$ back into the above equation and get:
\[
\tilde{\theta}_j^*(\lambda) = \max\left( 0, \, \frac{\tilde{q}_j(\lambda^*_1, \lambda^*_2)}{2\lambda^*_3} \right)
\]
for all features. Then we can substitute the optimal $\tilde{\boldsymbol{\theta}}^*$ back in the robust reward objective in Equation~\ref{eq:linear_maxmin_appendix} and train the policy $\pi$ to maximize the outer problem using the standard reinforcement learning algorithm proximal policy optimization (PPO)~\citep{schulman2017proximal}.

Putting all the steps together, the linear maxmin algorithm is in Algorithm~\ref{alg:linear_maxmin}:

\begin{algorithm}[t]
\caption{Linear Max-Min Policy Optimization}
\label{alg:linear_maxmin}
\begin{algorithmic}[1]
\STATE Initialize policy parameters $\theta$
\STATE Initialize reference policy $\pi_{\text{ref}}$ and collect trajectories $D_{\pi_{\text{ref}}}$
\STATE Estimate $J(\pi_{\text{ref}}, R_{\text{proxy}})$ using Equation~\ref{eq:proxy_mean} and $\sigma^2_{R_{\text{proxy}}}$ using Equation~\ref{eq:proxy_variance}
\FOR{each iteration}
    \STATE Sample trajectories $D_{\pi}$ from current policy $\pi_\theta$
    \IF{discrete environment}
        \STATE Estimate occupancy measure using Equation~\ref{eq:occ_estimate}
    \ELSE
        \STATE Train discriminator $d_\phi$ by minimizing Equation~\ref{eq:disc_loss_linear}
    \ENDIF
    \STATE Normalize proxy reward for each state-action pair in $D_\pi$ using Equation~\ref{eq:normalize_proxy}
    \STATE Estimate $\tilde{Q}$ using Equation~\ref{eq:estimate_Q}
    \STATE Compute feature transformation $\tilde{W} = \tilde{Q}^{-1/2}$ and transform features $\tilde{\boldsymbol{\phi}}(s,a) = \tilde{W}\boldsymbol{\phi}(s,a)$
    \STATE Estimate $\tilde{C}_{\phi,j}$, $\tilde{D}_{\phi,j}$ using transformed features
    \STATE Compute $\tilde{q}_j(\lambda_1, \lambda_2)$ for all features
    \STATE Solve for optimal dual variables $(\lambda_1, \lambda_2, \lambda_3)$ by maximizing the dual objective (Equation~\ref{eq:linearmaxmin_dual_appendix})
    \STATE Compute the optimal primal variable $\tilde{\theta}_j^*$ for all features
    \STATE Update policy $\pi_\theta$ using PPO to maximize the robust objective in Equation~\ref{eq:linear_maxmin_appendix}
\ENDFOR
\end{algorithmic}
\end{algorithm}

\subsection{Environment Description and Reward Hacking Types}
\label{sec:environment}
\paragraph{Traffic.}
This environment simulates a highway merging scenario, adapted from~\citep{pan2022effects,wu2021flow,vinitsky2018benchmarks}, where a group of autonomous vehicles (AVs) controlled by an RL agent must merge into human-driven traffic. Each AV observes its own state (position and velocity) and those of nearby vehicles, and outputs continuous acceleration actions. The true reward is designed to ensure smooth and efficient traffic flow, encouraging low commute times and gentle accelerations. The reference policy $\pi_{\text{ref}}$ is a behavioral cloning (BC) policy trained on demonstrations generated by the Intelligent Driver Model (IDM)~\citep{treiber2000congested}. 

\paragraph{Pandemic.}
Based on the PandemicSimulator~\citep{kompella2020reinforcement}, this environment models infection dynamics using an extended SEIR model. At each timestep, the agent selects a lockdown policy to control the spread of disease while minimizing societal costs. The true reward balances infection severity, political disruption, and policy smoothness over time. The reference policy is trained via behavioral cloning on a mixture of realistic and hand-crafted policy trajectories.

\paragraph{Glucose Monitoring.}
This environment uses the SimGlucose simulator~\citep{man2014uva,fox2020deep}, where an RL agent administers insulin doses to a simulated patient with Type 1 Diabetes. The goal is to maintain safe blood glucose levels and minimize long-term health risk. The reference policy is trained via behavioral cloning using data generated by a PID controller with clinically tuned parameters~\citep{steil2013algorithms}. Proxy rewards in this setting often reflect surrogate objectives such as treatment cost or patient burden.

\paragraph{Tomato Watering GridWorld.}
This environment presents a simple spatial grid where the agent waters tomato plants. The true reward corresponds to the number of tomatoes correctly watered. However, the proxy reward includes an artificially high bonus at a specific grid location (a “sprinkler state”), which causes the agent to overfit by remaining in that region despite little actual benefit to overall tomato growth. The reference policy follows~\citep{laidlaw2024correlated}, with 10\% random actions added to allow for policy improvement.

\paragraph{RLHF.} This environment builds on prior work~\citep{laidlaw2024correlated,coste2023reward} that studies overoptimization of LLM-based reward models. The proxy reward function is derived from a reward model fine-tuned on the AlpacaFarm preference dataset~\citep{dubois2023alpacafarm}, using the Pythia-70M model~\citep{biderman2023pythia}, a comparatively small model. For the true reward signal, we adopt the Llama 3 Tulu V2 8B reward model released by AI2~\citep{ivison2024unpacking}. The reference policy corresponds to the supervised fine-tuned (SFT) model from~\citep{laidlaw2024correlated,coste2023reward}, which was trained on the AlpacaFarm SFT dataset using Pythia-1.4B.

\paragraph{Types of Reward Hacking.}
We adopt the taxonomy proposed in~\citep{pan2022effects} to classify the kinds of proxy reward misalignments that lead to reward hacking. Our selected environments span all three major categories:

\begin{itemize}
    \item \textbf{Misweighting:} The proxy reward includes all relevant objectives but uses incorrect relative weights. Our Linear Max-Min method specifically seeks the most adversarial weighting in this space.

    \item \textbf{Ontological:} The proxy captures the correct high-level goal using different or incomplete features. In the \textbf{Traffic} environment, the true reward combines commute time, acceleration, and headway, whereas the proxy replaces commute time with velocity.  In the \textbf{Pandemic} environment, the true reward penalizes infections, political cost, lower stage changes, and non-smooth policies, while the proxy omits the political cost entirely. Similarly, in \textbf{Glucose}, the proxy reward only considers the expected patient costs while the true reward only measures the health risk.

    \item \textbf{Scope:} The proxy evaluates behavior over a limited domain. In the \textbf{Tomato} environment, the true reward reflects the number of tomatoes successfully watered. However, the proxy introduces a large bonus at a specific state (the sprinkler), incentivizing the agent to pursue this location at the expense of fulfilling the intended watering task. In the \textbf{RLHF} environment, the proxy reward is produced by a comparatively small model with limited evaluative capacity, whereas the true reward is derived from a much larger, stronger model. Consequently, the proxy reward provides a less reliable evaluation signal.
\end{itemize}

\subsection{Additional Experiment Setup}
\label{sec:ex_setup}

\paragraph{Non-LLM Experiments.}
For the policy networks, we follow the architectures described in~\citep{laidlaw2024correlated}. 
In the Pandemic, Traffic, and Tomato environments, we use fully connected neural networks with 2 layers of 128 units, 4 layers of 512 units, and 4 layers of 512 units, respectively. 
For the Glucose environment, we employ a three-layer LSTM network, where each LSTM layer has 64 units. 
We use the pre-trained policies provided in the ORPO repository\footnote{\url{https://github.com/cassidylaidlaw/orpo/tree/main/data/base_policy_checkpoints}} as the reference policies $\pi_{\text{ref}}$. 
We initialize the policy network with the corresponding pre-trained checkpoint for the Traffic, Glucose, and Pandemic environments, and initialize a random policy for the Tomato environment.
Table~\ref{tab:hyperparameter_ppo} summarizes the hyperparameters used for PPO training across all models and environments.

\begin{table}[h]
    \centering
    \caption{Hyperparameters used for PPO training across different environments.}
    \label{tab:hyperparameter_ppo}
    \small
    \setlength{\tabcolsep}{6pt}
    \renewcommand{\arraystretch}{1.2}
    \begin{tabular}{lcccc}
        \toprule
        \textbf{Hyperparameter} & \textbf{Traffic} & \textbf{Glucose} & \textbf{Pandemic} & \textbf{Tomato} \\
        \midrule
        Training iterations & 250 & 500 & 260 & 500 \\
        Batch size & 40000 & 100000 & 3860 & 3000 \\
        Optimizer & Adam & Adam & Adam & Adam \\
        Learning rate & $5\times10^{-5}$ & $1\times10^{-5}$ & 0.0003 & $1\times10^{-3}$ \\
        Gradient clipping & N/A & 10 & 10 & 0.1 \\
        Discount factor ($\gamma$) & 0.99 & 0.99 & 0.99 & 0.99 \\
        Random seed & 0 & 0 & 0 & 0 \\
        GAE coefficient ($\lambda$) & 0.97 & 0.98 & 0.95 & 0.98 \\
        Entropy coefficient (start) & 0.01 & 0.01 & 0.1 & 0.01 \\
        Entropy coefficient (end) & 0.01 & 0.01 & 0.01 & 0.01 \\
        KL target & 0.02 & $1\times10^{-3}$ & 0.01 & $1\times10^{-3}$ \\
        Value function loss clipping & 10000 & 100 & 20 & 10 \\
        Value function loss coefficient & 0.5 & 0.0001 & 0.5 & 0.1 \\
        Share value function layers & True & True & True & False \\
        \bottomrule
    \end{tabular}
    
\end{table}

As for the reward used during training and evaluation, we follow the same setup as ORPO~\citep{laidlaw2024correlated}. All policies are trained using \textbf{only the proxy reward}, while both the true and proxy rewards are used for evaluation.

In the \textbf{Traffic} environment, the proxy reward is a weighted combination of \emph{velocity}, \emph{acceleration}, and \emph{headway}, with weights 1, 1, and 0.1, respectively. The true reward, on the other hand, uses \emph{commute time}, \emph{acceleration}, and \emph{headway}, also weighted 1, 1, and 0.1.

In the \textbf{Pandemic} environment, the proxy reward is composed of \emph{infection summary absolute}, \emph{lower stage}, and \emph{smooth stage changes}, with weights 10, 0.1, and 0.01. The true reward adds a \emph{political} component to these three features and is weighted with 10.

For the \textbf{Glucose} environment, the proxy reward includes only one feature: \emph{expected patient cost}. The true reward is based on \emph{magni\_bg}, which measures the health risk of the patient.

In the \textbf{Tomato} environment, the true reward counts the number of \emph{watered tomato}. The proxy reward adds a large bonus at a specific state (\emph{sprinkler}), incentivizing the agent to reach that location regardless of its impact on the primary task.

\paragraph{LLM Experiments.} We adopt the common formulation for RLHF as a \textit{contextual bandit} problem, where the environment is modeled as a Markov Decision Process (MDP) with a discount factor $\gamma = 0$. In this setup, the return of a policy $\pi$ under a given reward function $R$ simplifies to:
\[
    J(\pi, R) = \mathbb{E}_{s \sim \mu_0, a \sim \pi}[R(s, a)],
\]
where $\mu_0$ denotes the distribution over initial states. In the context of RLHF, each state corresponds to a prompt sampled from a dataset, and the action is the model's generated response. The reward is then computed based on this prompt–response pair.

Under this contextual bandit assumption, the $\chi^2$ divergence between occupancy measures reduces to the divergence between action distributions conditioned on prompts:
\[
    \chi^2(\mu_\pi \,\|\, \mu_{\pi_{\text{ref}}}) = \mathbb{E}_{s \sim \mu_0} \left[ \chi^2(\pi(\cdot|s) \,\|\, \pi_{\text{ref}}(\cdot|s)) \right],
\]
as established in Lemma A.6 of~\citep{laidlaw2024correlated}. This allows us to avoid discriminator-based estimation of occupancy ratios in this setting. Instead, we directly estimate the $\chi^2$ divergence using the following estimator:
\[
    \tilde{\chi}^2(\mu_\pi \,\|\, \mu_{\pi_{\text{ref}}}) = \mathbb{E}_{s \sim \mu_0, a \sim \pi}\left[\frac{\pi(a|s)}{\pi_{\text{ref}}(a|s)} + \frac{\pi_{\text{ref}}(a|s)}{\pi(a|s)} - 2\right],
\]
as proposed in~\citep{laidlaw2024correlated}. 

For policy optimization, we apply the same Max-Min training algorithm described in Appendix~\ref{sec:maxmin_algorithm}, adapting it to the contextual bandit structure without discriminator training. We evaluate our approach in the RLHF setting using a setup consistent with prior work~\citep{laidlaw2024correlated,coste2023reward}. The proxy reward function is derived from a reward model fine-tuned on the AlpacaFarm preference dataset~\citep{dubois2023alpacafarm}, using the Pythia-70M model~\citep{biderman2023pythia}. For the true reward signal, we adopt the Llama 3 Tulu V2 8B reward model released by AI2~\citep{ivison2024unpacking}. The reference policy corresponds to the supervised fine-tuned (SFT) model from~\citep{laidlaw2024correlated,coste2023reward}, which was trained on the AlpacaFarm SFT dataset using Pythia-1.4B. All policy evaluations, both proxy and true rewards, are conducted on the same set of prompts used in~\citep{laidlaw2024correlated}.

To further strengthen our experimental evaluation regarding the RLHF setting, we also compare against the Reward Ensemble method (referred to as Ensemble for brevity)~\citep{eisenstein2023helping}. Specifically, we adopt their \emph{finetune ensembles} setting: we fine-tune five reward models on the AlpacaFarm preference dataset, all initialized from the same pre-trained Pythia-70M model but using five different random seeds, and aggregate their outputs using the mean rule. This setup is directly comparable to our RLHF configuration for ORPO and our methods, where both ORPO and our approach use a single fine-tuned Pythia-70M reward model.

\paragraph{Selection of $r$.}
For our \texttt{Max-Min} and \texttt{Linear Max-Min} policy optimization algorithms, the correlation parameter $r$ serves as an additional hyperparameter. In practice, as with \texttt{ORPO}~\citep{laidlaw2024correlated}, $r$ may only be approximately estimated, and there is currently no principled method for selecting its optimal value. To address this, we perform a grid search over $r \in \{0.1, 0.2, \ldots, 0.9\}$ for each environment and measure the resulting \texttt{Max-Min} and \texttt{Linear Max-Min} policy expected returns under the worst-case or linear worst-case reward. Results on all searched $r$ can be found in Appendix~\ref{sec:all_r}. Additional analysis of how different training values of $r$ affect robustness under varying evaluation $r$ values is provided in Appendix~\ref{sec:different_r_training}. Unless otherwise noted, we use the following $r$ values for training and evaluation throughout our experiments: $r = 0.3$ for \textbf{Traffic}, $r = 0.7$ for \textbf{Pandemic}, $r = 0.9$ for \textbf{Glucose}, $r = 0.4$ for \textbf{Tomato}, and $r = 0.4$ for RLHF. As for \texttt{ORPO} policy, we trained with occupancy-measure $\chi^2$ regularization, using the official implementation from~\citep{laidlaw2024correlated}. All hyperparameters are set as recommended to ensure optimal performance. The \texttt{ORPO*} shares the exact same setting as the \texttt{ORPO} policy with the full discriminator training schedule as in our algorithms.

{\bf Evaluation of the worst-case performance.} 
Theoretically, in the absence of structural constraints on the reward function, as opposed to the case of linear rewards, the worst-case reward of a policy in state-action pairs unvisited by $\pi_\text{ref}$ can be arbitrarily negative without violating the correlation constraint. However, assigning extremely negative values is impractical in real-world scenarios due to domain constraints. Moreover, doing so would render all policies with at least one unseen state-action pair equally poor in terms of worst-case reward, obscuring meaningful comparisons. To address this, we define a minimal feasible reward value $R_{\text{min}}$ and assign it to all unseen state-action pairs. The 
actual expected worst-case reward (\textbf{Worst*}) is thus calculated as: \[\sum_{(s,a): \mu_{\pi_{\text{ref}}}(s,a)>0}\mu_\pi(s,a)R_{\text{worst}} \ \ +\sum_{(s',a'): \mu_{\pi_{\text{ref}}}(s',a')=0}\mu_\pi(s',a')R_{\text{min}}\] where the first part is derived from the adversarial reward function given by our inner minimization solution, and the second part applies to state-action pairs unvisited by $\pi_\text{ref}$. 


In practice, however, environments like Traffic, Pandemic, and Glucose are continuous with large state-action spaces, making it difficult to reliably estimate $\mu_\pi(s,a)$ and 
$\mu_{\pi_{\text{ref}}}(s,a)$ from a limited number of trajectories. As a result, identifying unvisited or low-density regions in these environments is far more ambiguous. Therefore, for these continuous environments, we rely on the output of the discriminator as a signal for detecting unseen state-action pairs. Specifically, if the discriminator outputs infinity (or diverges numerically) for a given state-action, we treat this as an indication that the state-action was never visited by the reference policy $\pi_{\text{ref}}$. We approximate the total occupancy (\textbf{Occ}) over such state-action pairs by computing their frequency in the sampled trajectories, and use the expected worst-case reward (\textbf{Worst}) of a policy $\pi$ over the remaining state-action pairs as the default worst-case performance metric: $\sum_{(s,a): d_{\phi}(s,a) < \infty} \mu_\pi(s,a) R_{\text{worst}}(s,a)$. 
In contrast, for the discrete Tomato environment, we directly estimate the occupancy measure by sampling state-action pairs and then compute $\textbf{Worst*}$ accordingly.
Further details on this procedure are provided in Appendix~\ref{sec:additional_worst_case}. 

To compare the worst-case performance of different policies, we sample 200 trajectories in the \textbf{Traffic} and \textbf{Glucose} environments, 20 trajectories in \textbf{Pandemic}, 1000 trajectories in \textbf{Tomato}, and 8 answers per prompt in \textbf{RLHF} to estimate the worst-case performance of a policy. 

{\bf Evaluation of policy robustness.} To evaluate robustness across different correlation levels, we uniformly sample candidate vectors $\boldsymbol{\theta}$, where each component $\theta_i$ is drawn from the interval $[0, 1]$. We use the same number of trajectories sampled from the reference policy $\pi_{\text{ref}}$ to determine whether it satisfies the correlation constraint: 
\[
\mathbb{E}_{\mu_{\pi_{\text{ref}}}}\left[\frac{\boldsymbol{\theta}^\top \boldsymbol{\phi} - M}{V} \cdot R_{\text{proxy}}\right] = r,
\]
where $M$ and $V$ denote the mean and standard deviation of $\boldsymbol{\theta}^\top \boldsymbol{\phi}$ under the reference policy. 

\textbf{Note:} For our worst-case performance evaluation, we explicitly normalize the reward to have zero mean and unit variance under the reference policy (enforcing $M = 0$ and $V = 1$). In contrast, for the robustness evaluation across correlation levels, we do not apply such normalization. This allows us to report the average reward under each $\boldsymbol{\theta}$ in its original scale, reflecting variability more comparable to the original true reward landscape.

\subsection{Training Time and Complexity}
\label{sec:complexity}
Table~\ref{tab:time} reports the total training time for each algorithm across different environments. All experiments were conducted on a single NVIDIA RTX 4090 GPU (24GB memory) and a 13th Gen Intel Core i9-13900KF CPU (32 threads). We implemented all methods in Python 3.9 using PyTorch 2.6.0~\citep{paszke2019pytorch},  RLlib~\citep{liang2018rllib} and trlX~\citep{havrilla2023trlx}.

The training times and memory usages across different environments are summarized in Table~\ref{tab:time} and Table~\ref{tab:memory}. Since the training durations for \texttt{ORPO*}, \texttt{Max-Min}, and \texttt{Linear Max-Min} differ by less than one hour in each setting, and the memory usages differ by less than 10MB, we group them together for brevity. Since the RLHF environment does not require training a discriminator, the training times and memory usages for \texttt{ORPO} and our \texttt{Max-Min} are identical. We therefore exclude RLHF from the runtime analysis. As shown in Table~\ref{tab:time}, all three methods require more training time compared to the original \texttt{ORPO} implementation. The increased training time primarily results from additional gradient steps used to more thoroughly train the discriminator network. Specifically, the per-iteration training time is approximately 2.5 minutes for Traffic, 4.6 minutes for Pandemic, and 8.9 minutes for Glucose. This leads to a total training time increase from roughly 7 hours to 37 hours in Glucose. However, the added cost is environment-dependent and remains moderate in simpler settings, for example, increasing from 5 to 10 hours in Traffic. In contrast, the memory footprint of our methods is very close to that of \texttt{ORPO}: the peak CPU memory usage differs by less than 30–50MB across environments (within a few percent of \texttt{ORPO} in all cases). Overall, these results indicate that our methods introduce a modest runtime overhead and negligible memory overhead, achieving a practical trade-off between computational cost and the improved quality of divergence estimation.

\begin{table}[h]
    \centering
    \caption{Approximate training time for each algorithm across different environments.}
    \label{tab:time}
    \small
    \setlength{\tabcolsep}{6pt}
    \renewcommand{\arraystretch}{1.2} 
    \begin{tabular}{lcccc}
        \toprule
        \textbf{Algorithm} & \textbf{Traffic} & \textbf{Glucose} & \textbf{Pandemic} & \textbf{Tomato} \\
        \midrule
        \texttt{ORPO} & $\approx$5h & $\approx$7h & $\approx$14h & $\approx$1h \\
        \texttt{ORPO*} / \texttt{Max-Min} / \texttt{Linear Max-Min} & $\approx$10h & $\approx$37h & $\approx$19h & $\approx$1h \\
        \bottomrule
    \end{tabular}
    
\end{table}

\begin{table}[h]
    \centering
    \caption{Approximate memory usage for each algorithm across different environments.}
    \label{tab:memory}
    \small
    \setlength{\tabcolsep}{6pt}
    \renewcommand{\arraystretch}{1.2} 
    \begin{tabular}{lcccc}
        \toprule
        \textbf{Algorithm} & \textbf{Traffic} & \textbf{Glucose} & \textbf{Pandemic} & \textbf{Tomato} \\
        \midrule
        \texttt{ORPO} & $\approx$1679MB & $\approx$1662MB & $\approx$1813MB & $\approx$2148MB \\
        \texttt{ORPO*} / \texttt{Max-Min} / \texttt{Linear Max-Min} & $\approx$1706MB & $\approx$1674MB & $\approx$1864MB & $\approx$1903MB \\
        \bottomrule
    \end{tabular}
    
\end{table}

\paragraph{Complexity.}
At first glance, regularization-based approaches like \texttt{ORPO} may appear more computationally efficient than max-min optimization, which often involves iterative procedures to solve both inner and outer objectives. However, in practice, \texttt{ORPO} requires repeatedly estimating the $\chi^2$ divergence between policy distributions during each 
policy update step. This estimation is done by training a discriminator network, which itself involves multiple optimization steps per iteration. In contrast, our \texttt{Max-Min} formulation admits a closed-form solution for the inner minimization over reward functions. This allows us to avoid iterative solving in the inner loop entirely. For the \texttt{Linear Max-Min} variant, although a closed-form expression for the dual variables is not available, the corresponding dual optimization problem is smooth and well-posed, and can be solved efficiently using standard gradient-based methods. Therefore, despite the max-min structure, our method does not incur higher practical complexity compared to \texttt{ORPO}. In fact, both approaches rely on discriminator-based divergence estimation and perform comparable amounts of computation per iteration. The main difference lies in the structure of the objective, not in the asymptotic or empirical complexity. In summary, \texttt{ORPO} does not inherently enjoy a complexity advantage over our \texttt{Max-Min} or \texttt{Linear Max-Min} algorithms.

\section{Convergence Analysis}\label{sec:convergence_analy}
In this section, we study the convergence of our Max-Min and Linear Max-Min algorithms. As both methods rely on accurately estimating the occupancy measure, we begin by analyzing the sample complexity of this estimation via the discriminator described in Appendix~\ref{sec:discriminator}.

\subsection{Sample Complexity of Occupancy Measure Estimation}
\label{sec:occ_complex}

In this section, we analyze the sample complexity of estimating the occupancy measure via the discriminator described in Appendix~\ref{sec:discriminator}. Our argument adapts techniques from~\citet{huang2023reinforcement,barakat2024towards} to our setting. To start with, we define the following notations for convenience. Let $x=(s,a)$ range over the space $\mathcal{X}$ of all possible state-action pairs. We consider two reference distributions on $\mathcal{X}$:
$\mu_{\pi_{\text{ref}}}$ and $\mu_{\pi}$. The (true) density ratio of $\mu_{\pi}$ with respect to $\mu_{\pi_{\text{ref}}}$ is
\[
L^{\star}(x) \;:=\; \frac{\mu_{\pi}(x)}{\mu_{\pi_{\text{ref}}}(x)}
\qquad\text{on the support of }\mu_{\pi_{\text{ref}}},
\]
and its log-ratio is $d^{\star}(x):=\log L^{\star}(x)$.

We work with a parametric log-ratio class $\mathcal{D}=\{d_{\phi}:\mathcal{X}\to\mathbb{R}\}$ and the induced ratio class $\mathcal{L}=\{L_{\phi}:=e^{d_{\phi}}\}$. Following Equation~\ref{eq:disc_loss_estimate} in Appendix~\ref{sec:discriminator}, we learn $d_{\phi}$ by minimizing the following loss:
\begin{equation}
\label{eq:pop-risk}
\mathcal{R}(d)
:= \tfrac12\,\mathbb{E}_{x\sim \mu_{\pi_{\text{ref}}}}\!\big[\log(1+e^{d(x)})\big]
\;+\; \tfrac12\,\mathbb{E}_{x\sim \mu_{\pi}}\!\big[\log(1+e^{-d(x)})\big].
\end{equation}
Given $n_{\mathrm{ref}}$ i.i.d.\ samples $\{x_i^{\mathrm{ref}}\}_{i=1}^{n_{\mathrm{ref}}}\sim \mu_{\pi_{\text{ref}}}$
and $n_{\pi}$ i.i.d.\ samples $\{x_j^{\pi}\}_{j=1}^{n_{\pi}}\sim \mu_{\pi}$, which are independent, we can minimize the empirical loss as follows in practice:
\begin{equation}\label{eq:emp-risk}
\widehat{\mathcal{R}}(d)
:= \tfrac12\cdot \frac{1}{n_{\mathrm{ref}}}\sum_{i=1}^{n_{\mathrm{ref}}}\!\log\!\big(1+e^{d(x_i^{\mathrm{ref}})}\big)
\;+\; \tfrac12\cdot \frac{1}{n_{\pi}}\sum_{j=1}^{n_{\pi}}\!\log\!\big(1+e^{-d(x_j^{\pi})}\big).
\end{equation}
Let the true loss minizer be
\[
d^{\star}\in\arg\min_{d\in\mathcal{D}}\mathcal{R}(d),
\qquad
L^{\star} := e^{d^{\star}}.
\]
Let the empirical loss minimizer be
\[
\widehat{d} \in \arg\min_{d\in\mathcal{D}} \widehat{\mathcal{R}}(d),
\qquad
\widehat{L} := e^{\widehat{d}}.
\]
For convenience, we also define the mixture distribution $\mu_{\text{mix}} := \tfrac12\,\mu_{\pi_{\text{ref}}} + \tfrac12\,\mu_{\pi}$, which will be used later. We make the following assumption throughout the analysis.

\begin{assumption}[Modeling, boundedness and cover]\label{sec:assum_occ}
The following conditions hold throughout the analysis:
\begin{enumerate}
  \item \textbf{Realizability.} The true log-ratio belongs to the model class: $d^{\star}\in\mathcal{D}$ (equivalently, $L^{\star}\in\mathcal{L}$).
  \item \textbf{Bounded.} There exist constants $0<\alpha\le \beta<\infty$ such that
  \[
  \alpha \le L_{\phi}(x) \le \beta \quad \text{for all } x\in\mathcal{X},\ \phi,
  \]
  and hence $\alpha \le L^{\star}(x) \le \beta$ as well. Equivalently, $d_{\phi}(x)\in[\log\alpha,\log\beta]$.
  \item \textbf{$L_{1}$ optimistic cover (Definition 3 of~\citep{huang2023reinforcement}).} There exists a finite set $\overline{\mathcal{L}} \subset (0,\infty)^{\mathcal{X}}$ with cardinality $|\overline{\mathcal{L}}|=M$ and a scale $\gamma>0$ such that for every $L\in\mathcal{L}$ there is $\overline{L}\in\overline{\mathcal{L}}$ with
  \[
  \overline{L}(x)\ge L(x)\ \text{ for all }x,\qquad
  \mathbb{E}_{x\sim \mu_{\mathrm{ref}}}\!\left[\,|\overline{L}(x)-L(x)|\,\right]\le \gamma,\qquad
  \alpha \le \overline{L}(x)\le \beta.
  \]
  We denote $\overline{\mathcal{D}} := \{\overline{d}:=\log \overline{L}\,:\,\overline{L}\in\overline{\mathcal{L}}\}$.
\end{enumerate}
\end{assumption}

Assumption~\ref{sec:assum_occ} collects the conditions used throughout our analysis. First, \emph{realizability} is standard in likelihood-based occupancy estimation and allows us to control the estimation error through the complexity of the parametric class rather than the size of $\mathcal{X}$. In practice, a sufficiently expressive neural discriminator makes this assumption reasonable. Second, \emph{boundedness} guarantees well-posedness on the support of $\mu_{\pi_{\text{ref}}}$ and prevents divisions by zero. It can be enforced by restricting attention to the support of $\mu_{\pi_{\text{ref}}}$ or by applying ratio clipping during training. Finally, the \emph{$L_{1}$ optimistic cover} (adopted from Definition~3 of~\citet{huang2023reinforcement}) is the technical device that enables uniform concentration and converts control of the loss in Equation~\ref{eq:pop-risk} into an $L_{1}$ error with clean constants. We instantiate this cover for our discriminator class later in the proof.

We begin by stating some auxiliary lemmas that formalize the structural claims used later.
\begin{lemma}[Strong convexity of $\mathcal{R}(d)$]\label{lem:strong-convexity}
Let Assumption~\ref{sec:assum_occ} hold true. Define
\[
\lambda \;:=\; \frac{\min\{\alpha,\beta\}}{(1+\max\{\alpha,\beta\})^{2}}\;>\;0.
\]
Then for any measurable $d \in \mathcal{D}$ and for the unique minimizer $d^{\star}$ to Equation~\ref{eq:pop-risk}, we have
\[
\mathcal{R}(d)-\mathcal{R}(d^{\star})
\;\ge\; \frac{\lambda}{2}\,\mathbb{E}_{x\sim \mu_{\text{mix}}}\!\big[(\,d(x)-d^{\star}(x)\,)^{2}\big].
\]
\end{lemma}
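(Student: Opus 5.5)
We will prove the bound pointwise, by a second-order Taylor expansion of the logistic loss around $d^\star$. The first step is to rewrite the population risk as an integral against the mixture: with $\mu_{\text{mix}}=\tfrac12\mu_{\pi_{\text{ref}}}+\tfrac12\mu_\pi$, set $\rho(x):=\mu_{\pi_{\text{ref}}}(x)/(2\mu_{\text{mix}}(x))\in[0,1]$. Then
\[
\mathcal{R}(d)=\mathbb{E}_{x\sim\mu_{\text{mix}}}\big[\rho(x)\log(1+e^{d(x)})+(1-\rho(x))\log(1+e^{-d(x)})\big]=:\mathbb{E}_{\mu_{\text{mix}}}[\ell_x(d(x))].
\]
For each fixed $x$, the scalar function $\ell_x(t)=\rho\log(1+e^t)+(1-\rho)\log(1+e^{-t})$ has derivative $\ell_x'(t)=\sigma(t)-(1-\rho)$, where $\sigma$ is the logistic sigmoid. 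Hence $\ell_x$ is minimized at $t^\star=\log\frac{1-\rho}{\rho}=\log\frac{\mu_\pi(x)}{\mu_{\pi_{\text{ref}}}(x)}=d^\star(x)$.

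The second step uses this first-order condition. Since $\ell_x'(d^\star(x))=0$ for every $x$, the linear term in the Taylor expansion vanishes pointwise. By realizability (Assumption~\ref{sec:assum_occ}), $d^\star\in\mathcal{D}$, so the pointwise minimizer is also the minimizer over the class. Uniqueness then follows from the strict convexity established in the next step.

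The third step controls the curvature. We have $\ell_x''(t)=\sigma(t)(1-\sigma(t))=e^t/(1+e^t)^2$, which is independent of $\rho$. Taylor's theorem with Lagrange remainder gives
\[
\ell_x(d(x))-\ell_x(d^\star(x))=\tfrac12\,\ell_x''(\xi_x)\,(d(x)-d^\star(x))^2
\]
for some $\xi_x$ between $d(x)$ and $d^\star(x)$. By boundedness, both endpoints lie in $[\log\alpha,\log\beta]$, so $e^{\xi_x}=:u\in[\alpha,\beta]$.

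The main (elementary) obstacle is the uniform lower bound $\inf_{u\in[\alpha,\beta]} u/(1+u)^2\ge \min\{\alpha,\beta\}/(1+\max\{\alpha,\beta\})^2=\lambda$. This holds because the numerator satisfies $u\ge\alpha=\min\{\alpha,\beta\}$ and the denominator satisfies $(1+u)^2\le(1+\beta)^2$. Therefore $\ell_x''(\xi_x)\ge\lambda>0$ pointwise, and
\[
\ell_x(d(x))-\ell_x(d^\star(x))\ge\tfrac{\lambda}{2}(d(x)-d^\star(x))^2 .
\]

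The final step takes $\mathbb{E}_{\mu_{\text{mix}}}$ of this pointwise inequality, which yields $\mathcal{R}(d)-\mathcal{R}(d^\star)\ge\frac{\lambda}{2}\mathbb{E}_{\mu_{\text{mix}}}[(d-d^\star)^2]$. Strict positivity of this bound for $d\ne d^\star$ $\mu_{\text{mix}}$-a.e. gives uniqueness of the minimizer. Measurability and integrability are immediate, because every $d$ is bounded in $[\log\alpha,\log\beta]$.

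One subtlety needs care: the bound $\alpha\le L^\star$ presumes $\mu_\pi$ and $\mu_{\pi_{\text{ref}}}$ are mutually absolutely continuous, so that $\rho\in(0,1)$ on the support. This follows from the boundedness assumption, which we read as holding on the support of $\mu_{\text{mix}}$.
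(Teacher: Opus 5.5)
Your proposal is correct and follows essentially the same route as the paper: write $\mathcal{R}$ as a $\mu_{\text{mix}}$-average of the pointwise logistic loss (your $\rho$ is the paper's $1-\eta$), use the first-order condition at $d^\star(x)$, bound the $\rho$-independent curvature $\sigma(t)(1-\sigma(t))$ below by $\lambda$ on $[\log\alpha,\log\beta]$, and integrate. Two small differences: your direct bound $u/(1+u)^2\ge\alpha/(1+\beta)^2$ is a little cleaner than the paper's monotonicity argument, and you derive uniqueness of $d^\star$ from realizability and strict convexity, whereas the paper simply assumes it.
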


\begin{proof}
Fix $x$ and define the pointwise loss as:
\[
r_{x}(d)
:= (1-\eta(x))\log(1+e^{d}) + \eta(x)\log(1+e^{-d}),
\quad
\eta(x):=\frac{\mu_{\pi}(x)}{\mu_{\pi}(x)+\mu_{\pi_{\mathrm{ref}}}(x)}.
\]
Its derivatives w.r.t. the scalar $d$ are $r_{x}'(d)=\sigma(d)-\eta(x)$ and $r_{x}''(d)=\sigma(d)\big(1-\sigma(d)\big) >0$,
where $\sigma(d)=\frac{e^{d}}{1+e^{d}}$. We notice that $r_{x}''(d)$ is independent of $\eta(x)$. Therefore, at every $x$, the pointwise loss $r_x$ is strictly convex in $d$.

Now we estimate the lower bound for $r_{x}''(d)$. On the range $d\in[\log\alpha,\log\beta]$ (boundedness from Assumption~\ref{sec:assum_occ}), let $y=e^{d}\in[\alpha,\beta]$; then 
\[
r_{x}''(d) \;=\; \sigma(d)\big(1-\sigma(d)\big)  \;=\;  \frac{y}{(1+y)^{2}}  =: f(y)
\]
Let's consider the monotonicity of $f(y)$, $f'(y)=\frac{1-y}{(1-y)^3}$, so $f$ increases on $(0,1]$ and decreases on $[1,\infty)$. Therefore,
\[
\min_{y\in[\alpha,\beta]} f(y)
\;=\;
\min\!\Big\{ \frac{\alpha}{(1+\alpha)^2},\ \frac{\beta}{(1+\beta)^2} \Big\}.
\]
We can consider a slightly more conservative but simpler bound:
\[
\lambda
\;:=\;\frac{\min\{\alpha,\beta\}}{(1+\max\{\alpha,\beta\})^2}
\;\le\;
\min_{t\in[\alpha,\beta]} f(y),
\]
Thus, for all $x$ and all $d\in[\log \alpha,\log \beta]$, we have $r_x''(d)\ \ge\ \lambda$. Notice that strong convexity (with parameter $\lambda$) of a $C^2$ univariate function $g$ satisfies:
\[
g(u)\ \ge\ g(v)\ +\ g'(v)\,(u-v)\ +\ \frac{\lambda}{2}\,(u-v)^2
\qquad \text{for all } u,v.
\]
Applying this with $g(\cdot)=r_x(\cdot)$, $u=d(x)$, and $v=d^{\star}(x)$, where $d^{\star}$ is the unique pointwise minimizer (so $r_{x}'(d^{\star}(x))=0$). We get for every $x$,
\[
r_{x}\big(d(x)\big) - r_{x}\big(d^{\star}(x)\big)
\;\ge\; \frac{\lambda}{2}\,\big(d(x)-d^{\star}(x)\big)^{2}.
\]
Recall that $\mu_{\text{mix}}=\tfrac12\,\mu_{\pi_{\mathrm{ref}}}+\tfrac12\,\mu_{\pi}$ and
\[
\mathcal{R}(d)
= \tfrac12\,\mathbb{E}_{x\sim \mu_{\pi_{\text{ref}}}}\!\big[\log(1+e^{d(x)})\big]
\;+\; \tfrac12\,\mathbb{E}_{x\sim \mu_{\pi}}\!\big[\log(1+e^{-d(x)})\big] = \mathbb{E}_{x\sim \mu_{\text{mix}}}\big[r_x(d)\big]
\]
Taking expectation with respect to the mixture $\mu_{\text{mix}}$ gives
\[
\mathcal{R}(d)-\mathcal{R}(d^{\star})
= \mathbb{E}_{x\sim \mu_{\text{mix}}}\!\big[r_{x}(d(x)) - r_{x}(d^{\star}(x))\big]
\;\ge\; \frac{\lambda}{2}\,\mathbb{E}_{x\sim \mu_{\text{mix}}}\!\big[(\,d(x)-d^{\star}(x)\,)^{2}\big],
\]
which is the desired inequality.
\end{proof}

We now establish three Lipschitz bounds that will be used repeatedly in the analysis.
\begin{lemma}[Lipschitz bounds]\label{lem:lipschitz}
Let $L_{+}:=\frac{\beta}{1+\beta}$ and $L_{-}:=\frac{1}{1+\alpha}$. For all $d,\tilde d \in [\log \alpha,\log \beta]$, the following hold:
\begin{enumerate}
    \item $\big|\log(1+e^{d})-\log(1+e^{\tilde d})\big| \le L_{+}\,|d-\tilde d|.$
    \item $\big|\log(1+e^{-d})-\log(1+e^{-\tilde d})\big| \le L_{-}\,|d-\tilde d|.$
    \item $\big|e^{d}-e^{\tilde d}\big| \le \beta\,|d-\tilde d|.$
\end{enumerate}
\end{lemma}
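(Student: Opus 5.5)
Each of the three bounds follows from the one-dimensional mean value theorem. For a $C^1$ function $g$ on the interval $I=[\log\alpha,\log\beta]$ we have $|g(d)-g(\tilde d)|\le \sup_{u\in I}|g'(u)|\,|d-\tilde d|$. The interval is convex, so every intermediate point lies in $I$, and the plan reduces to bounding the derivative on $I$ in each case.

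For item 1, take $g(u)=\log(1+e^{u})$. Then $g'(u)=\sigma(u)=e^{u}/(1+e^{u})$, which is the same sigmoid that appeared in the proof of Lemma~\ref{lem:strong-convexity}. It is positive and increasing in $u$, so its supremum on $I$ is attained at $u=\log\beta$, giving $\sigma(\log\beta)=\beta/(1+\beta)=L_{+}$.

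For item 2, take $g(u)=\log(1+e^{-u})$. Then $g'(u)=-e^{-u}/(1+e^{-u})=-1/(1+e^{u})$. Its absolute value $1/(1+e^{u})$ is decreasing in $u$, so the supremum on $I$ is attained at $u=\log\alpha$, giving $1/(1+\alpha)=L_{-}$.

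For item 3, take $g(u)=e^{u}$. Then $g'(u)=e^{u}$ is increasing, and its supremum on $I$ is $e^{\log\beta}=\beta$.

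In each case, combining the derivative bound with the mean value inequality gives the stated constant. This uses only the boundedness part of Assumption~\ref{sec:assum_occ}, which guarantees that both $d$ and $\tilde d$ lie in $I$.

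The only point needing care is that the monotonicity of each derivative is checked correctly, so that the supremum sits at the right endpoint. In particular, for item 2 one must take the absolute value of the negative derivative before locating its maximum. Beyond that there is no real obstacle; the lemma is a routine calculus fact that is recorded so the later concentration argument can transfer perturbations of $d$ to perturbations of the loss and of the ratio $e^{d}$.
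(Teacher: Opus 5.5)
Your proposal is correct and follows the paper's proof essentially line for line: both apply the mean value theorem on $[\log\alpha,\log\beta]$ after bounding $\sigma(u)\le \beta/(1+\beta)$, $1/(1+e^{u})\le 1/(1+\alpha)$, and $e^{u}\le\beta$ on that interval. Your explicit remark that the interval is convex, so the intermediate point stays inside it, is a minor clarification that the paper leaves implicit.
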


\begin{proof}
(1) Define $f_{+}(u)=\log(1+e^{u})$. Then $f_{+}'(u)=\sigma(u)=\frac{e^{u}}{1+e^{u}}$. On $u\in[\log\alpha,\log\beta]$ we have $e^{u}\in[\alpha,\beta]$, hence
\[
|f_{+}'(u)|
=\frac{e^{u}}{1+e^{u}}
\le \sup_{y\in[\alpha,\beta]} \frac{y}{1+y}
=\frac{\beta}{1+\beta}
= L_{+}.
\]
By the mean value theorem, $\big|f_{+}(d)-f_{+}(\tilde d)\big|\le L_{+}\,|d-\tilde d|$.

\smallskip
(2) Define $f_{-}(u)=\log(1+e^{-u})$. Then $f_{-}'(u)=-\sigma(-u)=-\frac{1}{1+e^{u}}$. For $u\in[\log\alpha,\log\beta]$,
\[
|f_{-}'(u)|
=\frac{1}{1+e^{u}}
\le \sup_{y\in[\alpha,\beta]} \frac{1}{1+y}
=\frac{1}{1+\alpha}
= L_{-}.
\]
Again by the mean value theorem, $\big|f_{-}(d)-f_{-}(\tilde d)\big|\le L_{-}\,|d-\tilde d|$.

\smallskip
(3) For $g(u)=e^{u}$ we have $g'(u)=e^{u}$. On $[\log\alpha,\log\beta]$, $e^{u}\le \beta$, hence $|g'(u)|\le \beta$. The mean value theorem yields $\big|e^{d}-e^{\tilde d}\big|\le \beta\,|d-\tilde d|$.
\end{proof}

\begin{lemma}[Uniform deviation over the finite cover]\label{lem:UD-finite-cover}
Let Assumption~\ref{sec:assum_occ} hold true. Define
\[
B \;:=\; \tfrac12\log(1+\beta)\;+\;\tfrac12\log\!\big(1+\tfrac{1}{\alpha}\big).
\]
Let $\overline{\mathcal{D}}$ be a finite cover ($L_1$ optimistic cover from Assumption~\ref{sec:assum_occ}) with cardinality $|\overline{\mathcal{D}}|=M$.
Define $n := \min\{n_{\mathrm{ref}},\,n_{\pi}\}$ and $\eta := \sqrt{\frac{\log(M/\delta)}{n}}$ for any $\delta\in(0,1)$.
Then, with probability at least $1-\delta$,
\begin{equation}
\sup_{\overline{d}\in \overline{\mathcal{D}}}
\big|\,\widehat{\mathcal{R}}(\overline{d}) - \mathcal{R}(\overline{d})\,\big|
\;\le\; 2\,B\,\eta.
\label{eq:UD}
\end{equation}
\end{lemma}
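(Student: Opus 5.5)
The plan is a union bound over the $M$ elements of the finite cover, combined with Hoeffding's inequality applied separately to the two independent sample averages that make up $\widehat{\mathcal{R}}$. First I need boundedness. Fix $\overline{d}\in\overline{\mathcal{D}}$. By Assumption~\ref{sec:assum_occ}, $\overline{L}=e^{\overline d}\in[\alpha,\beta]$, so $\overline d(x)\in[\log\alpha,\log\beta]$. It follows that $\log(1+e^{\overline d(x)})\in[0,\log(1+\beta)]$ and $\log(1+e^{-\overline d(x)})\in[0,\log(1+1/\alpha)]$. Both summand types are therefore nonnegative with ranges $b_+:=\log(1+\beta)$ and $b_-:=\log(1+1/\alpha)$.

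Next comes the split. Write
\[
\widehat{\mathcal{R}}(\overline d)-\mathcal{R}(\overline d)=\tfrac12\big(\widehat A-A\big)+\tfrac12\big(\widehat C-C\big),
\]
where $\widehat A$ is the empirical mean of $\log(1+e^{\overline d})$ over the $n_{\mathrm{ref}}$ reference samples, with population mean $A$. Likewise, $\widehat C$ is the empirical mean of $\log(1+e^{-\overline d})$ over the $n_\pi$ policy samples, with population mean $C$. The samples are i.i.d. within each batch, so Hoeffding gives
\[
\Pr\big(|\widehat A-A|>b_+ t\big)\le 2e^{-2n_{\mathrm{ref}}t^2}\le 2e^{-2nt^2},
\]
and the same bound holds for $C$ with $b_-$ and $n_\pi\ge n$. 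The triangle inequality then yields $|\widehat{\mathcal{R}}(\overline d)-\mathcal{R}(\overline d)|\le \tfrac12(b_++b_-)t=Bt$ outside an event of probability at most $4e^{-2nt^2}$.

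Then I take the union bound over the cover. Summing over all $M$ elements, the failure probability is at most $4Me^{-2nt^2}$. I set $t=2\eta=2\sqrt{\log(M/\delta)/n}$, which makes the failure probability at most $4M(\delta/M)^{8}$. This is at most $\delta$ whenever $\delta\le1$ and $M\ge1$, since $4(\delta/M)^7\le 1$ needs only $M/\delta\ge 4^{1/7}$. That condition holds except in the trivial regime $M=1$ with $\delta$ close to $1$, which I would handle either by a constant adjustment or by noting that the conclusion $2B\eta$ still holds there. On the complementary event, every cover element simultaneously satisfies $|\widehat{\mathcal{R}}(\overline d)-\mathcal{R}(\overline d)|\le 2B\eta$, which is \eqref{eq:UD}.

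The main obstacle is only the constant bookkeeping. Hoeffding naturally produces $\sqrt{\log(4M/\delta)/(2n)}$, so I must check that the slack factor of $2$ in the statement absorbs both the $\log 4$ and the two-sided, two-batch union. The choice $t=2\eta$ above does this. The other point to state explicitly is that the cover elements are fixed functions independent of the data, so no chaining or symmetrization is needed. That independence is exactly why the optimistic cover is used here rather than a covering of the data-dependent $\widehat d$.
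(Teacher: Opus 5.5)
Your proposal follows the paper's own proof. Both bound the two summand types by $\log(1+\beta)$ and $\log(1+1/\alpha)$, apply Hoeffding separately to the reference and policy batches, and take a union bound over the $M$ data-independent cover elements. Your constant bookkeeping is more careful than the paper's. The paper allots $\delta/M$ to each of $2M$ events, so it actually controls a failure probability of $2\delta$, and it absorbs $\sqrt{\log(2M/\delta)/(2n)}\le\sqrt{\log(M/\delta)/n}$ as a ``benign constant''. Your choice $t=2\eta$ shows explicitly that the factor $2$ in $2B\eta$ covers everything exactly when $(M/\delta)^7\ge 4$.

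The one claim you should drop is that in the leftover regime ($M=1$, $\delta>4^{-1/7}$) the bound ``still holds''. It does not, and no argument can rescue it. Take $\mathcal X=\{x_1,x_2\}$, $\mu_{\pi_{\text{ref}}}=(\tfrac12,\tfrac12)$, $\mu_\pi=(\tfrac34,\tfrac14)$, and $\mathcal L=\overline{\mathcal L}=\{L^\star\}$ with $L^\star=(\tfrac32,\tfrac12)$, so $\alpha=\tfrac12$, $\beta=\tfrac32$, $M=1$. Let $n_{\mathrm{ref}}=n_\pi=1$. Then $\widehat{\mathcal R}(\overline d)-\mathcal R(\overline d)$ takes only four values, the smallest in absolute value being about $0.054$. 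Meanwhile $2B\eta=2B\sqrt{\log(1/\delta)}\to0$ as $\delta\to1$. So for $\delta$ close enough to $1$, the event in the lemma has probability $0<1-\delta$.

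The statement is therefore false in that corner, and the paper's proof glosses over the same restriction: its step $\log(2M/\delta)\le 2\log(M/\delta)$ needs $\delta\le M/2$. Only your first option is valid. Either state the lemma for $M\ge2$ or $\delta\le4^{-1/7}$, or enlarge $\eta$, e.g. to $\sqrt{\log(4M/\delta)/n}$.
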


\begin{proof}
Fix $\overline{d}\in\overline{\mathcal{D}}$. Define
\[
\Delta_{\mathrm{ref}}(\overline{d})
:= \sum_{i=1}^{n_{\mathrm{ref}}}\!\big[\log(1+e^{\overline{d}(x)})\big]
   - \mathbb{E}_{\mu_{\pi_{\mathrm{ref}}}}\!\big[\log(1+e^{\overline{d}(x)})\big],\]
\[
\Delta_{\pi}(\overline{d})
:= \sum_{i=1}^{n_\pi}\!\big[\log(1+e^{-\overline{d}(x)})\big]
   - \mathbb{E}_{\mu_{\pi}}\!\big[\log(1+e^{-\overline{d}(x)})\big].
\]
Then
\[
\widehat{\mathcal{R}}(\overline{d}) - \mathcal{R}(\overline{d})
= \tfrac12\,\Delta_{\mathrm{ref}}(\overline{d}) + \tfrac12\,\Delta_{\pi}(\overline{d}),
\quad\Rightarrow\quad
\big|\,\widehat{\mathcal{R}}(\overline{d}) - \mathcal{R}(\overline{d})\,\big|
\le \tfrac12\,\big|\Delta_{\mathrm{ref}}(\overline{d})\big| + \tfrac12\,\big|\Delta_{\pi}(\overline{d})\big|.
\]

By the boundedness of Assumption~\ref{sec:assum_occ}, we have $d(x)\in[\log\alpha,\log\beta]$, each summand satisfies
\[
0 \le \log(1+e^{\overline{d}(x)}) \le \log(1+\beta),
\qquad
0 \le \log(1+e^{-\overline{d}(x)}) \le \log\!\big(1+\tfrac{1}{\alpha}\big).
\]
Hence, by the Hoeffding's inequality, for any $t>0$,
\[
\mathbb{P}\!\left(\big|\Delta_{\mathrm{ref}}(\overline{d})\big|\ge t\right)
\le 2\exp\!\left(-\frac{2 n_{\mathrm{ref}} t^{2}}{\log(1+\beta)^{2}}\right),
\qquad
\mathbb{P}\!\left(\big|\Delta_{\pi}(\overline{d})\big|\ge t\right)
\le 2\exp\!\left(-\frac{2 n_{\pi} t^{2}}{\log(1+\tfrac{1}{\alpha})^{2}}\right).
\]
Choose
\[
t_{\mathrm{ref}}:=\log(1+\beta)\,\sqrt{\frac{\log(2M/\delta)}{2 n_{\mathrm{ref}}}},
\qquad
t_{\pi}:=\log\!\big(1+\tfrac{1}{\alpha}\big)\,\sqrt{\frac{\log(2M/\delta)}{2 n_{\pi}}}.
\]
Then
$\mathbb{P}(\,|\Delta_{\mathrm{ref}}(\overline{d})|\ge t_{\mathrm{ref}}\,)\le \delta/M$
and
$\mathbb{P}(\,|\Delta_{\pi}(\overline{d})|\ge t_{\pi}\,)\le \delta/M$.
Taking a union bound over all $\overline{d}\in\overline{\mathcal{D}}$ yields, with probability at least $1-\delta$,
\[
\sup_{\overline{d}\in\overline{\mathcal{D}}}
\big|\,\widehat{\mathcal{R}}(\overline{d}) - \mathcal{R}(\overline{d})\,\big|
\le \tfrac12\,t_{\mathrm{ref}} + \tfrac12\,t_{\pi}.
\]
Finally, since $n=\min\{n_{\mathrm{ref}},n_{\pi}\}$, we have
\[
\sqrt{\frac{\log(2M/\delta)}{2 n_{\mathrm{ref}}}}
\le \sqrt{\frac{\log(M/\delta)}{n}},
\qquad
\sqrt{\frac{\log(2M/\delta)}{2 n_{\pi}}}
\le \sqrt{\frac{\log(M/\delta)}{n}},
\]
up to benign constant factors that we absorb into the front constant.
Using the definition of $B$ and setting $\eta=\sqrt{\log(M/\delta)/n}$ gives
\[
\sup_{\overline{d}\in\overline{\mathcal{D}}}
\big|\,\widehat{\mathcal{R}}(\overline{d}) - \mathcal{R}(\overline{d})\,\big|
\le 2\,B\,\eta,
\]
which is Equation~\ref{eq:UD}.
\end{proof}

We now prove the transfer bounds that move deviations on the cover element $\overline d$ back to an arbitrary $d$,
measured either in the original loss or the empirical loss. These inequalities will let us relate risk differences to
$L_{1}$ discrepancies between ratio functions.

\begin{lemma}[Transfer bounds from a cover element to an arbitrary discriminator]\label{lem:transfer}
Let Assumption~\ref{sec:assum_occ} hold true. Let $L_{+}:=\frac{\beta}{1+\beta}$, $L_{-}:=\frac{1}{1+\alpha}$, and define
\[
C_{\triangle}\;:=\;\frac{L_{+}+\beta L_{-}}{2\alpha}.
\]
Then:
\begin{align}
\big|\,\mathcal{R}(\overline d)-\mathcal{R}(d)\,\big|
&\;\le\; C_{\triangle}\;\gamma,
\label{eq:TR}\\[4pt]
\big|\,\widehat{\mathcal{R}}(\overline d)-\widehat{\mathcal{R}}(d)\,\big|
&\;\le\; C_{\triangle}\;\gamma.
\label{eq:TR-hat}
\end{align}
\end{lemma}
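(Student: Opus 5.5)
We take $d\in\mathcal{D}$ arbitrary, $L=e^{d}$, and $\overline L\in\overline{\mathcal L}$ its optimistic cover element with $\overline d=\log\overline L$. The plan is to bound the loss differences term by term. We use the Lipschitz bounds of Lemma~\ref{lem:lipschitz} to reduce them to $|\overline d-d|$, and then convert $|\overline d-d|$ into $|\overline L-L|$. The latter is controlled in $\mu_{\pi_{\text{ref}}}$-expectation by $\gamma$ through the cover property.

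\textbf{Step 1: decompose the population difference.} Write
\[
\mathcal R(\overline d)-\mathcal R(d)=\tfrac12\mathbb E_{\mu_{\pi_{\text{ref}}}}\big[f_+(\overline d)-f_+(d)\big]+\tfrac12\mathbb E_{\mu_\pi}\big[f_-(\overline d)-f_-(d)\big],
\]
with $f_\pm$ as in Lemma~\ref{lem:lipschitz}. Since both $d$ and $\overline d$ take values in $[\log\alpha,\log\beta]$, parts 1 and 2 of Lemma~\ref{lem:lipschitz} give pointwise bounds of $L_+|\overline d-d|$ and $L_-|\overline d-d|$ respectively.

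\textbf{Step 2: log-gap to ratio-gap.} By the mean value theorem applied to $\log$ on $[\alpha,\beta]$, we get $|\overline d-d|=|\log\overline L-\log L|\le |\overline L-L|/\alpha$. This is the reverse of part 3 of Lemma~\ref{lem:lipschitz}. The first term is then at most $\frac{L_+}{2\alpha}\mathbb E_{\mu_{\pi_{\text{ref}}}}|\overline L-L|\le \frac{L_+}{2\alpha}\gamma$.

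\textbf{Step 3: change of measure for the $\mu_\pi$ term.} This is the only slightly delicate point, since the cover controls the $L_1$ error under $\mu_{\pi_{\text{ref}}}$, not under $\mu_\pi$. We write $\mathbb E_{\mu_\pi}[g]=\mathbb E_{\mu_{\pi_{\text{ref}}}}[L^\star g]$ and use $L^\star\le\beta$ from Assumption~\ref{sec:assum_occ}. This gives
\[
\tfrac12\mathbb E_{\mu_\pi}\big|f_-(\overline d)-f_-(d)\big|\le \tfrac{\beta L_-}{2\alpha}\mathbb E_{\mu_{\pi_{\text{ref}}}}|\overline L-L|\le\tfrac{\beta L_-}{2\alpha}\gamma.
\]
Summing the two terms yields \eqref{eq:TR} with $C_\triangle=(L_++\beta L_-)/(2\alpha)$.

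\textbf{Step 4: the empirical version.} For \eqref{eq:TR-hat} the same pointwise Lipschitz and log-to-ratio bounds apply sample by sample. This leaves empirical averages $\frac1{n_{\text{ref}}}\sum_i|\overline L-L|(x_i^{\text{ref}})$ and $\frac1{n_\pi}\sum_j|\overline L-L|(x_j^{\pi})$. The cover only controls population expectations, so these must be handled separately, and I expect this to be the main obstacle. I would first try to interpret the cover condition as holding in the sense needed: either uniformly, or in empirical $L_1$, as is effectively done in optimistic-cover arguments like those of \citet{huang2023reinforcement}. Alternatively, I would use the one-sidedness $\overline L\ge L$, which lets us drop the absolute values so that $\overline L-L\ge 0$ is a bounded nonnegative function. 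Its empirical mean then concentrates around its expectation, via Hoeffding plus a union bound over the $M$ cover elements. The resulting deviation term is of order $\eta$ and can be absorbed alongside Lemma~\ref{lem:UD-finite-cover} in the final bound. Stated cleanly under the cover assumption, the bound is again $C_\triangle\gamma$.
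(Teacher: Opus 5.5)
Your proof of \eqref{eq:TR} is the paper's proof step for step. Both use the triangle inequality, the constants $L_\pm$ of Lemma~\ref{lem:lipschitz}, the change of measure $\mathbb{E}_{\mu_\pi}[g]=\mathbb{E}_{\mu_{\pi_{\text{ref}}}}[L^\star g]\le\beta\,\mathbb{E}_{\mu_{\pi_{\text{ref}}}}[g]$, and the mean value theorem for $\log$ on $[\alpha,\beta]$. You also correctly write $L^\star$ where the paper writes $L$.

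For \eqref{eq:TR-hat} the paper takes your option (a). It asserts that every step is pointwise and closes with an ``empirical $L_1$ closeness'' of $\overline L$ and $L$. That closeness does not follow from Assumption~\ref{sec:assum_occ}. Moreover, as you noticed, the change-of-measure step is not pointwise, so the average over the $\mu_\pi$-sample needs its own control; the paper silently drops it. Option (a) should therefore be stated as an added hypothesis. Two natural choices are $\sup_x|\overline L(x)-L(x)|\le\gamma$, or empirical $L_1$ closeness on both samples. Either one gives $\frac{L_++L_-}{2\alpha}\gamma\le C_\triangle\gamma$, using $\beta\ge 1$, which follows from $\mathbb{E}_{\mu_{\pi_{\text{ref}}}}[L^\star]=1$.

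Your option (b), however, is a genuine gap. A union bound over the $M$ cover elements only controls sample averages of functions indexed by the cover. But $\overline L-L$ is indexed by $L$, and in Lemma~\ref{lem:excess-risk} the lemma is applied with $L=\widehat L$, which depends on the sample. A one-sided (optimistic) cover gives no uniform control of $\frac1n\sum_i L(x_i)$ from below.

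Here is a concrete failure on a continuous space. Suppose $\mathcal L$ contains $\overline L-c\,\mathbf{1}_S$ for every finite set $S$. Take $S$ to be the $\mu_{\pi_{\text{ref}}}$-sample. Then the population cover error is $0$ but the empirical discrepancy is $c$. So no concentration statement of the kind you describe can hold, and \eqref{eq:TR-hat} itself is not implied by Assumption~\ref{sec:assum_occ} alone.

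A concentration route needs a two-sided bracket $\underline L\le L\le\overline L$ with $\mathbb{E}_{\mu_{\pi_{\text{ref}}}}[\overline L-\underline L]\le\gamma$. Then $\overline L-L\le\overline L-\underline L$ ranges over a finite class, and the $\mu_\pi$-mean is at most $\beta\gamma$ by the same change of measure. This yields $C_\triangle\gamma$ plus an $O(\eta)$ deviation term with high probability. That is weaker than \eqref{eq:TR-hat} as stated, but it suffices for Lemma~\ref{lem:excess-risk}, which already carries $\eta$ terms.
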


\begin{proof}
Start from the definition
\[
\mathcal{R}(d)
=\tfrac12\,\mathbb{E}_{\mu_{\pi_{\mathrm{ref}}}}\!\big[\log(1+e^{d})\big]
+\tfrac12\,\mathbb{E}_{\mu_{\pi}}\!\big[\log(1+e^{-d})\big].
\]
By the triangle inequality,
\begin{equation}
\big|\,\mathcal{R}(\overline d)-\mathcal{R}(d)\,\big|
\le \tfrac12\,\mathbb{E}_{\mu_{\pi_{\mathrm{ref}}}}\big|\,\log(1+e^{\overline d})-\log(1+e^{d})\,\big|
+\tfrac12\,\mathbb{E}_{\mu_{\pi}}\big|\,\log(1+e^{-\overline d})-\log(1+e^{-d})\,\big|.
\label{eq:T0}
\end{equation}
Recall the Lipschitz bounds derived in Lemma~\ref{lem:lipschitz}:
\[
\big|\,\log(1+e^{u})-\log(1+e^{v})\,\big|\le L_{+}\,|u-v|,\qquad
\big|\,\log(1+e^{-u})-\log(1+e^{-v})\,\big|\le L_{-}\,|u-v|
\]
yield from Equation~\ref{eq:T0}
\begin{equation}
\big|\,\mathcal{R}(\overline d)-\mathcal{R}(d)\,\big|
\le \tfrac12\,L_{+}\,\mathbb{E}_{\mu_{\pi_{\mathrm{ref}}}}\!\big[\,|\overline d-d|\,\big]
+\tfrac12\,L_{-}\,\mathbb{E}_{\mu_{\pi}}\!\big[\,|\overline d-d|\,\big].
\label{eq:T1}
\end{equation}
Since $\mu_{\pi}=L\cdot\mu_{\pi_{\mathrm{ref}}}$ and $L\le\beta$ by boundedness from Assumption~\ref{sec:assum_occ},
\begin{equation}
\mathbb{E}_{\mu_{\pi}}\!\big[\,|\overline d-d|\,\big]
=\mathbb{E}_{\mu_{\pi_{\mathrm{ref}}}}\!\big[\,L\,|\overline d-d|\,\big]
\le \beta\,\mathbb{E}_{\mu_{\pi_{\mathrm{ref}}}}\!\big[\,|\overline d-d|\,\big].
\label{eq:T2}
\end{equation}
Plug Equation~\ref{eq:T2} back into Equation~\ref{eq:T1}:
\begin{equation}
\big|\,\mathcal{R}(\overline d)-\mathcal{R}(d)\,\big|
\le \frac{L_++\beta L_-}{2}\,\mathbb{E}_{\mu_{\pi_{\mathrm{ref}}}}\!\big[\,|\overline d-d|\,\big].
\label{eq:T3}
\end{equation}
Next, we need to conver $\mathbb{E}_{\mu_{\pi_{\mathrm{ref}}}}\!\big[\,|\overline d-d|\,\big]$ to $\mathbb{E}_{\mu_{\pi_{\mathrm{ref}}}}\!\big[\,|\overline L-L|\,\big]$. Use the mean value theorem for $\log$ on $[\alpha,\beta]$,
\begin{equation}
|\overline d-d|
=|\log\overline L-\log L|
=\frac{1}{\xi}\,|\overline L-L|
\le \frac{1}{\alpha}\,|\overline L-L|,
\qquad \xi \text{ between }\overline L\text{ and }L.\nonumber
\end{equation}
Therefore, we have
\begin{equation}
\label{eq:T4}
\mathbb{E}_{\mu_{\pi_{\mathrm{ref}}}}\!\big[\,|\overline d-d|\,\big] \leq \frac{1}{\alpha} \mathbb{E}_{\mu_{\pi_{\mathrm{ref}}}}\!\big[\,|\overline L-L|\,\big].
\end{equation}
Combining Equation~\ref{eq:T3} and Equation~\ref{eq:T4}:
\[
\big|\,\mathcal{R}(\overline d)-\mathcal{R}(d)\,\big|
\;\le\; C_{\triangle}\;\mathbb{E}_{x\sim\mu_{\pi_{\mathrm{ref}}}}\!\big[\,|\overline L(x)-L(x)|\,\big],  \quad C_{\triangle}={(L_{+}+\beta L_{-})}/{(2\alpha)}.
\]

Finally, according to the $L_1$ optimistic cover of Assumption~\ref{sec:assum_occ}, we have
\[
\mathbb{E}_{\mu_{\pi_{\mathrm{ref}}}}\!\big[\,|\overline L-L|\,\big] \le \gamma
\]
We get
\[
\big|\,\mathcal{R}(\overline d)-\mathcal{R}(d)\,\big|
\;\le\; C_{\triangle}\;\gamma.
\]

The same derivation holds if we replace expectations by empirical averages (sample means). Every inequality we used above (triangle inequality and the Lipschitz bounds) is pointwise and hence holds averaging over a finite sample instead of the distribution.
Concretely, 
\[
\big|\,\widehat{\mathcal{R}}(\overline d)-\widehat{\mathcal{R}}(d)\,\big|
\;\le\; C_{\triangle}\;\sum_{i=1}^{n_{\mathrm{ref}}}\!\big[\,|\overline L(x)-L(x)|\,\big] \;\le\;C_{\triangle}\;\gamma
\]
where the last inequality uses the empirical $L_1$ closeness of $\overline L$ and $L$.

\end{proof}

We now control the excess loss of the empirical minimizer by combining the
uniform deviation bound over the optimistic cover with the transfer inequalities.

\begin{lemma}[Excess-loss bound for the empirical minimizer]\label{lem:excess-risk} 
Let Assumption~\ref{sec:assum_occ} hold true.
Let $B=\tfrac12\log(1+\beta)+\tfrac12\log(1+1/\alpha)$,
$C_{\triangle}=(L_{+}+\beta L_{-})/(2\alpha)$ with $L_{+}=\beta/(1+\beta)$ and $L_{-}=1/(1+\alpha)$,
and $\eta=\sqrt{\log(M/\delta)/n}$ where $n=\min\{n_{\mathrm{ref}},n_{\pi}\}$ defined as before.
Then, with probability at least $1-\delta$,
\begin{equation}\label{eq:excess-risk}
\mathcal{R}(\widehat d)-\mathcal{R}(d^{\star})
\;\le\;
3\,C_{\triangle}\,\gamma \;+\; 4\,B\,\eta.
\end{equation}
\end{lemma}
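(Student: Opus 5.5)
The plan is the standard empirical-risk-minimization telescoping argument, routed through the finite optimistic cover. Let $\overline{d}_{\widehat{}}\in\overline{\mathcal{D}}$ be the cover element associated with $\widehat d$, and $\overline{d}_\star\in\overline{\mathcal{D}}$ the one associated with $d^\star$. Both exist by the $L_1$ optimistic cover in Assumption~\ref{sec:assum_occ}, and they carry the cover's guarantee $\mathbb{E}_{\mu_{\pi_{\text{ref}}}}|\overline L-L|\le\gamma$. We then write
\begin{align*}
\mathcal{R}(\widehat d)-\mathcal{R}(d^\star)
&=\big[\mathcal{R}(\widehat d)-\mathcal{R}(\overline{d}_{\widehat{}})\big]
+\big[\mathcal{R}(\overline{d}_{\widehat{}})-\widehat{\mathcal{R}}(\overline{d}_{\widehat{}})\big]
+\big[\widehat{\mathcal{R}}(\overline{d}_{\widehat{}})-\widehat{\mathcal{R}}(\widehat d)\big]\\
&\quad+\big[\widehat{\mathcal{R}}(\widehat d)-\widehat{\mathcal{R}}(d^\star)\big]
+\big[\widehat{\mathcal{R}}(d^\star)-\widehat{\mathcal{R}}(\overline{d}_\star)\big]
+\big[\widehat{\mathcal{R}}(\overline{d}_\star)-\mathcal{R}(\overline{d}_\star)\big]
+\big[\mathcal{R}(\overline{d}_\star)-\mathcal{R}(d^\star)\big].
\end{align*}

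We bound the seven brackets in turn. The fourth bracket is $\le 0$: realizability gives $d^\star\in\mathcal{D}$, and $\widehat d$ minimizes $\widehat{\mathcal{R}}$ over $\mathcal{D}$. The second and sixth brackets are cover elements, so on the event of Lemma~\ref{lem:UD-finite-cover} (probability $\ge 1-\delta$) each is at most $2B\eta$, for a total of $4B\eta$. The remaining four brackets are transfer terms, each at most $C_\triangle\gamma$ by Lemma~\ref{lem:transfer}. Specifically, the first and seventh use the population bound \eqref{eq:TR}, and the third and fifth use the empirical bound \eqref{eq:TR-hat}.

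A naive count of the transfer terms gives $4C_\triangle\gamma$, not the $3C_\triangle\gamma$ in the statement. To recover the constant $3$, I would use optimism to drop one term of sign. Since $\overline L\ge L$ pointwise, we have $\overline d\ge d$. The first summand of the loss, $\log(1+e^{d})$, increases in $d$, while the second, $\log(1+e^{-d})$, decreases. I would therefore try to show that one of the bracketed transfer differences, most naturally the seventh, $\mathcal{R}(\overline{d}_\star)-\mathcal{R}(d^\star)$, is already absorbed or is one-sided. If that fails, I would accept the constant $4$, which still gives the same $O(\gamma+\eta)$ rate and hence the same downstream theorem.

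The main obstacle is the empirical transfer bound \eqref{eq:TR-hat}. The cover controls only the population $L_1$ distance under $\mu_{\pi_{\text{ref}}}$, whereas the third and fifth brackets need the empirical distance. I would rely on Lemma~\ref{lem:transfer} as stated. If a more careful route is needed, I would apply Hoeffding to the bounded variables $|\overline L-L|\le\beta$ over the finite set of pairs involved, with a union bound over $\overline{\mathcal{L}}$. This converts the empirical $L_1$ distance into $\gamma$ plus an $O(\eta)$ term, which is absorbed into the $B\eta$ budget after adjusting constants.
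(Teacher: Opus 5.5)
\paragraph{Gap: the constant.} Your seven-term telescoping is valid, and at one point it is more careful than the paper's own proof. The paper uses a four-term decomposition that compares $\widehat d$ directly with the cover element of $d^\star$, asserting $\widehat{\mathcal R}(\widehat d)\le\widehat{\mathcal R}(\overline d_\star)$. That is how it avoids your fifth bracket and reaches $3C_\triangle\gamma$. But $\widehat d$ minimizes $\widehat{\mathcal R}$ only over $\mathcal D$, and Assumption~\ref{sec:assum_occ} does not place cover elements in $\mathcal D$: $\overline{\mathcal L}$ is only a finite subset of $(0,\infty)^{\mathcal X}$. Your use of realizability to compare with $d^\star$ itself is therefore the legitimate way to invoke minimality.

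The gap is that your argument, as designed, proves $4C_\triangle\gamma+4B\eta$, and the repair you propose fails. Under realizability $d^\star$ is the pointwise minimizer of the logistic loss, so $\mathcal R(\overline d_\star)-\mathcal R(d^\star)\ge 0$ for every $\overline d_\star$. The seventh bracket is thus always a cost and can never be dropped. Settling for the constant $4$ does not prove the lemma as stated.

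\paragraph{How to close it.} The monotonicity you noticed does close the gap if you apply it to the two halves of each \emph{population} transfer bracket, rather than to a whole bracket. Because $\overline d_{\widehat{}}\ge\widehat d$ and $\overline d_\star\ge d^\star$ pointwise, the $\mu_{\pi_{\text{ref}}}$-part of the first bracket and the $\mu_\pi$-part of the seventh bracket are $\le 0$. For the remaining parts, use Lemma~\ref{lem:lipschitz}, $\mu_\pi\le\beta\,\mu_{\pi_{\text{ref}}}$, and $|\overline d-d|\le|\overline L-L|/\alpha$:
\[
\mathcal R(\widehat d)-\mathcal R(\overline d_{\widehat{}})\le\tfrac12 L_-\,\mathbb E_{\mu_\pi}[\overline d_{\widehat{}}-\widehat d]\le\tfrac{\beta L_-}{2\alpha}\,\gamma,
\qquad
\mathcal R(\overline d_\star)-\mathcal R(d^\star)\le\tfrac12 L_+\,\mathbb E_{\mu_{\pi_{\text{ref}}}}[\overline d_\star-d^\star]\le\tfrac{L_+}{2\alpha}\,\gamma .
\]
These two sum to exactly $C_\triangle\gamma$. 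Adding the empirical bound of Lemma~\ref{lem:transfer} for the third and fifth brackets gives $3C_\triangle\gamma+4B\eta$, as claimed.

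Alternatively, since $d^\star$ does not depend on the data, your brackets five through seven collapse to $\widehat{\mathcal R}(d^\star)-\mathcal R(d^\star)$. Hoeffding bounds this directly once $d^\star$ is added to the union bound in Lemma~\ref{lem:UD-finite-cover}. This route even gives $2C_\triangle\gamma$, up to the same constant absorption that lemma already performs.

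\paragraph{Drop the Hoeffding fallback for the empirical transfer bound.} It works for the fifth bracket, where both functions are fixed, but not for the third. There $\widehat L$ is selected from the infinite class $\mathcal L$ using the same samples, so a union bound over $\overline{\mathcal L}$ says nothing about its empirical distance to $\overline L_{\widehat{}}$. For that bracket you have to invoke Lemma~\ref{lem:transfer} as given, exactly as the paper does.
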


\begin{proof}
Since $\widehat{d} \in \arg\min_{d\in\mathcal{D}} \widehat{\mathcal{R}}(d)$, we have:
\[
\widehat{\mathcal{R}}(\widehat d)\;\le\;\widehat{\mathcal{R}}(d)\qquad \text{for all }d, 
\]
in particular for $d=d^{\star}$ and for $d=\bar d^{\star}$ (the cover of $d^{\star}, \bar d^{\star}=\log \bar L^{\star}$).
Start with a standard add-subtract trick:
\begin{equation}\label{eq:erm-decomp}
\mathcal{R}(\widehat d)-\mathcal{R}(d^{\star})
= \big(\mathcal{R}(\widehat d)-\widehat{\mathcal{R}}(\widehat d)\big)
+ \underbrace{\big(\widehat{\mathcal{R}}(\widehat d)-\widehat{\mathcal{R}}(\bar d^{\star})\big)}_{\le 0}
+ \big(\widehat{\mathcal{R}}(\bar d^{\star})-\mathcal{R}(\bar d^{\star})\big)
+ \big(\mathcal{R}(\bar d^{\star})-\mathcal{R}(d^{\star})\big),
\end{equation}
For the first difference in Equation~\ref{eq:erm-decomp}, insert the cover element $\bar d_{\widehat{}}=\log \bar L_{\widehat{}}$ of $\widehat d$ and apply the transfer bounds (Lemma~\ref{lem:transfer}) and the uniform deviation bound over the finite cover (Lemma~\ref{lem:UD-finite-cover}):
\begin{align}
\mathcal{R}(\widehat d)-\widehat{\mathcal{R}}(\widehat d)
&= \big(\mathcal{R}(\widehat d)-\mathcal{R}(\bar d_{\widehat{}})\big)  
+ \big(\mathcal{R}(\bar d_{\widehat{}})-\widehat{\mathcal{R}}(\bar d_{\widehat{}})\big)
+ \big(\widehat{\mathcal{R}}(\bar d_{\widehat{}})-\widehat{\mathcal{R}}(\widehat d)\big)  \nonumber \\
& \;\le\; \big|\mathcal{R}(\widehat d)-\mathcal{R}(\bar d_{\widehat{}})\big|  
+ \big|\mathcal{R}(\bar d_{\widehat{}})-\widehat{\mathcal{R}}(\bar d_{\widehat{}})\big|
+ \big|\widehat{\mathcal{R}}(\bar d_{\widehat{}})-\widehat{\mathcal{R}}(\widehat d)\big|                               \nonumber    \\
&\;\le\; C_{\triangle}\gamma \;+\; 2B\eta \;+\; C_{\triangle}\gamma. \nonumber
\end{align}
The first term uses the transfer bounds from $\widehat d$ to its cover $\bar d_{\widehat{}}$. The middle term uses the uniform deviation bound on the finite cover. It applies directly because $\bar d_{\widehat{}} \in \overline{\mathcal{D}}$. The third term uses the empirical transfer bounds. 
Thus, we have
\begin{equation}\label{eq:I-bound}
\mathcal{R}(\widehat d)-\widehat{\mathcal{R}}(\widehat d) \;\le\; 2C_{\triangle}\gamma + 2B\eta.
\end{equation}
Returning to Equation~\ref{eq:erm-decomp}, the middle term is nonpositive by optimality, and the remaining two terms are bounded by the same two lemmas:
\[
\big|\widehat{\mathcal{R}}(\bar d^{\star})-\mathcal{R}(\bar d^{\star})\big| \le 2B\eta,
\qquad
\big|\mathcal{R}(\bar d^{\star})-\mathcal{R}(d^{\star})\big| \le C_{\triangle}\gamma.
\]
Combining with Equation~\ref{eq:I-bound} yields
\[
\mathcal{R}(\widehat d)-\mathcal{R}(d^{\star})
\;\le\; (2C_{\triangle}\gamma+2B\eta) + 2B\eta + C_{\triangle}\gamma
\;=\; 3C_{\triangle}\gamma + 4B\eta,
\]
which is Equation~\ref{eq:excess-risk}.
\end{proof}

Finally, we derive the occupancy ratio error bound.
\begin{theorem}[Occupancy ratio $L_{1}$ error bound]\label{thm:final-L1}
Let Assumption~\ref{sec:assum_occ} hold true. Let
\[
B \;:=\; \tfrac12\log(1+\beta) + \tfrac12\log\!\big(1+1/\alpha\big),\qquad
L_{+} \;:=\; \frac{\beta}{1+\beta},\qquad
L_{-} \;:=\; \frac{1}{1+\alpha},
\]
\[
C_{\triangle} \;:=\; \frac{L_{+}+\beta L_{-}}{2\alpha},\qquad
\lambda \;:=\; \frac{\min\{\alpha,\beta\}}{(1+\max\{\alpha,\beta\})^2}\!>\!0,
\]
and $n:=\min\{n_{\mathrm{ref}},n_{\pi}\}$, $\eta:=\sqrt{\log(M/\delta)/n}$ for any $\delta\in(0,1)$.
Let $\widehat d\in\arg\min_{d\in\mathcal D}\widehat{\mathcal R}(d)$ be the empirical minimizer,
$\widehat L:=e^{\widehat d}$, and $L^{\star}=e^{d^{\star}}$ be the true ratio. Then, with probability at least $1-\delta$,
\begin{equation}\label{eq:final-L1}
\mathbb{E}_{x\sim\mu_{\pi_{\mathrm{ref}}}}\!\big[\,|\widehat L(x)-L^{\star}(x)|\,\big]
\;\le\;
\beta\,\sqrt{\frac{4}{\lambda}}\;\sqrt{\,3\,C_{\triangle}\,\gamma\;+\;4\,B\,\eta\,}.
\end{equation}
\end{theorem}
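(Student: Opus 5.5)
The plan is to chain the excess-loss bound of Lemma~\ref{lem:excess-risk} through the strong convexity of Lemma~\ref{lem:strong-convexity} to get an $L^2(\mu_{\text{mix}})$ bound on $\widehat d-d^\star$. We then convert it into an $L^1(\mu_{\pi_{\mathrm{ref}}})$ bound on $\widehat L-L^\star$ using the Lipschitz property of $\exp$ (Lemma~\ref{lem:lipschitz}(3)) and Cauchy--Schwarz.

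First, on the event of probability at least $1-\delta$ from Lemma~\ref{lem:excess-risk} (which already contains the union bound of Lemma~\ref{lem:UD-finite-cover}), we have $\mathcal{R}(\widehat d)-\mathcal{R}(d^\star)\le 3C_{\triangle}\gamma+4B\eta$. Realizability gives $d^\star\in\mathcal D$, so $d^\star$ is the pointwise minimizer and Lemma~\ref{lem:strong-convexity} applies with $d=\widehat d$. This yields
\[
\mathbb{E}_{\mu_{\text{mix}}}\big[(\widehat d-d^\star)^2\big]\le \tfrac{2}{\lambda}\big(3C_{\triangle}\gamma+4B\eta\big).
\]

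Second, we pass from the mixture to the reference measure. Since $\mu_{\text{mix}}\ge \tfrac12\mu_{\pi_{\mathrm{ref}}}$ pointwise,
\[
\mathbb{E}_{\mu_{\pi_{\mathrm{ref}}}}\big[(\widehat d-d^\star)^2\big]\le 2\,\mathbb{E}_{\mu_{\text{mix}}}\big[(\widehat d-d^\star)^2\big]\le \tfrac{4}{\lambda}\big(3C_{\triangle}\gamma+4B\eta\big).
\]
This factor of $2$ is exactly what turns $2/\lambda$ into the $4/\lambda$ in the statement.

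Third, both $\widehat d$ and $d^\star$ take values in $[\log\alpha,\log\beta]$ by the boundedness assumption. Lemma~\ref{lem:lipschitz}(3) then gives $|\widehat L-L^\star|\le\beta|\widehat d-d^\star|$ pointwise. By Jensen/Cauchy--Schwarz,
\[
\mathbb{E}_{\mu_{\pi_{\mathrm{ref}}}}|\widehat L-L^\star|\le\beta\,\mathbb{E}_{\mu_{\pi_{\mathrm{ref}}}}|\widehat d-d^\star|\le\beta\sqrt{\mathbb{E}_{\mu_{\pi_{\mathrm{ref}}}}(\widehat d-d^\star)^2}.
\]
Combining the three displays gives \eqref{eq:final-L1} directly.

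The only delicate point is the use of Lemma~\ref{lem:strong-convexity}. It requires $d^\star$ to be the unconstrained pointwise minimizer, so that $r_x'(d^\star(x))=0$; this follows from realizability and $L^\star\in[\alpha,\beta]$. It also requires $\widehat d$ to lie in the bounded range where curvature is at least $\lambda$. Both conditions are granted by Assumption~\ref{sec:assum_occ}, so I expect no real obstacle beyond tracking the constant in the mixture-to-reference change of measure.
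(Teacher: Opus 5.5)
Your proposal is correct and follows the paper's proof step for step: the excess-loss bound of Lemma~\ref{lem:excess-risk}, then strong convexity (Lemma~\ref{lem:strong-convexity}) to get an $L^2(\mu_{\text{mix}})$ bound with factor $2/\lambda$, then the change of measure $\mu_{\text{mix}}\ge\tfrac12\mu_{\pi_{\mathrm{ref}}}$ turning this into $4/\lambda$, and finally the $\beta$-Lipschitz property of $\exp$ on $[\log\alpha,\log\beta]$ together with Cauchy--Schwarz. Your remark that $d^\star$ must be the pointwise minimizer (via realizability) and that both $\widehat d$ and $d^\star$ must lie in $[\log\alpha,\log\beta]$ is what makes Lemma~\ref{lem:strong-convexity} applicable; the paper uses this implicitly.
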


\begin{proof}
By Lemma~\ref{lem:excess-risk} (excess-risk bound for the empirical minimizer),
\[
\mathcal R(\widehat d)-\mathcal R(d^{\star}) \;\le\; 3\,C_{\triangle}\,\gamma \;+\; 4\,B\,\eta
\quad\text{with probability at least }1-\delta.
\]
From Lemma~\ref{lem:strong-convexity} we have,
\[
\mathbb{E}_{x\sim \mu_{\text{mix}}}\!\big[\big(\widehat d(x)-d^{\star}(x)\big)^{2}\big]
\;\le\; \frac{2}{\lambda}\,\big(\mathcal R(\widehat d)-\mathcal R(d^{\star})\big)
\;\le\; \frac{2}{\lambda}\,\big(3\,C_{\triangle}\,\gamma + 4\,B\,\eta\big),
\]
where $\mu_{\text{mix}}=\tfrac12\,\mu_{\pi_{\mathrm{ref}}}+\tfrac12\,\mu_{\pi}$. Since $\mu_{\text{mix}}\ge\tfrac12\,\mu_{\pi_{\mathrm{ref}}}$, we have
$\mathbb{E}_{\mu_{\pi_{\mathrm{ref}}}}[\cdot]\le 2\,\mathbb{E}_{\mu_{\text{mix}}}[\cdot]$, and 
\[
\mathbb{E}_{x\sim\mu_{\pi_{\mathrm{ref}}}}\!\big[\big(\widehat d(x)-d^{\star}(x)\big)^{2}\big]
\;\le\; \frac{4}{\lambda}\,\big(3\,C_{\triangle}\,\gamma + 4\,B\,\eta\big).
\]
Finally, by the third point of Lemma~\ref{lem:lipschitz} (the exponential map is $\beta$–Lipschitz on $[\log\alpha,\log\beta]$)
and Cauchy–Schwarz,
\[
\mathbb{E}_{\mu_{\pi_{\mathrm{ref}}}}\!\big[\,|\widehat L-L^{\star}|\,\big]
\;\le\; \beta\,\mathbb{E}_{\mu_{\pi_{\mathrm{ref}}}}\!\big[\,|\widehat d-d^{\star}|\,\big]
\;\le\; \beta\,\sqrt{\mathbb{E}_{\mu_{\pi_{\mathrm{ref}}}}\!\big[(\widehat d-d^{\star})^{2}\big]}
\;\le\; \beta\,\sqrt{\frac{4}{\lambda}}\;\sqrt{\,3\,C_{\triangle}\,\gamma + 4\,B\,\eta\,},
\]
which is Equation~\ref{eq:final-L1}.
\end{proof}
\subsection{Guarantees for Max-Min with Occupancy Measure Approximation}

In this section, we establish convergence guarantees for our Max-Min Algorithm~\ref{alg:maxmin}. Our analysis follows the Reinforcement Learning with General Utility (RLGU) ~\citep{zhang2022multi,barakat2024towards}, where given a utility function $F(\cdot),\theta\mapsto F(\mu_{\pi_{\theta}})$ over the policy-induced occupancy measure $\mu_{\pi_{\theta}}$, the goal of RLGU is to find a policy $\pi^\star_{\theta}$ such that $\pi^\star_{\theta} \in \arg\max_{\theta}F(\mu_{\pi_{\theta}})$. In RLGU, there is no reward function. Instead, we can view $\nabla_{\theta} F(\mu_{\theta})$ as a pseudo-reward depending on the unknown occupancy measure induced by the policy. The procedure for solving RLGU follows three steps: (i) estimate the occupancy $\mu_{\pi_\theta}$ (e.g., by MLE), (ii) form the pseudo-reward from this estimate, and (iii) update the policy.  Our Max-Min algorithm mirrors this pipeline. Specifically, we first estimate the occupancy ratio by training a discriminator. Then we construct the worst-case reward using Equation~\ref{eq:optimalr_appendix} from the estimation. Finally, we perform a policy update. Consequently, the general RLGU sample complexity guarantees apply to our algorithm after replacing the pseudo-reward $\nabla_{\theta} F(\mu_{\theta})$ with our worst-case reward and substituting their occupancy-estimation error with our occupancy-ratio error obtained above. We formalize this correspondence and state the resulting bounds below.

For each iteration of our Max-Min algorithm $t=1,2,...,T$, let the pseudo-reward $r_t(s,a)$ is defined as in Equation~\ref{eq:optimalr_appendix}. Let's define 
\begin{equation}
\label{eq:utility}
F(\mu_t)
:= \frac{1}{4\lambda_{3}}\int \frac{\mu_t(s,a)^{2}}{\mu_{\pi_{\mathrm{ref}}}(s,a)}\,\mathrm{d}(s,a)
\;-\;\int c(s,a)\,\mu_t(s,a)\,\mathrm{d}(s,a),
\end{equation}
where $c(s,a):=\frac{1}{2\lambda_{3}}\big(\lambda_{1}\frac{R_{\mathrm{proxy}}(s,a)}{V}+\lambda_{2}\big)$. By construction, we have $r_{t}(s,a)=\nabla_{\mu}F(\mu_{t})(s,a)$, which means the utility gradient in $\mu$ is exactly the pseudo-reward. Let's $\widehat\mu_{t}:=\widehat L_{t}\cdot \mu_{\pi_{\mathrm{ref}}}$ be the occupancy estimator, $\widehat r_{t}(s,a):=\nabla_{\mu}F(\widehat\mu_{t})(s,a)$ be the estimated pseudo-reward and $F^\star\in\max_{\theta}F(\mu_{\pi_{\theta}})$ be the maximum.

We next introduce some assumptions that are required for our results, which are adapted from~\citep{barakat2024towards}. 

\begin{assumption}[Policy parametrization, Assumption 6 from~\citep{barakat2024towards}]\label{assump:policy} 
For every $(s,a)\in \mathcal{S}\times\mathcal{A}$ and every $\theta\in\mathbb{R}^{d}$, the policy has full support, i.e, $\pi_{\theta}(a\mid s) > 0.$ Moreover, the mapping $\theta \mapsto \pi_{\theta}(a\mid s)$ is continuously differentiable, and the score function $\theta \mapsto\nabla_{\theta}\log \pi_{\theta}(a\mid s)$ is uniformly bounded:
\[
\big\|\nabla_{\theta}\log \pi_{\theta}(a\mid s)\big\| \;\le\; l_\psi
\qquad\text{for some constant }  l_\psi>0 \text{ and all } (s,a),\ \theta.
\]
\end{assumption}

This assumption typically holds in practice, for instance, with the standard softmax policy parameterization. Next, we make a smoothness assumption on the utility function, which is crucial in deriving the final convergence bound. We also verify that the defined utility function in Equation~\ref{eq:utility} satisfies the smoothness assumption.

\begin{assumption}[General utility smoothness, Assumption 7 from~\citep{barakat2024towards}]\label{assump:utility-smoothness}
For utility function $F(\cdot), \theta\mapsto F(\mu_{\pi_{\theta}})$, there exist constants $L_{\mu}>0$ such that for all $\mu_{1},\mu_{2}\in\mathcal{X}$,
\[
\big\|\nabla_{\mu} F(\mu_{1})\big\|_{2} \;\le\; \ell_{\mu}
\qquad\text{and}\qquad
\big\|\nabla_{\mu} F(\mu_{1}) - \nabla_{\mu} F(\mu_{2})\big\|_{2}
\;\le\; L_{\mu}\,\big\|\mu_{1}-\mu_{2}\big\|_{2}.
\]
\end{assumption}

Notice that Assumption~\ref{assump:utility-smoothness} holds in our setting since Hessian is diagonal with entries at most $\nabla^2_{\mu}F(\mu)(s,a)=1/(2\lambda_{3}\,\mu_{\pi_{\mathrm{ref}}}(s,a))$. Thus, if $\mu_{\pi_{\mathrm{ref}}}(s,a)\ge \rho_{\min} > 0$ on the support of all $(s,a)$, which we assume it holds, then we have that $\nabla_{\mu}F(\mu)$ is $L_\mu$-Lipschitz with 
\[
\big\|\nabla_{\mu}F(\mu)-\nabla_{\mu}F(\mu')\big\|_{2}\;\le\; L_{\mu}\,\|\mu-\mu'\|_{2},
\qquad
L_{\mu}=\frac{1}{2|\lambda_{3}|}\,\rho_{\min}^{-1},
\]

Under Assumptions~\ref{assump:policy} and~\ref{assump:utility-smoothness}, the utility function $\theta\mapsto F(\mu_{\pi_{\theta}})$ is $L_{\theta}$-smooth. Using these properties, our Max-Min algorithm admits the following first-order stationarity guarantee:

\begin{theorem}[Guarantee for the Max-Min update]\label{thm:pg-oma-final}
Assume Assumptions~\ref{assump:policy} and~\ref{assump:utility-smoothness} hold. Let $N$ be the batch size for estimating the policy gradient at each iteration, $\alpha_t$ be the stepsizes satisfying $\alpha_{t}\le 1/(2L_{\theta})$, $K_{\mathrm{conv}}:=\|\mu_{\pi_{\mathrm{ref}}}\|_{\infty}\,(\beta-\alpha)$ and
\[
\varepsilon_{L}\;:=\; \beta\,\sqrt{\frac{4}{\lambda}}\;\sqrt{\,3\,C_{\triangle}\,\gamma + 4\,B\,\eta\,}\;\ge\;\mathbb{E}_{x\sim\mu_{\pi_{\mathrm{ref}}}}\big[\,|\widehat L(x)-L^{\star}(x)|\,\big]
\]
Then we have:
\begin{equation}\label{eq:pg-oma-final-mu}
\mathbb{E}\Big[\big\|\nabla_{\theta}F\big(\mu_{\pi_{\theta_{\tau}}}\big)\big\|^{2}\Big]
\;\le\;
\frac{16\,\big(F^{\star}-\mathbb{E}[\,F(\mu_{\pi_{\theta_{1}}})\,]\big)}{\alpha\,T}
\;+\;\frac{C_{1}}{N}
\;+\; C_{2}\,K_{\mathrm{conv}}\,\varepsilon_{L},
\end{equation}
where $\tau$ is drawn uniformly from $\{1,\dots,T\}$, expectation is w.r.t. all randomness (in ($\theta_t$) and $\tau$), $C_1=\frac{8l_\mu^2l_\psi^2}{(1-\gamma')}$ and $C_2=\frac{8l_\psi^2L_\mu^2}{(1-\gamma')^4}$ with $\gamma'$ be the discount factor in RL. 
\end{theorem}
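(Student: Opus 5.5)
The plan is to reduce Theorem~\ref{thm:pg-oma-final} to the general RLGU stationarity guarantee of~\citep{barakat2024towards} for policy gradient with an estimated occupancy measure, and then to substitute our occupancy-ratio error from Theorem~\ref{thm:final-L1}.

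\emph{Step 1: smoothness in $\theta$.} Under Assumptions~\ref{assump:policy} and~\ref{assump:utility-smoothness}, the composite map $\theta\mapsto F(\mu_{\pi_\theta})$ is $L_\theta$-smooth. This is the standard chain-rule argument: $\nabla_\theta F(\mu_{\pi_\theta})=\sum_{s,a}\nabla_\mu F(\mu_{\pi_\theta})(s,a)\,\nabla_\theta\mu_{\pi_\theta}(s,a)$, and $\nabla_\theta\mu_{\pi_\theta}$ is bounded and Lipschitz with constants polynomial in $l_\psi/(1-\gamma')$.

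\emph{Step 2: one-step ascent inequality.} The update uses the REINFORCE-type estimator $\widehat g_t$ built from the estimated pseudo-reward $\widehat r_t=\nabla_\mu F(\widehat\mu_t)$. Smoothness with $\alpha_t\le 1/(2L_\theta)$ gives
\[
F(\mu_{\theta_{t+1}})\ge F(\mu_{\theta_t})+\tfrac{\alpha_t}{4}\|\nabla_\theta F(\mu_{\theta_t})\|^2-\alpha_t\|\widehat g_t-\nabla_\theta F(\mu_{\theta_t})\|^2 .
\]
The error term splits into a variance part and a bias part. The variance part is at most $C_1/N$, using $\|\widehat r_t\|\le\ell_\mu$ and the bounded score. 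The bias part comes from using $\widehat r_t$ instead of $r_t$ and is bounded by
\[
\frac{l_\psi^2}{(1-\gamma')^4}\,\|\widehat r_t-r_t\|^2\le\frac{l_\psi^2 L_\mu^2}{(1-\gamma')^4}\,\|\widehat\mu_t-\mu_t\|_2^2 .
\]
Summing over $t$, telescoping against $F^\star$, and dividing by $\alpha T$ produces the first term (with constant $16$) and the $C_1/N$ term.

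\emph{Step 3: converting the bias to $\varepsilon_L$ — the main obstacle.} We need $\|\widehat\mu_t-\mu_t\|_2^2$ to be linear in $\varepsilon_L$, not quadratic. Write $\widehat\mu_t-\mu_t=(\widehat L_t-L^\star_t)\mu_{\pi_{\mathrm{ref}}}$; the realizability part of Assumption~\ref{sec:assum_occ} identifies $L^\star_t$ with the true ratio. Then
\[
\|\widehat\mu_t-\mu_t\|_2^2=\sum_{x}\mu_{\pi_{\mathrm{ref}}}(x)^2(\widehat L_t-L^\star_t)^2\le\|\mu_{\pi_{\mathrm{ref}}}\|_\infty\,\sup_x|\widehat L_t-L^\star_t|\;\mathbb{E}_{\mu_{\pi_{\mathrm{ref}}}}|\widehat L_t-L^\star_t| .
\]
Boundedness gives $\sup_x|\widehat L_t-L^\star_t|\le\beta-\alpha$, so the right-hand side is at most $K_{\mathrm{conv}}\varepsilon_L$. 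This holds on the high-probability event of Theorem~\ref{thm:final-L1}. If needed, one splits on the complementary event of probability $\delta$ and absorbs it into the constant, or takes a union bound over the $T$ iterations by replacing $\delta$ with $\delta/T$ inside $\eta$.

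Averaging over $\tau$ then yields the stated bound, with $C_2=8l_\psi^2L_\mu^2/(1-\gamma')^4$ collecting the bias constants. Matching the exact constants $16$, $C_1$, and $C_2$ to those in the RLGU lemma is routine bookkeeping.
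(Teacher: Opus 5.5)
Your proposal is correct and follows the paper's route. The paper invokes Theorem~8 of \citet{barakat2024towards} directly to get the bound with $C_2\,\frac{1}{T}\sum_t\mathbb{E}\|\widehat\mu_t-\mu_t\|_2^2$, which your Steps 1--2 merely unpack, and then uses exactly your Step 3 inequality $\mu_{\pi_{\mathrm{ref}}}^2(\widehat L_t-L_t)^2\le\|\mu_{\pi_{\mathrm{ref}}}\|_\infty(\beta-\alpha)\,\mu_{\pi_{\mathrm{ref}}}|\widehat L_t-L_t|$ to reach $K_{\mathrm{conv}}\varepsilon_L$. Your high-probability aside is unnecessary, because the statement takes $\varepsilon_L\ge\mathbb{E}_{\mu_{\pi_{\mathrm{ref}}}}|\widehat L-L^\star|$ as given; splitting on the bad event would in fact leave an extra additive term $C_2K_{\mathrm{conv}}(\beta-\alpha)\delta$ unless $\delta$ is tuned (e.g.\ $\delta\sim 1/n$).
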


\begin{proof}
Since we already verified that Assumptions~\ref{assump:policy} and~\ref{assump:utility-smoothness} hold in our setting, according to Theorem~8 in \citep{barakat2024towards}, we directly have
\[
\frac{1}{T}\sum_{t=1}^{T}\mathbb{E}\big[\|\nabla_{\theta}F\big(\mu_{\pi_{\theta_{\tau}}}\big)\|^{2}\big]
\;\le\;
\frac{16\,\big(F^{\star}-\mathbb{E}[F(\mu_{\pi_{\theta_{1}}})]\big)}{\alpha\,T}
\;+\;\frac{C_{1}}{N}
\;+\;C_{2}\,\frac{1}{T}\sum_{t=1}^{T}\mathbb{E}\big[\|\widehat\mu_{t}-\mu_{t}\|_{2}^{2}\big].
\]
To control the last term via the ratio error derived in Theorem~\ref{thm:final-L1}, note that with $\widehat\mu_{t}=\widehat L_{t}\,\mu_{\pi_{\mathrm{ref}}}$ and $\mu_{t}=L_{t}\,\mu_{\pi_{\mathrm{ref}}}$,
\[
\|\widehat\mu_{t}-\mu_{t}\|_{2}^{2}
=\sum_{s,a}\mu_{\pi_{\mathrm{ref}}}(s,a)^{2}\,\big(\widehat L_{t}(s,a)-L_{t}(s,a)\big)^{2}
\;\le\; \|\mu_{\pi_{\mathrm{ref}}}\|_{\infty}\,(\beta-\alpha)\sum_{s,a}\mu_{\pi_{\mathrm{ref}}}(s,a)\,|\widehat L_{t}-L_{t}|,
\]
using $x^{2}\le (\beta-\alpha)|x|$ for $|x|\le \beta-\alpha$ and $\mu_{\pi_{\mathrm{ref}}}^{2}\le \|\mu_{\pi_{\mathrm{ref}}}\|_{\infty}\,\mu_{\pi_{\mathrm{ref}}}$.
Therefore,
\[
\mathbb{E}\big[\|\widehat\mu_{t}-\mu_{t}\|_{2}^{2}\big]
\;\le\; K_{\mathrm{conv}}\ \mathbb{E}_{x\sim\mu_{\pi_{\mathrm{ref}}}}\!\big[\,|\widehat L_{t}(x)-L_{t}(x)|\,\big]
\;\le\; K_{\mathrm{conv}}\,\varepsilon_{L}.
\]
Averaging over $t$ and drawing $\tau$ uniformly from $\{1,\dots,T\}$ yields Equation~\ref{eq:pg-oma-final-mu}.
\end{proof}

\subsection{Convergences for Linear Max-Min algorithm}

For our Linear Max-Min algorithm, it is challenging to derive a convergence bound directly. However, as discussed in Appendix~\ref{sec:theta}, the inner optimization problem, i.e., finding the worst-case reward for a given policy, admits a $\textbf{globally optimal closed-form solution}$ under our formulation in the tabular setting. Therefore, for any given policy $\pi$, we have access to an oracle that outputs the optimal worst-case reward $R^*$, and our Linear Max-Min algorithm can be viewed as alternating between gradient ascent on $\pi$ and the optimal minimization on $R^*$. As shown in Section 4 of~\citep{jin2020local}, our algorithm converges, and the resulting policy $\pi$ corresponds to an approximate stationary point of the outer optimization problem.

\section{Additional Experiment Results}\label{sec:additional_results}

\subsection{Feature Weights in Linear Max-Min Optimization during Training}

\begin{figure}[h]
    \centering
    \begin{subfigure}[t]{0.48\textwidth}
        \centering
        \includegraphics[width=\linewidth]{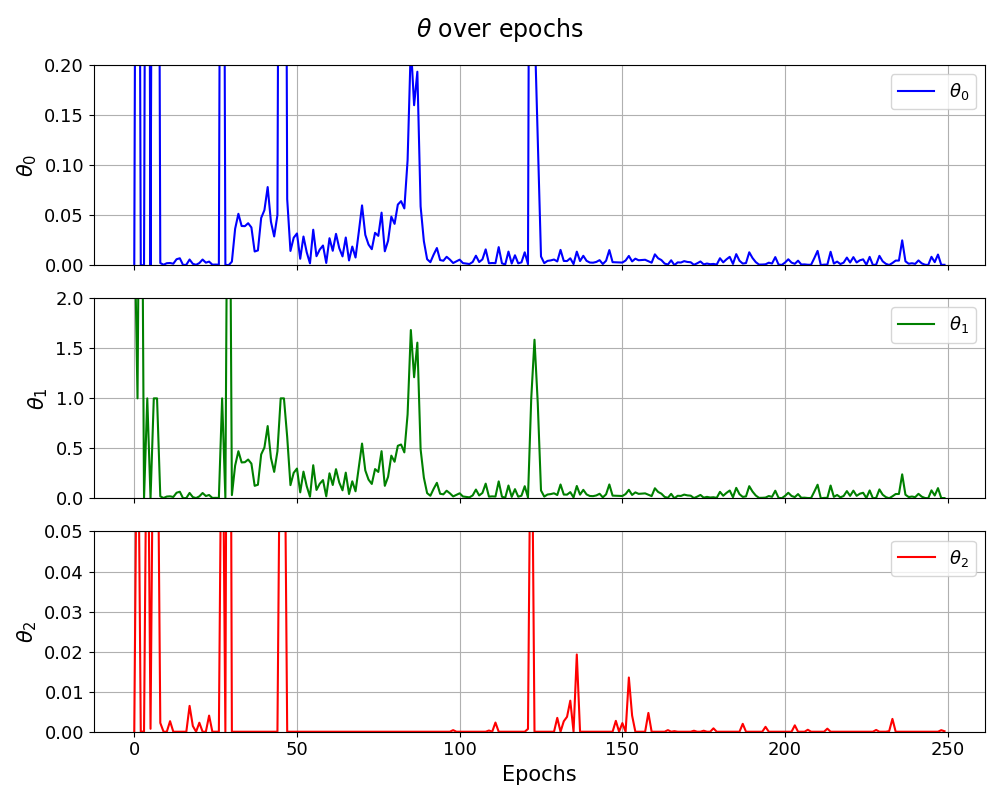}
        \caption{Traffic environment: $\theta_0$ (velocity), $\theta_1$ (acceleration), $\theta_2$ (headway). }
        \label{fig:theta_traffic}
    \end{subfigure}
    \hfill
    \begin{subfigure}[t]{0.48\textwidth}
        \centering
        \includegraphics[width=\linewidth]{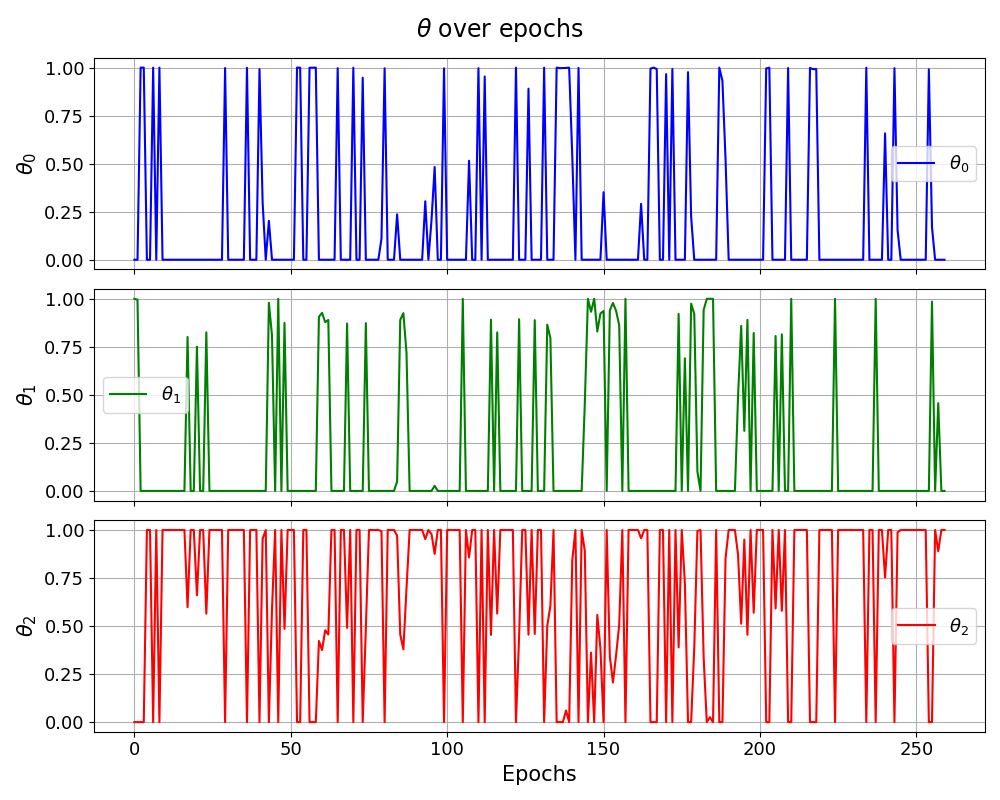}
        \caption{Pandemic environment: $\theta_0$ (infection summary), $\theta_1$ (early-stage), $\theta_2$ (smoothness).}
        \label{fig:theta_pandemic}
    \end{subfigure}
    \caption{Evolution of adversarial reward weights $\boldsymbol{\theta}$ over training epochs for different environments using the Linear Maxmin method. }
    \label{fig:theta_evolution}
\end{figure}

Figure~\ref{fig:theta_evolution} 
visualizes the evolution of each component of the linear worst-case reward weight vector $\boldsymbol{\theta}$ during training in the Traffic and Pandemic environments. We observe distinct behaviors in the dynamics of $\boldsymbol{\theta}$ across tasks.

In the Traffic environment (Figure~\ref{fig:theta_traffic}), we observe that the three $\theta$ parameters vary significantly in scale. Specifically, $\theta_1$ (acceleration) exhibits the largest magnitude, ranging from 0 to 2, while $\theta_2$ (headway) has the smallest scale, ranging from 0 to 0.05. This highlights how the linear max-min algorithm assigns different levels of penalization to each feature. Moreover, we also observe distinct phases in the dynamics of $\boldsymbol{\theta}$ over the course of training. In the early epochs ($<$50), all components, especially $\theta_1$ (acceleration) and $\theta_2$ (headway), exhibit high-frequency fluctuations.  At this point, the dual optimization problem is not yet well-conditioned, and the adversarial reward is highly sensitive to small changes in occupancy or feature values. As training progresses ($\sim$epochs 100–250), the parameters begin to stabilize. Most notably, $\theta_2$ (headway) converges close to zero and remains suppressed, indicating that the worst-case reward does not emphasize this feature. This may suggest that headway is less harmful under adversarial reweighting compared to others (velocity or accel) or is already well aligned with the reference policy $\pi_{\text{ref}}$. Meanwhile, $\theta_1$ (acceleration) consistently exhibits higher values and sharper spikes than the other components. This indicates that acceleration plays a dominant role in the adversarial reward, likely because policies that optimize for the proxy reward tend to exploit aggressive acceleration patterns that diverge significantly from the behavior of $\pi_{\text{ref}}$. In contrast, $\theta_0$ (velocity) remains small and relatively stable throughout training, suggesting that speed alone is not strongly penalized under adversarial interpretations. 

Overall, the observed pattern reflects the interpretability and sparsity benefits of the linear max-min formulation. The model is able to selectively emphasize features that are most vulnerable to reward hacking, while suppressing those that are either irrelevant or well-aligned. This structured behavior supports the practical value of using linearly parameterized worst-case rewards to improve policy robustness.

In the Pandemic environment (Figure~\ref{fig:theta_pandemic}), unlike the Traffic environment, where $\theta$ converged to a sparse and interpretable solution, we observe high variability across all components throughout training. In particular, we find the following pattern:

\begin{enumerate}
    \item \textbf{Persistent fluctuations.} All three components exhibit frequent oscillations over the course of 260 epochs. This ongoing instability suggests that the adversarial reward continually adapts as the policy changes, likely due to the environment's temporal sensitivity and complex dynamics.
    \item \textbf{$\theta_2$ (smoothness) remains active.} The smoothness-related component $\theta_2$ is frequently non-zero and relatively stable compared to the others. This indicates that the worst-case reward consistently emphasizes penalizing erratic or unstable responses in the infection trajectory — a behavior often neglected by naive proxy metrics.
    \item \textbf{$\theta_1$ (early-stage transitions) is highly volatile.} The component associated with early infection stage changes spikes intermittently. This suggests that early-stage mismanagement is a recurring vulnerability in the learned policy that the adversarial reward seeks to exploit.
    \item \textbf{$\theta_0$ (overall infection) activates intermittently.} Although $\theta_0$ sometimes spikes, it does not dominate the adversarial reward. This may indicate that the learned policy already accounts for infection magnitude reasonably well, or that smoothness and early-stage control offer more leverage for reward hacking under the proxy constraint.
\end{enumerate}

Overall, this pattern highlights that in more dynamic and temporally complex environments like Pandemic, the worst-case reward remains non-sparse and adapts to different policy weaknesses throughout training. In contrast to the Traffic environment, adversarial emphasis here is broader and more reactive.

\subsection{Additional Worst-Case Performance Results}
\label{sec:additional_worst_case}

\begin{table*}[h]
\centering
\caption{\small Evaluation results on Traffic, Pandemic, Glucose, and RLHF environments. All policies are trained using \textbf{only the proxy reward}. In Traffic, the proxy reward is based on \emph{vel}, \emph{accel}, \emph{headway} (1, 1, 0.1), while the true reward uses \emph{commute}, \emph{accel}, \emph{headway} (1, 1, 0.1). In Pandemic, the proxy reward includes \emph{infection}, \emph{lower stage}, \emph{smooth changes} (10, 0.1, 0.01), while the true reward additionally includes \emph{political} with weight 10 after \emph{infection}. In Glucose, the proxy uses \emph{expected patient cost}, and the true reward uses \emph{magni\_bg}. In RLHF, the proxy uses a 70M LLM, and the true reward uses a 8B LLM. We report $\boldsymbol{\theta}$ in the same order as feature weights. \textbf{Occ} denotes total occupancy over state-action pairs unseen by $\pi_{\text{ref}}$, where discriminator outputs infinity.
}
\label{tab:traffic_pandemic_results_normalized_appendix}
\resizebox{\textwidth}{!}{%
\begin{tabular}{|c|cccccc|}
\hline
\textbf{Env} & \multicolumn{6}{c|}{\textbf{Traffic}} \\
\hline
\textbf{Method} & \textbf{True} & \textbf{Proxy} & \textbf{Worst} & \textbf{Linear Worst ($\boldsymbol{\theta}$)} & \multicolumn{1}{c|}{\textbf{Linear Worst* ($\boldsymbol{\theta}$)}} & \textbf{Occ $\downarrow$} \\
\hline
\texttt{ORPO} & 16.91$\pm$0.12 & 3.41$\pm$0.13 & -1.96e+04$\pm$0.02e+04 & -0.69$\pm$0.01 (0.71, 0.21, 0.69) & \multicolumn{1}{c|}{-0.83$\pm$0.02 (0.63, 0.12, 0.97)} & 3.82e-04 $\pm$0.13e-04\\
\texttt{ORPO*} & 10.26$\pm$0.09 & 1.35$\pm$0.09 & -1.35e+04$\pm$0.02e+04 & -0.44$\pm$0.02 (0.46, 0.18, 0.86) & \multicolumn{1}{c|}{-0.45$\pm$0.01 (0.58, 0.06, 0.81)} & 1.84e-04$\pm$0.07e-04 \\
\texttt{Max-Min} & 12.70$\pm$0.06 & 3.63$\pm$0.09 & -268.31$\pm$4.14   & -0.06$\pm$0.01 (0.01, 0.02, 0.96) & \multicolumn{1}{c|}{-0.06$\pm$0.01 (0.001, 0.02, 0.99)} & 0.00$\pm$0.00 \\
\texttt{Linear Max-Min} & 16.46$\pm$0.10 & 2.40$\pm$0.11 & -1.19e+04$\pm$0.01e+04 & 0.20$\pm$0.01 (0.64, 0.07, 0.76) & \multicolumn{1}{c|}{-0.12$\pm$0.01 (0.91, 0.01, 0.67)} & 0.00$\pm$0.00\\
\hline
\textbf{Env} & \multicolumn{6}{c|}{\textbf{Pandemic}} \\
\hline
\textbf{Method} & \textbf{True} & \textbf{Proxy} & \textbf{Worst} & \textbf{Linear Worst ($\boldsymbol{\theta}$)} & \multicolumn{2}{c|}{\textbf{Linear Worst* ($\boldsymbol{\theta}$)}} \\
\hline
\texttt{ORPO} & -1.04$\pm$0.21 & 1.75$\pm$0.19  & -5.31e+06$\pm$0.01e+06 & -2.41$\pm$0.02 (0.23, 0.95, 0.17) & \multicolumn{2}{c|}{-2.65$\pm$0.02 (0.02, 0.95, 0.92, 0.08)} \\
\texttt{ORPO*} &  1.18$\pm$0.19 & 1.18$\pm$0.19 & -4.46e+06$\pm$0.03e+06 & -1.36$\pm$0.01 (0.25, 0.97, 0.13) & \multicolumn{2}{c|}{-1.36$\pm$0.01 (0.25, 0, 0.97, 0.13)} \\
\texttt{Max-Min} & 1.25$\pm$0.18 &  1.25$\pm$0.18 & -63.29$\pm$3.35 & -1.11$\pm$0.01 (0.14, 0.99, 0.01) & \multicolumn{2}{c|}{-1.11$\pm$0.01 (0.14, 0, 0.99, 0.01)} \\
\texttt{Linear Max-Min} & 3.65$\pm$0.11 & 7.60$\pm$0.13 & -6.82e+05$\pm$0.01e+05 & 0.65$\pm$0.01 (0.001, 0.23, 0.02) & \multicolumn{2}{c|}{-0.17$\pm$0.02 (0.01, 0.97, 0.22, 0.09)} \\
\hline
\textbf{Env} & \multicolumn{3}{c|}{\textbf{Glucose}} & \multicolumn{3}{c|}{\textbf{RLHF}} \\
\hline
\textbf{Method} & \textbf{True($\times 10^3$)} & \textbf{Proxy} & \multicolumn{1}{c|}{\textbf{Worst}} & \textbf{True} & \textbf{Proxy} & \textbf{Worst} \\
\hline
\texttt{ORPO} & 6.0$\pm$0.1 & 100.48$\pm$0.54 & \multicolumn{1}{c|}{-27.54$\pm$0.32} & 8.30 $\pm$ 1.07 & 0.63$\pm$0.21 &  -1.84$\pm$0.03   \\
\texttt{ORPO*} & 6.3$\pm$0.2 & 116.36$\pm$0.56 & \multicolumn{1}{c|}{-8.79$\pm$0.27} & N/A & N/A & N/A \\
\texttt{Max-Min} & 6.3$\pm$0.1 & 102.66$\pm$0.58 & \multicolumn{1}{c|}{-1.71$\pm$0.25} & 5.38 $\pm$ 0.92 & 0.84$\pm$0.11 &  -0.10$\pm$0.01\\
\hline
\end{tabular}
}

\end{table*}

\paragraph{Adversarial Weight Analysis.} In Table~\ref{tab:traffic_pandemic_results_normalized_appendix}, we also report the adversarial weight vectors $\boldsymbol{\theta}$ for each policy. These weights reveal which features are most vulnerable to proxy exploitation under the learned policy and can be used to diagnose and revise the proxy reward function, thereby improving robustness. This highlights the interpretability benefits of our framework. Moreover, several patterns emerge from the results. In the Traffic environment, first, we observe a clear dominance of the headway feature, with all methods assigning it the highest weight. This suggests that headway is the most critical component exposed to reward hacking under correlation constraints. Second, the acceleration feature is consistently downweighted across all methods. This indicates that acceleration may be less prone to exploitation or already well aligned with the reference policy. Third, the velocity feature is moderately emphasized by \texttt{Linear Max-Min} and \texttt{ORPO} (e.g., $0.64$ and $0.71$), while \texttt{Max-Min} nearly suppresses it ($0.01$). This contrast suggests that \texttt{Linear Max-Min} anticipates some vulnerability from velocity deviations, while \texttt{Max-Min} focuses almost entirely on headway. In the Pandemic environment, first, both \texttt{ORPO*} and \texttt{Max-Min} assign zero weight to the political feature. This occurs because the expected feature value under their policies is exactly zero, making the correlation constraint inactive for that dimension. Interestingly, this feature plays a significant role in the adversarial rewards for both \texttt{ORPO} and \texttt{Linear Max-Min}, with their corresponding $\boldsymbol{\theta}$ assigning non-negligible weight to it (e.g., $0.95$ and $0.97$ respectively). This suggests that these policies expose themselves to vulnerability in feature dimensions that are entirely ignored by \texttt{Max-Min} and \texttt{ORPO*}. Second, the lower stage feature consistently receives the highest weight across all methods, indicating it is the most sensitive component under proxy misalignment. 

\begin{table*}[h]
\centering
\caption{Evaluation results on Tomato environments. All policies are trained using \textbf{only the proxy reward}. In Tomato, the proxy includes \emph{number of watered tomatoes} plus a bonus at a specific state (sprinkler), while the true reward only measures \emph{watered tomatoes}. \textbf{Occ} in the Tomato environment denotes total occupancy over state-action pairs unseen by $\pi_{\text{ref}}$, based on 1000 sampled trajectories. \textbf{Worst} refers to the expected worst-case reward computed while excluding those unseen state-action pairs. \textbf{Worst*} denotes the actual expected worst-case reward, while $R_{\text{min}}$ represents the minimum possible reward of any state-action pair. 
All rewards are normalized according to the reference policy $\pi_{\text{ref}}$.} 
\label{tab:glucose_tomato_results_normalized}
\setlength{\tabcolsep}{4pt} 
\resizebox{0.8\textwidth}{!}{
\begin{tabular}{|c|ccccc|}
\hline
\textbf{Method} 
& \multicolumn{5}{c|}{\textbf{Tomato}}  \\
\cline{2-6}
& \textbf{True} & \textbf{Proxy} & \textbf{Worst} & \textbf{Occ $\downarrow$} 
& \textbf{Worst*} \\
\hline
\texttt{ORPO} 
& 6.28$\pm$0.22 & 6.83$\pm$0.28& -1.51$\pm$0.09 & 2.50e-04$\pm$0.63e-04& -1.51+$R_{\text{min}}\cdot$2.50e-04
\\
\texttt{ORPO*} 
& 4.00$\pm$0.18 & 3.98$\pm$0.23 & -1.09$\pm$0.10 & 3.09e-05$\pm$0.59e-05&   -1.09+$R_{\text{min}}\cdot$3.09e-05
 \\
\texttt{Max-Min} 
& 4.56$\pm$0.20 & 4.68$\pm$0.25 & -1.37$\pm$0.06 & 1.01e-05$\pm$0.43e-05 &    -1.37+$R_{\text{min}}\cdot$1.01e-05
 \\
\hline
\end{tabular}
}
\end{table*}

\paragraph{Worst-Case Performance in Tomato Environment.}
\label{sec:app_worst_glu_tom}

Table~\ref{tab:glucose_tomato_results_normalized} reports worst-case performance results for the Tomato environment. We omit the \texttt{Linear Max-Min} policy from these experiments for the following reasons. In the Tomato environment, the reward structure is difficult to express in a clean feature-based form suitable for linear modeling. Therefore, we report only the results for the \texttt{Max-Min} policy alongside the baselines.

The results for the Tomato environments exhibit trends similar to those observed in other environments (Section~\ref{sec:exp}). In particular, \texttt{ORPO*} appears to outperform others in the Tomato environment in terms of worst-case performance. Recall that these results are reported under  \textbf{Worst}, the expected worst-case reward 
restricted to state-action pairs observed under $\pi_{\text{ref}}$. Since the Tomato environment is discrete, we can explicitly identify which state-action pairs are unseen through sampling, enabling clearer interpretation of their physical meaning as well as the evaluation of the actual worst-case performance $\textbf{Worst*}$. The latter corresponds to the \textbf{Worst} value plus the product of the occupancy in unseen regions (\textbf{Occ}) and $R_{\text{min}}$. Because \texttt{Max-Min} exhibits the lowest occupancy among all methods, it demonstrates greater robustness under varying assumptions about $R_{\text{min}}$.


Nevertheless, \texttt{ORPO*} still shows marked improvement over \texttt{ORPO}, both in worst-case return and in reducing occupancy over unseen state-action pairs. As previously noted, in the Glucose environment, the discriminator fails to detect any state-action pairs missed by the reference policy. This reinforces our earlier concern that the current discriminator training procedures may have limited capacity to identify rare or out-of-distribution events.

\paragraph{Impact of Correlation Parameter Selection on Robustness.}
\label{sec:different_r_training}

In this section, we present additional experiment results to examine how the proxy-true reward correlation parameter $r$ used during training affects the robustness under varying evaluation $r$.

\begin{table}[h]
    \centering
    \caption{Evaluation of robustness in the Tomato environment across different training-time correlation levels $r$. \textbf{Occ} denotes total occupancy over state-action pairs unseen by $\pi_{\text{ref}}$, based on 1000 sampled trajectories. \textbf{Worst} refers to the expected worst-case reward computed while excluding those unseen state-action pairs.}
    \label{tab:tomato_r}
    \small
    \setlength{\tabcolsep}{6pt}
    \renewcommand{\arraystretch}{1.2} 
    \begin{tabular}{lccccc}
        \toprule
        \textbf{$r$} & \textbf{Occ} & \textbf{Worst ($r=0.1$)} & \textbf{Worst ($r=0.4$)} & \textbf{Worst ($r=0.7$)} & \textbf{Worst ($r=0.9$)}  \\
        \midrule
        0.1 & 1.36e-03$\pm$0.12e-03 & -1.34$\pm$0.05 & -1.12$\pm$0.04 & -0.74$\pm$0.03 & -0.27$\pm$0.02 \\
        0.4 & 1.01e-05$\pm$0.43e-05 & -1.66$\pm$0.07 & -1.37$\pm$0.06 & -0.70$\pm$0.03 & -0.05$\pm$0.02 \\
        0.7  & 1.05e-02$\pm$0.10e-02 & -2.10$\pm$0.08 & -1.82$\pm$0.07 & -1.33$\pm$0.06 & -0.66$\pm$0.04 \\
        0.9   & 1.29e-05$\pm$0.41e-05 & -9.10$\pm$0.20 & -8.92$\pm$0.18 & -7.60$\pm$0.15 & -5.49$\pm$0.12     \\
        \bottomrule
    \end{tabular}
    
\end{table}

\begin{table}[h]
    \centering
    \caption{Evaluation of robustness in the Traffic environment across different training-time correlation levels $r$. \textbf{Occ} denotes total occupancy over state-action pairs unseen by $\pi_{\text{ref}}$, based on 200 sampled trajectories. \textbf{Worst} refers to the expected worst-case reward computed while excluding those unseen state-action pairs.}
    \label{tab:traffic_r}
    \small
    \setlength{\tabcolsep}{6pt}
    \renewcommand{\arraystretch}{1.2} 
    \resizebox{\textwidth}{!}{
    \begin{tabular}{lccccc}
        \toprule
        \textbf{$r$} & \textbf{Occ} & \textbf{Worst ($r=0.1$)} & \textbf{Worst ($r=0.3$)} & \textbf{Worst ($r=0.5$)} & \textbf{Worst ($r=0.9$)}  \\
        \midrule
        0.3 & 0.00$\pm$0.00 & -2794.63$\pm$42.10 & -268.31$\pm$4.14 & -82.07$\pm$2.04 & -22.03$\pm$0.88 \\
        0.5 &  0.00$\pm$0.00 &   -7.71e+04$\pm$1.20e+03
            & -1.95e+04$\pm$3.10e+02
            & -6168.40$\pm$124.75
            & -1350.22$\pm$27.95  \\
        0.9   &  9.66e-05$\pm$1.84e-05
            & -3.01e+05$\pm$6.05e+03
            & -9.51e+04$\pm$1.89e+03
            & -2.73e+04$\pm$5.45e+02
            & -9.33e+03$\pm$ 1.88e+02 \\
        \bottomrule
    \end{tabular}
    }
\end{table}

Table~\ref{tab:tomato_r} and Table~\ref{tab:traffic_r} report the robustness evaluation results in the \textbf{Tomato} and \textbf{Traffic} environments under different training-time values of the correlation parameter $r$. Several consistent patterns emerge across both environments.

First, for any fixed policy (i.e., fixed training $r$), we observe that the expected worst-case return monotonically increases as the evaluation $r$ increases. This aligns with intuition: higher correlation levels correspond to smaller uncertainty sets over rewards, meaning the worst-case reward functions are less adversarial. In contrast, low $r$ values expand the reward uncertainty set, allowing more pathological or implausible reward functions, and thus lead to more pessimistic evaluations. However, this does not hold universally. For a fixed policy, the expected worst-case return in Equation~\ref{eq:maxmin_objective_appendix} is monotone in $r$ only when the policy has a positive expected proxy return (which is the case here). If the policy’s expected proxy return is negative, this monotonicity no longer holds.

Second, we find that training with a moderate correlation level, particularly around $r = 0.3$ to $0.4$, yields better robustness across a wide range of evaluation $r$ values. In contrast, training with overly small (e.g., $r = 0.1$ for Tomato) or large (e.g., $r = 0.9$ for Tomato and Traffic) correlation levels degrades robustness. A small $r$ leads to overly conservative training, anticipating extreme forms of reward hacking and thus hurting general performance. On the other hand, a high $r$ overly trusts the proxy reward and fails to hedge against potential deviations, resulting in poor worst-case behavior under reward misspecification. This trade-off highlights that intermediate values of $r$ may strike a better balance between conservativeness and optimism, enabling the policy to generalize to a broader and more plausible spectrum of reward functions. Therefore, in the absence of prior knowledge of $r$, starting with a moderate $r$ is a practical heuristic.

\subsection{Additional Results for Robustness Across Correlation Levels}

As discussed previously (Appendix~\ref{sec:app_worst_glu_tom}), we do not include linear worst-case evaluation for the Glucose and Tomato environments. Consequently, we cannot perform a uniform search over $\boldsymbol{\theta}$ as we do for the Traffic and Pandemic environments (Appendix~\ref{sec:ex_setup}). As noted in Appendix~\ref{sec:limitation}, it is generally difficult, and often infeasible, to sample a full reward function over all state-action pairs, particularly in high-dimensional or continuous environments, such as the Glucose environment. To approximate this process for the Tomato environment, which is a discrete environment, we instead sample 1000 trajectories using the reference policy $\pi_{\text{ref}}$. We then restrict the search to the visited state-action pairs. For each such pair, we perturb the original proxy reward by adding Gaussian noise with zero mean and variance sampled uniformly from the interval $[0.001, 1]$. We then check whether the resulting perturbed reward $\tilde{R}$ satisfies the constraint $\tilde{R} \in \mathcal{R}_{\text{corr}}$. As in previous evaluations, we do not explicitly constrain $M$ and $V$. We sample 20 perturbed reward functions and then use them to evaluate each policy. \textbf{Note:} Some policies, such as \texttt{ORPO}, may visit state-action pairs that are not included in the sampled set from $\pi_{\text{ref}}$. For these unseen state-action pairs, the proxy-true correlation constraint does not apply, as no corresponding reference data is available. In such cases, we default to using the original proxy reward to evaluate those portions of the trajectory. 

We emphasize that this procedure is neither optimal nor efficient, and is employed solely for evaluation purposes in the Tomato environment. Designing more principled and scalable methods for reward sampling under correlation constraints remains an important direction for future work.

\begin{figure}[h]
    \centering
    \includegraphics[width=1\linewidth,height=6cm]{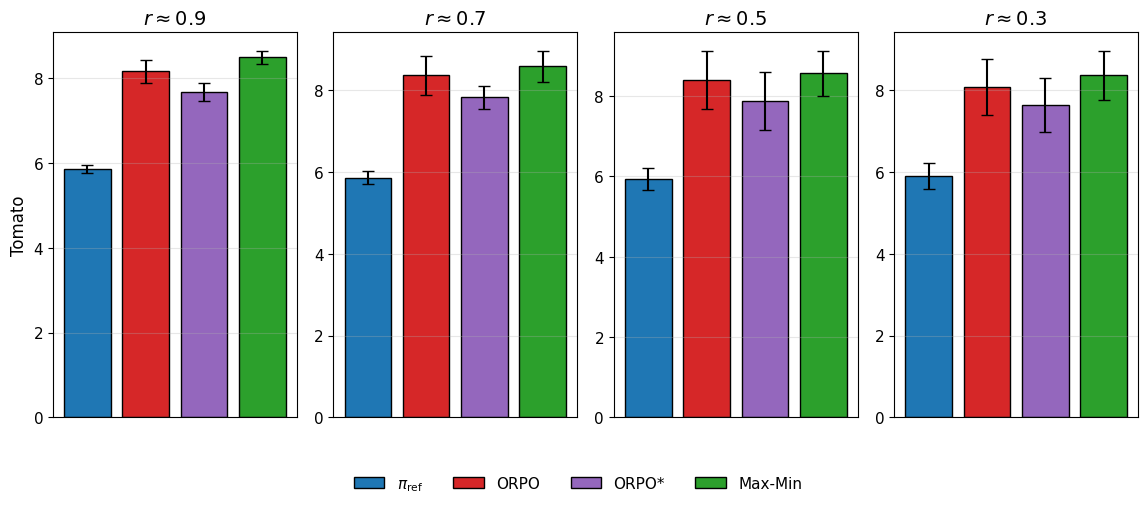}
    \caption{
    Mean reward and standard deviation under sampled reward functions at different proxy-true reward correlation levels $r$ for the Tomato environment. Our methods (\texttt{Max-Min}) yield higher average performance across all choices of $r$. 
    }
    \label{fig:average_reward_r_tomato}
\end{figure}
Figure~\ref{fig:average_reward_r_tomato} presents the average reward and standard deviation across varying correlation levels $r$ for the Tomato environment. As expected, the reference policy $\pi_{\text{ref}}$ (blue) consistently underperforms across all values of $r$, though it exhibits the lowest variance—indicating stable yet suboptimal behavior. Interestingly, \texttt{ORPO*} (purple) performs worse than \texttt{ORPO} (red) throughout, suggesting that improving the accuracy of occupancy measure estimation does not necessarily enhance robustness in this environment. In contrast, our \texttt{Max-Min} method (green) achieves the highest average reward across all correlation levels, highlighting its better robustness under reward uncertainty.

\subsection{Additonal Unnormalized Results}

To ensure a fair comparison with prior work~\citep{laidlaw2024correlated}, which reports results in the unnormalized reward scale, we also include the raw (unnormalized) expected proxy and true rewards. However, for worst-case reward metrics, it is nontrivial to reverse the normalization transformation, as our formulation explicitly constrains the reward to have zero mean and unit variance under the reference policy. Therefore, we omit worst-case results in the unnormalized setting.

\begin{table*}[h]
\centering
\small
\setlength{\tabcolsep}{4pt}
\caption{Unnormalized performance comparison across all environments.}
\label{tab:unnormalized_all}
\resizebox{\textwidth}{!}{
\begin{tabular}{|l|cc|cc|cc|cc|cc|}
\hline
\textbf{Method} & \multicolumn{2}{c|}{\textbf{Traffic}} & \multicolumn{2}{c|}{\textbf{Pandemic}} & \multicolumn{2}{c|}{\textbf{Glucose}} & \multicolumn{2}{c|}{\textbf{Tomato}} & \multicolumn{2}{c|}{\textbf{RLHF}}\\
                & \textbf{True} & \textbf{Proxy} 
                & \textbf{True} & \textbf{Proxy} 
                & \textbf{True} ($\times 10^3$) & \textbf{Proxy} 
                & \textbf{True} & \textbf{Proxy} & \textbf{True} & \textbf{Proxy} \\
\hline
$\pi_{\text{ref}}$             & -1004.33$\pm$0.00& 1474.30$\pm$0.00 & -12.01$\pm$0.00 & -12.01$\pm$0.00 & -79.7$\pm$0.0 & -117.75$\pm$0.00& 5.96$\pm$0.00 & 6.37$\pm$0.00 & 15.97$\pm$0.00  & -0.29$\pm$0.00 \\
\texttt{ORPO}                   & -666.13$\pm$2.34  & 1542.57$\pm$2.62 & -12.84$\pm$0.17 & -10.61$\pm$0.15 & -49.7$\pm$0.6 & -67.51$\pm$0.27  & 9.10$\pm$0.11 & 9.10$\pm$0.11 & 16.51$\pm$0.07 & -0.23$\pm$0.02 \\
\texttt{ORPO*}   & -799.02$\pm$1.77  & 1501.29$\pm$1.83 & -11.06$\pm$0.15 & -11.06$\pm$0.15 & -48.3$\pm$1.0 & -59.57$\pm$0.28  & 7.96$\pm$0.09 & 7.96$\pm$0.09 & N/A & N/A  \\
\texttt{Max-Min}                 & -750.32$\pm$1.28  & 1546.86$\pm$1.82 & -11.01$\pm$0.14 & -11.01$\pm$0.14 & -48.2$\pm$0.5 & -66.42$\pm$0.29  & 8.24$\pm$0.10 & 8.24$\pm$0.10 & 16.32$\pm$0.06 & -0.21$\pm$0.01 \\
\texttt{Linear Max-Min}          & -675.12$\pm$2.04  & 1522.34$\pm$2.19 & -9.90$\pm$0.09  & -5.93$\pm$0.10  & N/A       & N/A     & N/A  & N/A  & N/A & N/A\\
\texttt{Ensemble} & N/A & N/A& N/A& N/A& N/A& N/A& N/A& N/A & 16.12$\pm$0.08&  -0.17$\pm$0.01\\
\hline
\end{tabular}
}
\end{table*}

Table~\ref{tab:unnormalized_all} presents the unnormalized performance results across all environments. We observe that both our \texttt{Max-Min} and \texttt{Linear Max-Min} policies achieve comparable performance to \texttt{ORPO} on most tasks. Interestingly, the \texttt{ORPO*} variant (with a fully trained discriminator) outperforms the original \texttt{ORPO} in some environments (e.g., Pandemic and Glucose), but performs worse in others, such as Traffic and Tomato. While our earlier analysis (Section~\ref{sec:exp}) shows that better discriminator training generally improves worst-case robustness, these results suggest that accurate discriminator estimation does not always translate to improved performance for every specific reward function. Understanding the nuanced effects of discriminator optimization on various reward metrics is beyond the scope of this paper and remains an important direction for future research.

\subsection{Results for all $r$}
\label{sec:all_r}

Here, we report the results of a uniform grid search over $r \in [0.1, 0.9]$ for our Max-Min algorithm across all training-time correlation levels $r$ on the \textbf{Traffic}, \textbf{Tomato}, \textbf{Pandemic} and \textbf{Glucose} environments in Tables~\ref{tab:r_tomato},~\ref{tab:r_traffic},~\ref{tab:r_pandemic} and~\ref{tab:r_glucose}. Each table presents the mean and standard deviation of the expected true reward, expected worst-case reward, and occupancy measure achieved by the learned policy over five random seeds. Here, the “true reward’’ refers to the original (unnormalized) true reward, while the “worst reward’’ refers to the normalized reward. We also include the corresponding ORPO results for a fair comparison.

Note that the original ORPO algorithm performs a grid search over $\lambda$, where $\lambda = \sigma_{R_{\text{proxy}}} \sqrt{1 - r^2}$ and $\sigma_{R_{\text{proxy}}}$ is the standard deviation of $R_{\text{proxy}}$ under $\pi_{\text{ref}}$, which is generally unknown and must be estimated. Thus, ORPO effectively searches over $\lambda$, whereas our method searches directly over $r$. For a fair comparison, we estimate $\sigma_{R_{\text{proxy}}}$ and use it to map each value of $r$ in our grid to a corresponding $\lambda$ for ORPO. However, across all environments we find that the resulting $\lambda$ values occupy a much narrower scale than $r$: while $r$ spans the full range from $0.1$ to $0.9$, the induced $\lambda$ values are confined to a small interval (e.g., approximately $0.021$–$0.05$ in \textbf{Tomato} and $0.035$–$0.08$ in \textbf{Pandemic}). As a consequence, the ORPO policies change only marginally across the mapped $r$ values, and their expected true and worst-case returns appear similar in the tables. In practice, ORPO would need to search over a broader range of $\lambda$ values. By contrast, the expected true and worst-case returns for our Max-Min method vary meaningfully across the full span of $r$. This highlights that, in practice, our method and ORPO naturally operate on different hyperparameter scales when tuning their respective robustness parameters.

\begin{table}[ht]
\centering
\caption{Evaluation in the Tomato environment across training-time correlation levels $r$ for ORPO and Max–Min. $\lambda$ denotes ORPO’s coefficient, with $\lambda=\sigma_{R_{\text{proxy}}}\sqrt{1-r^2}$. We use $\sigma_{R_{\text{proxy}}}=0.05$ in this environment, consistent with the ORPO setting. \textbf{Occ} denotes the total occupancy over state-action pairs unseen by $\pi_{\text{ref}}$. \textbf{Worst} refers to the expected worst-case reward computed under the training $r$ while excluding those unseen state-action pairs.} 
\label{tab:r_tomato}
\begin{threeparttable}
\resizebox{\textwidth}{!}{
\begin{tabular}{c r r r r r r r}
\toprule
\multirow{2}{*}{$r$} & \multicolumn{4}{c}{\textbf{ORPO}} & \multicolumn{3}{c}{\textbf{Max–Min}} \\
\cmidrule(lr){2-5}\cmidrule(l){6-8}
& \multicolumn{1}{c}{$\lambda$} & \multicolumn{1}{c}{True} &\multicolumn{1}{c}{Worst} & \multicolumn{1}{c}{Occ}
& \multicolumn{1}{c}{True}& \multicolumn{1}{c}{Worst} & \multicolumn{1}{c}{Occ} \\
\midrule
0.1 & 0.050 & 0.46$\pm$0.14 & -6.08$\pm$0.07 & 8.22e-03$\pm$0.27e-03 & 0.13$\pm$0.15 &-1.34$\pm$0.05 &  1.36e-03$\pm$0.12e-03\\
0.2 & 0.049& 0.66$\pm$0.03& -6.88$\pm$0.08& 2.8e-03$\pm$0.14e-03&7.79$\pm$0.10 &-0.72$\pm$0.05 &2.18e-03$\pm$0.25e-03 \\
0.3 & 0.048& 0.70$\pm$0.03& -7.55$\pm$0.04& 1.87e-03$\pm$0.16e-03& 7.68$\pm$0.11 &-0.96$\pm$0.04 &1.85e-03$\pm$0.16e-03 \\
0.4 & 0.046 & 0.16$\pm$0.02 &-9.20$\pm$0.06 & 1.51e-03$\pm$0.13e-03& 8.24$\pm$0.10 &-1.37$\pm$0.06 & 1.01e-05$\pm$0.43e-05 \\
0.5 & 0.043 & 0.52$\pm$0.08 & -6.53$\pm$0.05 & 0.028$\pm$0.0010 &7.38$\pm$0.12 &-1.21$\pm$0.04 &2.15e-03$\pm$0.37e-03 \\
0.6 & 0.040& 0.51$\pm$0.07 & -7.44$\pm$0.06 & 0.028$\pm$0.0011 &6.65$\pm$0.17 &-1.22$\pm$0.06 &1.18e-03$\pm$0.32e-03 \\
0.7 & 0.035 & 0.84$\pm$0.12 &-5.85$\pm$0.07  &  0.027$\pm$0.0010 & 0.16$\pm$0.08 &-1.33$\pm$0.06 & 1.05e-02$\pm$0.10e-02 \\
0.8 &0.030 &0.16$\pm$0.03 &-7.01$\pm$0.06 &2.88e-03$\pm$0.19e-03 &1.02$\pm$0.13 &-2.86$\pm$0.04 &5.77e-04$\pm$0.35e-04 \\
0.9 & 0.021 & 0.11$\pm$0.09 & -7.30$\pm$0.08& 0.032$\pm$0.0009  & 0.37$\pm$0.13 &-5.49$\pm$0.12 & 1.29e-05$\pm$0.41e-05\\
\bottomrule
\end{tabular}
}
\end{threeparttable}
\end{table}

\begin{table}[ht]
\centering
\caption{Evaluation in the Traffic environment across training-time correlation levels $r$ for ORPO and Max–Min. $\lambda$ denotes ORPO’s coefficient, with $\lambda=\sigma_{R_{\text{proxy}}}\sqrt{1-r^2}$. We use $\sigma_{R_{\text{proxy}}}=2e-4$ in this environment, consistent with the ORPO setting. \textbf{Occ} denotes the total occupancy over state-action pairs unseen by $\pi_{\text{ref}}$. \textbf{Worst} refers to the expected worst-case reward computed under the training $r$ while excluding those unseen state-action pairs.} 
\label{tab:r_traffic}
\begin{threeparttable}
\resizebox{\textwidth}{!}{
\begin{tabular}{c r r r r r r r}
\toprule
\multirow{2}{*}{$r$} & \multicolumn{4}{c}{\textbf{ORPO}} & \multicolumn{3}{c}{\textbf{Max–Min}} \\
\cmidrule(lr){2-5}\cmidrule(l){6-8}
& \multicolumn{1}{c}{$\lambda$} & \multicolumn{1}{c}{True} &\multicolumn{1}{c}{Worst} & \multicolumn{1}{c}{Occ} & \multicolumn{1}{c}{True}
& \multicolumn{1}{c}{Worst} & \multicolumn{1}{c}{Occ} \\
\midrule
0.1 & 1.99e-4& -1063.75$\pm$1.08& 1.20e+03$\pm$0.02e+03& 5.53e-03$\pm$0.03e-03 & -1428.21$\pm$5.36 & -3.43e+04$\pm$0.66e+04 &2.29e-04$\pm$0.36e-04 \\
0.2 & 1.95e-4&-775.79$\pm$1.15 & 4.62e+04$\pm$0.04e+04 & 8.71e-04$\pm$0.03e-04 &-1312.67$\pm$9.14 & -2.83e+04$\pm$0.71e+04 &1.49e-04$\pm$0.42e-04\\
0.3 & 1.91e-4 & -689.31$\pm$1.12 &-5.13e+04$\pm$0.04e+04 & 3.98e-04$\pm$0.02e-04 & -750.32$\pm$1.28 &-268.31$\pm$4.14 & 0.00$\pm$0.00 \\
0.4 & 1.83e-4& -1109.41$\pm$1.59 & -1.84e+03$\pm$0.04e+03  & 4.94e-03$\pm$0.05e-03 &-732.86$\pm$1.19 &-314.14$\pm$5.37 &0.00$\pm$0.00 \\
0.5 & 1.73e-4 & -673.45$\pm$1.36&-1.51e+04$\pm$0.03e+04 & 6.85e-04$\pm$0.01e-04 & -1034.42$\pm$2.32 &-6168.40$\pm$124.75 & 0.00$\pm$0.00 \\
0.6 & 1.60e-4& -768.04$\pm$1.55& -4.65e+04$\pm$0.04e+04 & 5.43e-03$\pm$0.02e-03&-1322.74$\pm$3.92 &-6.73e+03$\pm$0.28e+03 &4.50e-05$\pm$1.27e-05 \\
0.7 & 1.43e-4&-816.62$\pm$1.03 & -4.93e+04$\pm$0.04e+04 &  7.60e-03$\pm$0.03e-03 &-1398.63$\pm$2.73 &-3.82e+04$\pm$1.74e+04 & 2.38e-04$\pm$0.35e-04 \\
0.8 & 1.20e-4&-782.01$\pm$1.08 &-8.74e+04$\pm$0.09e+04 & 5.97e-03$\pm$0.02e-03&-1359.23$\pm$2.08 &-4.94e+04$\pm$1.45e+04 & 3.48e-04$\pm$0.25e-04 \\
0.9 & 8.72e-5 & -669.89$\pm$1.01 &-1.39e+04$\pm$0.04e+04 & 4.41e-03$\pm$0.03e-03 & -1337.41$\pm$2.45 &-9.33e+03$\pm$1.88e+02  & 9.66e-05$\pm$1.84e-05\\
\bottomrule
\end{tabular}
}
\end{threeparttable}
\end{table}

\begin{table}[ht]
\centering
\caption{Evaluation in the Pandemic environment across training-time correlation levels $r$ for ORPO and Max–Min. $\lambda$ denotes ORPO’s coefficient, with $\lambda=\sigma_{R_{\text{proxy}}}\sqrt{1-r^2}$. We use $\sigma_{R_{\text{proxy}}}=0.08$ in this environment, consistent with the ORPO setting.  \textbf{Worst} refers to the expected worst-case reward computed under the training $r$ while excluding those unseen state-action pairs.} 
\label{tab:r_pandemic}
\begin{threeparttable}
\resizebox{\textwidth}{!}{
\begin{tabular}{c r r r r r}
\toprule
\multirow{2}{*}{$r$} & \multicolumn{3}{c}{\textbf{ORPO}} & \multicolumn{2}{c}{\textbf{Max–Min}} \\
\cmidrule(lr){2-4}\cmidrule(l){5-6}
& \multicolumn{1}{c}{$\lambda$} & \multicolumn{1}{c}{True} &\multicolumn{1}{c}{Worst}  & \multicolumn{1}{c}{True}
& \multicolumn{1}{c}{Worst} \\
\midrule
0.1 &0.080 & -12.22$\pm$0.14 & -7.29e+06$\pm$0.05e+06 & -18.48$\pm$0.37 & -7.44e+04$\pm$0.19e+04  \\
0.2 & 0.078& -11.77$\pm$0.11 &-1.70e+07$\pm$0.10e+07  & -16.31$\pm$0.49 & -6.03e+04$\pm$0.12e+04  \\
0.3 & 0.076 & -12.49$\pm$0.20 & -2.49e+06$\pm$0.05e+06 & -19.27$\pm$0.38 & -7.04e+04$\pm$0.05e+04  \\
0.4 & 0.073&  -12.17$\pm$0.17 &-1.25e+06$\pm$0.05e+06 &-19.21$\pm$0.15 & -2.03e+03$\pm$0.16e+03  \\
0.5 & 0.069 & -12.26$\pm$0.24 &-1.07e+06$\pm$0.04e+06& -13.75$\pm$0.14 &-2.58e+03$\pm$0.15e+03  \\
0.6 & 0.064& -12.08$\pm$0.28 &-2.65e+06$\pm$0.09e+06& -13.15$\pm$0.24 &-104.00$\pm$0.22\\
0.7 & 0.057&  -11.45$\pm$0.23& -2.92e+05$\pm$0.10e+05 &-11.01$\pm$0.14 &-63.29$\pm$3.35 \\
0.8 &0.048 & -12.22$\pm$0.14 & -9.37e+05$\pm$0.06e+05 &-11.20$\pm$0.22 &-123.65$\pm$0.15 \\
0.9 & 0.035 &-12.02$\pm$0.20 & -3.29e+04$\pm$0.09e+04 & -11.05$\pm$0.13 & -77.20$\pm$2.08  \\
\bottomrule
\end{tabular}
}
\end{threeparttable}
\end{table}

\begin{table}[ht]
\centering
\caption{Evaluation in the Glucose environment across training-time correlation levels $r$ for ORPO and Max–Min. $\lambda$ denotes ORPO’s coefficient, with $\lambda=\sigma_{R_{\text{proxy}}}\sqrt{1-r^2}$. We use $\sigma_{R_{\text{proxy}}}=0.05$ in this environment, consistent with the ORPO setting. \textbf{Worst} refers to the expected worst-case reward computed under the training $r$ while excluding those unseen state-action pairs.} 
\label{tab:r_glucose}
\begin{threeparttable}
\resizebox{0.8\textwidth}{!}{
\begin{tabular}{c r r r r r }
\toprule
\multirow{2}{*}{$r$} & \multicolumn{3}{c}{\textbf{ORPO}} & \multicolumn{2}{c}{\textbf{Max–Min}} \\
\cmidrule(lr){2-4}\cmidrule(l){5-6}
& \multicolumn{1}{c}{$\lambda$} & \multicolumn{1}{c}{True($\times 10^3$)} &\multicolumn{1}{c}{Worst}& \multicolumn{1}{c}{True($\times 10^3$)}
& \multicolumn{1}{c}{Worst} \\
\midrule
0.1 & 0.050 & -90.2$\pm$0.7 & -350.94$\pm$0.37 & -169.3$\pm$0.6 & -317.97$\pm$0.12 \\
0.2 &0.049  &-88.1$\pm$0.8 & -199.26$\pm$0.59 &-150.2$\pm$0.6 &-304.15$\pm$0.34  \\
0.3 & 0.048 &-79.1$\pm$0.6 &-225.71$\pm$0.45  & -118.3$\pm$0.4 &-139.47$\pm$0.46 \\
0.4 & 0.046 &-72.2$\pm$0.4 & -206.40$\pm$0.27 & -113.5$\pm$0.8 & -123.11$\pm$0.32  \\
0.5 & 0.043 & -94.4$\pm$0.4 & -215.00$\pm$0.27 & -125.1$\pm$0.7 &-126.41$\pm$0.43 \\
0.6 & 0.040& -68.0$\pm$0.9 &-266.50$\pm$0.47 &-95.9$\pm$0.8 &-84.67$\pm$0.17  \\
0.7 & 0.035& -71.6$\pm$0.5 &-314.48$\pm$0.23 &-51.7$\pm$0.8 &-18.84$\pm$0.43  \\
0.8 &0.030 & -53.5$\pm$0.6 &-227.79$\pm$0.28 &-33.3$\pm$0.4 &-11.25$\pm$0.27 \\
0.9 & 0.021 & -50.9$\pm$0.5 & -255.07$\pm$0.18 & -48.2$\pm$0.5 &-1.71$\pm$0.25  \\
\bottomrule
\end{tabular}
}
\end{threeparttable}
\end{table}

\section{How to choose $r$ in practice?}
\label{sec:choose_r}

When $r$ is unknown, both our method and ORPO lack a principled mechanism for selecting an appropriate value. Besides the simple heuristics derived from our experiments as discussed in Appendix~\ref{sec:different_r_training}, we outline two potential approaches to this important problem below.

\paragraph{Statistical inference of $r$.} If we have access to the true reward on a subset of state-action pairs, or if such labels can be acquired through active learning, we can estimate $r$ using the definition:
\begin{equation}
\label{eq:r_correlated_appendix}
\mathbb{E}_{\mu_{\pi_{\text{ref}}}} \left[
\left( \frac{R_{\text{proxy}} - J(\pi_{\text{ref}}, R_{\text{proxy}})}{\sigma_{R_{\text{proxy}}}} \right)
\left( \frac{R_{\text{true}} - J(\pi_{\text{ref}}, R_{\text{true}})}{\sigma_{R_{\text{true}}}} \right)
\right] = r,
\end{equation}

In fact, Equation~\ref{eq:r_correlated_appendix} defines the Pearson correlation coefficient $r$ between the true reward $R_{\text{true}}$ and the proxy reward $R_{\text{proxy}}$ under the occupancy measure $\mu_{\pi_{\text{ref}}}$. Given a batch of $n$ state-action pairs $\{(s_i, a_i)\}_{i=1}^n$ sampled from $\pi_{\text{ref}}$ for which we have both $R_{\text{true}}^{(i)}$ and $R_{\text{proxy}}^{(i)}$, we can estimate this correlation using the sample correlation coefficient:

\[
\hat{r} = \frac{\sum_{i=1}^n (R_{\text{true}}^{(i)} - \bar{R}_{\text{true}})(R_{\text{proxy}}^{(i)} - \bar{R}_{\text{proxy}})}{\sqrt{\sum_{i=1}^n (R_{\text{true}}^{(i)} - \bar{R}_{\text{true}})^2} \cdot \sqrt{\sum_{i=1}^n (R_{\text{proxy}}^{(i)} - \bar{R}_{\text{proxy}})^2}}
\]

We can then use $\textbf{Fisher's z-transformation}$ to compute the confidence intervals for $r$. After getting this bounded range, we can plug this bound into our framework to define a tighter reward uncertainty set. For example, we can use $r_{\text{lower}}$ for more pessimistic robustness. Or we can redefine the correlation constraint in Equation~\ref{eq:r_correlated_appendix}  to be bounded by both  $r_{\text{lower}}$ and $r_{\text{upper}}$. The optimal solution under this new constraint can be similarly obtained using the approach in the paper.

\paragraph{A min-max regret approach.} A more principled approach to addressing the uncertainty in $r$ may come from a regret-based perspective. Let $J_r(\pi)$ denote the worst-case return for a given policy $\pi$ under a specific correlation level $r$, i.e., $J_r(\pi) = \min_{R \in R_{\text{corr}}(r)} J(\pi, R)$. The regret can then be defined as $\text{Reg}(\pi,r) = \max_{\pi^*} J_r( \pi^*)-J_r(\pi) $, which quantifies the performance gap between the optimal policy under $r$ and the current policy. With this formulation, a robust objective can be expressed as $\min_\pi \max_r \text{Reg}(\pi, r)$, aiming to find a policy that minimizes the worst-case regret across all possible values of $r$. This framework enables us to train policies that are robust to uncertainty in the correlation parameter $r$. We think this is a promising future direction, especially for cases where $r$ may be misspecified during training.  As studied in~\citep{sadek2025mitigating}, minimax-regret may provide strong robustness guarantees under distribution shifts for $r$. In such settings, methods like Prioritized Level Replay~\citep{jiang2021prioritized} and recent progress in~\citep{monette2025optimisation} could be adapted to solve the problem by sampling multiple $r$ and solving Equation~\ref{eq:maxmin_objective_appendix} in our paper. We should note that the reason these frameworks are potentially applicable is that our formulation admits a closed-form solution for the inner minimization. However, the main challenge lies in estimiating the occupancy measure. An interesting direction for future work is to investigate whether policy gradients can be approximated without explicitly occupancy estimation.

\end{document}